\newcommand{\R}{\mathbb{R}} 
\newcommand{\Sam}{\mathbb{S}}
\newcommand{\fs}{f^\star} 
\newcommand{\xs}{x^\star} 
\newcommand{\cC}{{\cal C}}
\newcommand{\cD}{{\cal D}}
\newcommand{\cO}{{\cal O}}
\newcommand{\cQ}{{\cal Q}}
\newcommand{\mG}{{\bf G}}
\newcommand{\mP}{{\bf P}}
\newcommand{\eqdef}{\coloneqq}
\newcommand{\dotprod}[1]{\left< #1\right>} 
\newcommand{\norm}[1]{\left\lVert #1 \right\rVert}      
\DeclareMathOperator{\Prob}{Prob}
\newcommand{\Exp}[1]{{{\rm E}}\left[#1\right] }    
\newcommand{\E}[1]{{\rm E}\left[#1\right] }
\newcommand{\EE}[2]{{\rm E}_{#1}\left[#2\right] }
\theoremstyle{plain}
\newtheorem{theorem}{Theorem}
\newtheorem{lemma}[theorem]{Lemma}
\newtheorem{definition}[theorem]{Definition}
\newtheorem{assumption}[theorem]{Assumption}
\newtheorem{remark}[theorem]{Remark}
\newcommand{\U}{\mathbf{U}_i^k}
\providecommand{\trace}[1]{{\rm Trace}\left( #1\right)}
\title{Optimal Client Sampling for Federated Learning}
\author{\name Wenlin Chen \email wc337@cam.ac.uk \\
       \addr Department of Engineering\\
       University of Cambridge\\
       Cambridge, CB2 1PZ, UK\\
       and\\
       Department of Empirical Inference\\
       Max Planck Institute for Intelligent Systems\\
       T{\"u}bingen, 72076, Germany
       \ANDD
       \name Samuel Horv\'{a}th \email samuel.horvath@mbzuai.ac.ae \\
       \addr Department of Machine Learning\\
       Mohamed bin Zayed University of Artificial Intelligence\\
       Masdar City, Abu Dhabi, UAE
       \ANDD
       \name Peter Richt\'{a}rik \email peter.richtarik@kaust.edu.sa \\
       \addr Computer, Electrical and Mathematical Science and Engineering Division\\
       King Abdullah University of Science and Technology\\
       Thuwal, 23955-6900, Saudi Arabia}
\begin{document}

\maketitle

\begin{abstract}
It is well understood that client-master communication can be a primary bottleneck in federated learning (FL). In this work, we address this issue with a novel client subsampling scheme, where we restrict the number of clients  allowed to communicate their updates back to the master node. In each communication round, all participating clients compute their updates, but only the ones with ``important'' updates communicate back to the master. We show that importance can be measured using only the norm of the update and give a formula for optimal client participation. This formula minimizes the distance between the full update, where all clients participate, and our limited update, where the number of participating clients is restricted. In addition, we provide a simple algorithm that approximates the optimal formula for client participation, which allows for secure aggregation and stateless clients, and thus does not compromise client privacy. We show both theoretically and empirically that for Distributed {\tt SGD} ({\tt DSGD}) and Federated Averaging ({\tt FedAvg}), the performance of our approach can be close to full participation and superior to the baseline where participating clients are sampled uniformly. Moreover, our approach is orthogonal to and compatible with existing methods for reducing communication overhead, such as local methods and communication compression methods. 
\end{abstract}

\section{Introduction}

We consider the standard cross-device federated learning (FL) setting~\citep{kairouz2019advances}, where the objective is of the form
\begin{align}
 \min \limits_{x\in\mathbb{R}^d} \left[ f(x)\eqdef \sum \limits_{i=1}^n w_i f_i(x)\right] \,, \label{eq:probR}
\end{align}
where  $x\in \R^d$ represents the parameters of a statistical model we aim to find, $n$ is the total number of clients, each $f_i \colon \R^d \to \R$ is a continuously differentiable local loss function which depends on the data distribution $\cD_i$ owned by client $i$ via $f_i(x) = \EE{\xi \sim\cD_i}{ f(x, \xi)}$, and  $w_i\geq 0$ are client weights such that $\sum_{i=1}^n w_i=1$. We assume the classical FL setup in which a central master (server) orchestrates the training by securely aggregating updates~\citep{du2001secure, goryczka2015comprehensive, bonawitz2017practical, so2020turbo}, i.e., the master only has access to the sum of updates from clients without seeing the raw data.

\subsection{Motivation: Communication Bottleneck in Federated Learning}
It is well understood that communication cost can be a primary bottleneck in cross-device FL, since typical clients are mobile phones or different IoT devices that have limited bandwidth and availability for connection~\citep{5090858, huang2013depth}. Indeed, wireless links and other end-user internet connections typically operate at lower rates than intra-datacenter or inter-datacenter links and can be potentially expensive and unreliable. Moreover, the capacity of the aggregating master and other FL system considerations imposes direct or indirect constrains on the number of clients allowed to participate in each communication round. These considerations have led to significant interest in reducing the communication bandwidth of FL systems.

\textbf{Local Methods.} One of the most popular strategies is to reduce the frequency of communication and put more emphasis on computation. This is usually achieved by asking the devices to perform multiple local steps before communicating their updates. A prototype method in this category is the Federated Averaging ({\tt FedAvg}) algorithm~\citep{mcmahan2017communication}, an adaption of local-update to parallel SGD, where each client runs some number of SGD steps locally before local updates are averaged to form the global update for the global model on the master. The original work was a heuristic, offering no theoretical guarantees, which motivated the community to try to understand the method and various existing and new variants theoretically~\citep{stich2018local, lin2018don, karimireddy2019scaffold, stich2019error, khaled2020tighter, Hanzely2020}. 

\textbf{Communication Compression Methods.} Another popular approach is to reduce the size of the object (typically gradients) communicated from clients to the master. This approach is referred to as gradient/communication {\em  compression}. In this approach, instead of transmitting the full-dimensional gradient/update vector $g \in \R^d$, one transmits a compressed vector $\cC(g)$, where $\cC: \R^d \rightarrow \R^d$ is a (possibly random) operator chosen such that $\cC(g)$ can be represented using fewer bits, for instance by using limited bit representation (quantization) or by enforcing sparsity (sparsification). A particularly popular class of quantization operators is based on random dithering~\citep{goodall1951television, roberts1962picture}; see \citet{alistarh2016qsgd, wen2017terngrad, zhang2017zipml,  ramezani2019nuqsgd}. A new variant of random dithering developed in \cite{horvath2019natural} offers an exponential improvement on standard dithering. Sparse vectors can be obtained by random sparsification techniques that randomly mask the input vectors and preserve a constant number of coordinates~\citep{wangni2018gradient, konevcny2018randomized,stich2018sparsified,mishchenko201999, vogels2019powersgd}. There is also a line of work~\citep{horvath2019natural,basu2019qsparse} which propose to combine sparsification and quantization to obtain a more aggressive combined effect.

\textcolor{black}{\textbf{Client Sampling/Selection Methods.}} In the situation where partial participation is desired and a budget on the number of participating clients is applied,  {\em a careful selection of the participating clients can lead to better communication complexity, and hence faster training}. In other words, some clients will have ``more informative'' updates than others in any given communication round, and thus the training procedure will benefit from capitalizing on this fact by ignoring some of the worthless updates (see Figure \ref{fig:opt-sampling}). We refer the readers to Section \ref{sec:fl client sampling} for discussions on existing client sampling methods in FL and their limitations.

\begin{figure}
    \centering
    \includegraphics[scale=0.7]{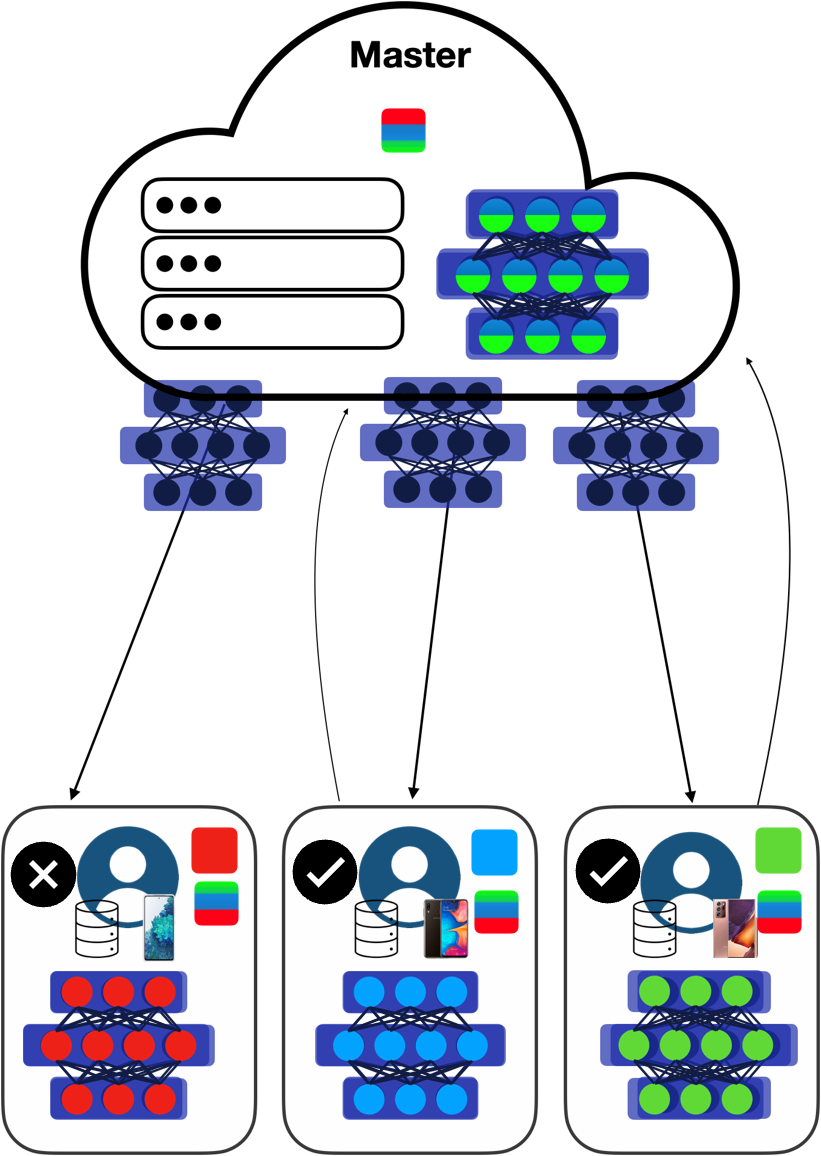}
    \caption{Optimal client sampling: in each communication round, all participating clients compute their updates, but only the ones with ``important'' updates communicate back to the master.}
    \label{fig:opt-sampling}
\end{figure}

\subsection{Contributions} 

\textcolor{black}{
We address the communication bandwidth issues appearing
in FL by designing a {\em principled optimal client sampling scheme with client privacy and system practicality in mind}. We show that the ideas presented in the previous works on efficient sampling~\citep{horvath2018nonconvex} and sparsification~\citep{wang2018atomo, wangni2018gradient} can be adapted to be compatible with FL and can be used to construct
a principled {\em optimal client sampling scheme} which is capable of identifying the most informative clients in any given communication round. Our contributions can be summarized as follows:
\begin{itemize}
\item Inspired by \citet{horvath2018nonconvex}, we propose an {\em adaptive partial participation strategy for reducing communication in FL}. This strategy relies on a careful selection of clients that are allowed to communicate their updates back to the master in any given communication round, which then translates to a reduction in the number of communicated bits. We obtain this strategy by properly applying the sampling procedure from \citet{horvath2018nonconvex} to the FL framework.
\item  Specifically, building upon the importance sampling results in \citet[Lemma 1]{horvath2018nonconvex}, we obtain an {\em optimal} adaptive client sampling procedure in the sense that it minimizes the variance of the master update for any budget $m$ on the number of participating clients, which generalizes the theoretical results in \cite{zhao2015stochastic} that only applies to $m=1$.
\item Inspired by the greedy algorithm from~\citet[Algorithm 3]{wangni2018gradient} which was originally designed for gradient sparsification, we obtain an approximation to our optimal sampling strategy which only requires aggregation, fulfilling two core privacy requirements of FL: to our knowledge, our method is the first principled importance client sampling strategy that is compatible with both {\em secure aggregation} and {\em stateless clients}.
\item Our optimal sampling method is orthogonal to and hence compatible with existing approaches to communication reduction such as communication compression and/or local updates (cf. Section~\ref{sec: FedAvg}).
\item We provide convergence guarantees for our approach with Distributed {\tt SGD} ({\tt DSGD}) and Federated Averaging ({\tt FedAvg}), relaxing a number of strong assumptions employed in prior works. We show both theoretically and empirically that the performance of our approach is superior to uniform sampling and can be close to full participation.
\item We show both theoretically and empirically that our approach allows for {\em larger learning rates} than the baseline which performs uniform client sampling, which results in {\em better communication complexity} and hence {\em faster convergence}.
\end{itemize}
}

\subsection{Organization of the Paper}
Section \ref{sec:main} describes the proposed optimal client sampling strategy for reducing the communication bottleneck in federated learning. Section \ref{sec:convergence} provides convergence analyses for {\tt DSGD} and {\tt FedAvg} with our optimal client sampling scheme in both convex and non-convex settings. Section \ref{sec:related-work} reviews prior works that are closely or broadly related to our proposed method. Section \ref{sec:experiments} empirically evaluates our optimal client sampling method on standard federated datasets. Section \ref{sec:conclusion} summarizes the paper and lists some directions for future work.


\section{Smart Client Sampling for Reducing Communication}
\label{sec:main}


\textcolor{black}{
This section describes the proposed optimal client sampling strategy for reducing the communication bottleneck in federated learning.  
}

Before proceeding with our theory, we provide an intuition by discussing the problem setting and introducing the {\em arbitrary sampling} paradigm. 
In FL, each client $i$ participating in round $k$ computes an update vector $\U\in\R^d$. For simplicity and ease of exposition, we assume that all clients $i\in [n]\eqdef \{1,2,\dots,n\}$ are available in each round\footnote{This is not a limiting factor, as all presented theory can be easily extended to the case of partial participation with an arbitrary proper sampling distribution. \textcolor{black}{See Appendix \ref{sec:partial_particpation} for a proof sketch.}}. In our framework, only a subset of clients communicates their updates to the master node in each communication round in order to reduce the number of transmitted bits. 

In order to provide an analysis in this framework, we consider a general partial participation framework~\citep{horvath2020better}, where we assume that the subset of participating clients is determined by an arbitrary  random set-valued mapping $\Sam$ (i.e., a ``sampling'') with values in $2^{[n]}$. A sampling $\Sam$ is uniquely defined by assigning probabilities to all $2^n$ subsets of $[n]$. With each sampling $\Sam$ we associate a {\em probability matrix} $\mP \in \R^{n\times n}$  defined by $\mP_{ij} \eqdef \Prob(\{i,j\}\subseteq \Sam)$. The {\em probability vector} associated with $\Sam$ is the vector composed of the diagonal entries of $\mP$: $p = (p_1,\dots,p_n)\in \R^n$, where $p_i\eqdef \Prob(i\in \Sam)$. We say that $\Sam$ is {\em proper} if $p_i>0$ for all $i$. It is easy to show that $b \eqdef \Exp{|\Sam|} = \trace{\mP} = \sum_{i=1}^n p_i$, and hence  $b$ can be seen as the expected number of clients  participating in each communication round. Given parameters $p_1,\dots,p_n \in [0,1]$, consider a random set $\Sam\subseteq [n]$ generated as follows: for each $i\in [n]$, we  include $i$ in $\Sam$ with probability $p_i$. This is called {\em independent sampling}, since the event $i\in \Sam$ is independent of  $j\in \Sam$ for any $i\neq j$.

While our client sampling strategy can be adapted to essentially any underlying learning method, we give details here for {\tt DSGD} as an illustrative example, where the master update in each communication round is of the form
\begin{equation}
\label{eq:SGD_step}
x^{k+1} = x^k - \eta^k \mG^k \quad\text{with}\quad \mG^k \eqdef \sum \limits_{i \in S^k} \frac{w_i}{p_i^k} \U,
\end{equation}
 where $S^k \sim \Sam^k$ and $\U = g_i^k $ is an unbiased estimator of $\nabla f_i(x^k)$. The scaling factor $\frac{1}{p_i^k}$ is necessary in order to obtain an unbiased  estimator of the true update, i.e.,
$\EE{S^k}{\mG^k} =  \sum_{i = 1}^n w_i \U$. 

\subsection{Optimal Client Sampling} 
A simple observation is that the variance of our gradient estimator $\mG^k$ can be decomposed into
\begin{equation}
\label{eq:dif}
\begin{split}
\E{\norm{\mG^k- \nabla f(x^k)}^2} = \E{\norm{\mG^k - \sum \limits_{i=1}^n w_i \U}^2} + \E{\norm{\sum \limits_{i=1}^n w_i \U - \nabla f(x^k)}^2},
\end{split}
\end{equation}
where the second term on the right-hand side is independent of the sampling procedure, and the first term is zero if every client sends its update (i.e., if $p_i^k=1$ for all $i$). In order to provide meaningful results, we restrict the expected number of clients to communicate in each round by bounding $b^k \eqdef \sum_{i=1}^n p_i^k$ by some positive integer $m \leq n$.  This raises the following question: {\em What is the sampling procedure that  minimizes \eqref{eq:dif} for any given $m$?} 

\textcolor{black}{
To answer this question, we connect Equation \eqref{eq:dif} to previous works on importance sampling~\citep{horvath2018nonconvex} and gradient sparsification~\citep{wangni2018gradient, wang2018atomo}\footnote{\citet{wangni2018gradient} consider a slightly different problem, where they minimize the communication budget with constraints on the variance.}. Despite difference in motivation, these works solve up to a scale the equivalent mathematical problem, based on which we answer the aforementioned question by the following technical lemma (see Appendix \ref{appendix:lemma proof} for a proof):
}

\begin{lemma}(Generalization of~\citet[Lemma 1]{horvath2018nonconvex})
\label{LEM:UPPERV}
Let $\zeta_1,\zeta_2,\dots,\zeta_n$ be vectors in $\mathbb{R}^d$ and $w_1,w_2,\dots,w_n$ be non-negative real numbers such that $\sum_{i=1}^n w_i=1$. Define $\Tilde{\zeta} \eqdef \sum_{i=1}^n w_i \zeta_i$. Let $S$ be a proper sampling. If $v\in\mathbb{R}^n$ is  such that
        \begin{equation}\label{eq:ESO}
            \mP -pp^\top \preceq {\rm \bf Diag}(p_1v_1,p_2v_2,\dots,p_nv_n),
        \end{equation}
        then
        \begin{equation}\label{eq:key_inequality}
            \E{ \norm{ \sum \limits_{i\in S} \frac{w_i\zeta_i}{p_i} - \Tilde{\zeta} }^2 } \leq \sum \limits_{i=1}^n w_i^2\frac{v_i}{p_i} \norm{\zeta_i}^2,
        \end{equation}
        where the expectation is taken over $S$. Whenever \eqref{eq:ESO} holds, it must be the case that $v_i \geq 1-p_i$. 
\end{lemma}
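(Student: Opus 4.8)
The plan is to run the standard second-moment computation for a scaled importance-sampling estimator, reduce the resulting quadratic form to the matrix $\mP-pp^\top$, invoke the hypothesis \eqref{eq:ESO} coordinate by coordinate to get \eqref{eq:key_inequality}, and then read off the side condition $v_i\ge 1-p_i$ by testing \eqref{eq:ESO} against standard basis vectors.

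First I would introduce the Bernoulli indicators $\xi_i\eqdef\mathbbm{1}[i\in S]$, so that $\E{\xi_i}=p_i$ and $\E{\xi_i\xi_j}=\mP_{ij}$ by the definition of the probability matrix (in particular $\mP_{ii}=p_i$, since $\{i,i\}=\{i\}$). Because $\Tilde{\zeta}=\sum_{i=1}^n w_i\zeta_i$, a purely algebraic rearrangement gives
\[
\sum_{i\in S}\frac{w_i\zeta_i}{p_i}-\Tilde{\zeta}=\sum_{i=1}^n\left(\frac{\xi_i}{p_i}-1\right)w_i\zeta_i .
\]
Expanding the squared norm, taking expectation, and using $\E{\left(\tfrac{\xi_i}{p_i}-1\right)\!\left(\tfrac{\xi_j}{p_j}-1\right)}=\frac{\mP_{ij}-p_ip_j}{p_ip_j}$, I obtain, with the shorthand $a_i\eqdef\frac{w_i\zeta_i}{p_i}\in\R^d$,
\[
\E{\norm{\sum_{i\in S}\frac{w_i\zeta_i}{p_i}-\Tilde{\zeta}}^2}=\sum_{i,j=1}^n(\mP_{ij}-p_ip_j)\,\ve{a_i}{a_j}.
\]

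Second, I would pass from $\R^d$-valued vectors to scalars: writing $a_i=(a_i^{(1)},\dots,a_i^{(d)})$ and collecting the $\ell$-th coordinates into $a^{(\ell)}\eqdef(a_1^{(\ell)},\dots,a_n^{(\ell)})^\top\in\R^n$, the right-hand side above equals $\sum_{\ell=1}^d(a^{(\ell)})^\top(\mP-pp^\top)a^{(\ell)}$. Applying the Loewner inequality \eqref{eq:ESO} to each $a^{(\ell)}$ and summing over $\ell$ bounds this by $\sum_{\ell=1}^d\sum_{i=1}^n p_iv_i(a_i^{(\ell)})^2=\sum_{i=1}^n p_iv_i\norm{a_i}^2=\sum_{i=1}^n\frac{w_i^2 v_i}{p_i}\norm{\zeta_i}^2$, which is exactly \eqref{eq:key_inequality}. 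For the closing assertion, I would evaluate \eqref{eq:ESO} on the standard basis vector $e_i$, which yields the scalar inequality $\mP_{ii}-p_i^2\le p_iv_i$; since $\mP_{ii}=p_i$ and $p_i>0$ (properness of $S$ is used precisely here), dividing by $p_i$ gives $v_i\ge 1-p_i$.

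The only step needing genuine care is the coordinate-wise reduction: one must observe that $\sum_{i,j}\mathbf{M}_{ij}\ve{a_i}{a_j}=\sum_{\ell=1}^d(a^{(\ell)})^\top\mathbf{M}\,a^{(\ell)}$ for any symmetric $\mathbf{M}\in\R^{n\times n}$, so that a single semidefinite hypothesis on $\mathbf{M}=\mP-pp^\top$ can be applied $d$ times. Everything else — the covariance identity and the test-vector argument — is routine; note in particular that the bound uses only $p_i>0$ and never $p_i\le 1$.
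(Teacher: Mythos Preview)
Your proof is correct and follows essentially the same route as the paper's: both write the variance as $\sum_{i,j}(\mP_{ij}-p_ip_j)\langle a_i,a_j\rangle$ with $a_i=w_i\zeta_i/p_i$, bound it via the Loewner hypothesis \eqref{eq:ESO}, and read off $v_i\ge 1-p_i$ from the diagonal entries. The only cosmetic difference is that the paper packages the bounding step as $e^\top\!\big((\mP-pp^\top)\circ\mA^\top\mA\big)e\le e^\top\!\big(\Diag{p\circ v}\circ\mA^\top\mA\big)e$ via the Schur product, whereas you unroll the same identity into $d$ scalar quadratic forms $(a^{(\ell)})^\top(\mP-pp^\top)a^{(\ell)}$ --- these are literally the same sum, so the arguments coincide.
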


\begin{figure}
\centering
\begin{algorithm}[H]
\begin{algorithmic}[1]
    \STATE {\bfseries Input:} expected batch size $m$
	\STATE each client $i$ computes a local update $\U$ (in parallel)
	\STATE each client $i$ sends the norm of its update $u_i^k = w_i\norm{\U}$ to the master (in parallel)
 	\STATE master computes optimal probabilities $p_i^k$ using equation \eqref{eq:unique_sol_main}
 	\STATE master broadcasts $p_i^k$ to all clients
	\STATE each client $i$ sends its update $\frac{w_i}{p_i^k}\U$ to the master with probability $p_i^k$ (in parallel)
\end{algorithmic}
\caption{Optimal Client Sampling ({\tt OCS}).}
\label{alg:OCS}
\end{algorithm}
\end{figure}

It turns out that given probabilities  $\{p_i\}$, among all samplings $S$ satisfying $p_i=\Prob(i\in S)$, the independent sampling (i.e., $p_{ij}=\Prob(i, j \in S) = \Prob(i \in S) \Prob(j \in S) = p_i p_j$)  minimizes the left-hand side of \eqref{eq:key_inequality}. This is due to two nice properties: a) any independent sampling admits the optimal choice of $v$, i.e., $v_i=1-p_i$ for all $i$, and b) \eqref{eq:key_inequality} holds as equality for independent sampling. In the context of our method, these properties can be written as
\begin{align}
&\E{\norm{\mG^k - \sum \limits_{i=1}^n w_i \U}^2} = \E{\sum \limits_{i=1}^n w_i^2 \frac{1-p_i^k}{p_i^k} \norm{\U}^2}.
\label{varianceToBeOptimized}
\end{align}
It now only remains to find the parameters $\{p_i^k\}$ defining the optimal independent sampling, i.e., one that  minimizes \eqref{varianceToBeOptimized} subject to  the constraints $0 \leq p_i^k \leq 1$ and $b^k \eqdef \sum_{i=1}^n p_i^k \leq m$. It turns out that this problem  has the following closed-form solution (see Appendix \ref{appendix:improvement factor} for a proof):
\begin{equation}
\label{eq:unique_sol_main}
p_i^k = 
\begin{cases}
       (m +  l - n)\frac{\norm{\tilde{U}_i^k}}{\sum_{j=1}^l \norm{\tilde{U}_{(j)}^k}}, &\quad\text{if } i \notin A^k  \\
       1, &\quad\text{if } i \in A^k
\end{cases},
\end{equation} 
where $\tilde{U}_i^k \eqdef w_i \U$, and $\norm{\tilde{U}_{(j)}^k}$ is the $j$-th smallest value in $\left\{\norm{\tilde{U}_i^k}\right\}_{i=1}^n$, $l$ is the largest integer  for which  $0<m +  l - n \leq \nicefrac{\sum_{i =1}^l \norm{\tilde{U}_{(i)}^k}}{\norm{\tilde{U}_{(l)}^k}}$ (note that this inequality always holds for $l= n-m+1$), and $A^k$ contains indices $i$ such that $\norm{\tilde{U}_i^k} \geq \norm{\tilde{U}_{(l+1)}^k}$. We summarize this procedure in Algorithm~\ref{alg:OCS}. Intuitively, our method can be thought of as uniform sampling with $\Tilde{m}\in [m, n]$ effective sampled clients, while only $m$ clients are actually sampled in expectation, which indicates that it cannot be worse than uniform sampling and can be as good as full participation. The actual value of $\Tilde{m}$ depends on the updates.  

\begin{remark}[Optimality]
Optimizing the left-hand side of~\eqref{eq:key_inequality} does not guarantee the proposed sampling to be optimal with respect to the right-hand side of~\eqref{eq:key_inequality} in the general case. For this to hold, our sampling needs to be independent, which is not a very restrictive condition, especially considering that enforcing independent sampling across clients accommodates the privacy requirements of FL. In addition, since \eqref{eq:key_inequality} is tight, our sampling is optimal if one is allowed to communicate only norms (i.e., one float per client) as extra information. We stress that requiring optimality with respect to the left-hand side of~\eqref{eq:key_inequality} in the full general case is not practical, as it cannot be obtained without revealing, i.e., communicating, all clients' full updates to the master.
\end{remark}

\begin{figure}
\centering
\begin{algorithm}[H]
\begin{algorithmic}[1]
 \STATE {\bfseries Input:} expected batch size $m$, maximum number of iteration $j_{\max}$
	\STATE each client $i$ computes an update $\U$ (in parallel)
    \STATE each client $i$ sends the norm of its update $u_i^k= w_i \norm{\U}$ to the master (in parallel)
 	\STATE master aggregates $u^k =  \sum_{i=1}^n u_i^k$
 	\STATE master broadcasts $u^k$ to all clients
 	\STATE each client $i$ computes $p_i^k = \min\{\frac{m u_i^k}{u^k}, 1\}$ (in parallel)
 	\FOR{$j=1,\cdots,j_{max}$}
		\STATE each client $i$ sends $t_i^k = (1, p_i^k)$ to the master if $p_i^k < 1$; else sends $t_i^k =(0, 0)$ (in parallel)
		\STATE master aggregates $ (I^k, P^k)=  \sum_{i=1}^n t_i^k$
		\STATE master computes $C^k = \frac{m - n + I^k}{P^k}$
		\STATE master broadcasts $C^k$ to all clients
		\STATE each client $i$ recalibrates $p_i^k = \min\{C^k p_i^k, 1\}$ if $p_i^k < 1$ (in parallel)
		\IF{$C^k \leq 1$}
		    \STATE break
		\ENDIF
	\ENDFOR
	\STATE each clients $i$ sends its update $\frac{w_i}{p_i^k}\U$ to master with probability $p_i^k$ (in parallel)
\end{algorithmic}
\caption{Approximate Optimal Client Sampling  ({\tt AOCS}).}
\label{alg:AOCS}
\end{algorithm}
\end{figure}

\subsection{\textcolor{black}{Ensuring Compatibility with Secure Aggregation and Stateless Clients}}
\textcolor{black}{
In the case $l = n$, the optimal probabilities $p_i^k = \nicefrac{m\norm{\tilde{U}_i^k}}{\sum_{j=1}^n \norm{\tilde{U}_{j}^k}}$ can be computed easily: the master aggregates the norm of each update and then sends the sum back to the clients. However, if $l < n$, in order to compute optimal probabilities, the master would need to identify the norm of every update and perform partial sorting, which can be computationally expensive and also violates the client privacy requirements in FL, i.e., one cannot use the secure aggregation protocol where the master only sees the sum of the updates. Therefore, we create an algorithm for approximately solving this problem, which only requires to perform aggregation at the master node without compromising the privacy of any client. The construction of this algorithm is built upon the greedy algorithm from~\citet[Algorithm 3]{wangni2018gradient} which was originally designed for gradient sparsification but solves up to a scale an equivalent mathematical problem. We first set $\tilde{p}_i^k = \nicefrac{m\norm{\tilde{U}_i^k}}{\sum_{j=1}^n \norm{\tilde{U}_{j}^k}}$ and $p_i^k = \min\{\tilde{p}_i^k, 1\}$. In the ideal situation where every $\tilde{p}_i^k$ equals the optimal solution \eqref{eq:unique_sol_main}, this would be sufficient. However, due to the truncation operation, the expected number of sampled clients $b^k = \sum_{i=1}^n p_i^k \leq \sum_{i=1}^n \nicefrac{m\norm{\tilde{U}_i^k}}{\sum_{j=1}^n \norm{\tilde{U}_{j}^k}} = m$ can be strictly less than $m$ if $\tilde{p}_i^k > 1$ holds true for at least one $i$.  Hence, we employ an iterative procedure to fix this gap by rescaling the probabilities which are smaller than $1$, as summarized in Algorithm~\ref{alg:AOCS}. This algorithm is much easier to implement and computationally more efficient on parallel computing architectures. In addition, it only requires a secure aggregation procedure on the master, which is essential in privacy preserving FL, and thus it is compatible with existing FL software and hardware. 
}

\begin{remark}[Extra communications in Algorithm~\ref{alg:AOCS}]
    We acknowledge that Algorithm~\ref{alg:AOCS} brings extra communication costs, as it requires all clients to send the norms of their updates $u_i^k$'s and probabilities $p_i^k$'s in each round. However, since these are single floats, this only costs $\cO(j_{\max})$ extra floats for each client. Picking $j_{\max} = \cO(1)$, this is negligible for large models of size $d$. \textcolor{black}{We also acknowledge that engaging in multiple synchronous rounds of communication (as in Algorithm~\ref{alg:AOCS}) can be a bottleneck \citep{huba2022papaya}. This is not an issue in our work, as we focus on reducing the total communication cost. However, Algorithm~\ref{alg:AOCS} may be less useful under other setups or metrics.}
\end{remark}

\begin{remark}[Fairness]
    Based on our sampling strategy, it might be tempting to assume that the obtained solution could exhibit fairness issues. In our convergence analyses below, we show that this is not the case, as our proposed methods converge to the optimal solution of the original problem. Hence, as long as the original objective has no inherent issue with fairness, our method does not exhibit any fairness issues. Besides, our algorithm can be used in conjunction with other ``more fair'' objectives, e.g., Tilted ERM~\citep{li2021tilted}, if needed.
\end{remark}

\section{Convergence Guarantees}\label{sec:convergence}
This section provides convergence analyses for {\tt DSGD} and {\tt FedAvg} with our optimal client sampling scheme in both convex and non-convex settings. We compare the convergence results of our scheme with those of full participation and independent uniform sampling with sample size $m$. We match the forms of our convergence bounds to those of the existing bounds in the literature to make them directly comparable. We do not compare the sample complexities of these methods, as such comparisons would be difficult due to their dependence on the actual updates which are unknown in advance and do not follow a specific distribution in general.

We use standard assumptions~\citep{karimi2016linear}, assuming throughout that $f$ has a unique minimizer $x^\star$ with $ \fs = f(\xs) > -\infty$ and $f_i$'s are $L$-smooth, i.e., $f_i$'s have $L$-Lipschitz continuous gradients. We first define convex functions and $L$-smooth functions.

\begin{definition}[Convexity]
    \label{ass:1}
    $f:\R^d\to\R$ is $\mu$-strongly convex with $\mu> 0$ if
        \begin{equation}
        \label{eq:def_strongly_convex}
        f(y) \geq f(x) + \dotprod{\nabla f(x), y - x} + \frac{\mu}{2}\norm{y - x}^2,\quad\forall x,y \in \R^d.
        \end{equation}
    $f:\R^d\to\R$ is convex if it satisfies \eqref{eq:def_strongly_convex} with $\mu = 0$.
    \end{definition}
    
    \begin{definition}[Smoothness]
    $f:\R^d\to\R$ is $L$-smooth if
        \begin{equation}
        \label{eq:smooth}
        \norm{\nabla f(x) - \nabla f(y)}\leq L\norm{x - y},\quad\forall x,y \in \R^d.
        \end{equation}
    \end{definition}
    
We now state standard assumptions of the gradient oracles for {\tt DSGD} and {\tt FedAvg}.
\begin{assumption}[Gradient oracle for {\tt DSGD}]
\label{ass:2} The stochastic gradient estimator $g_i^k=\nabla f_i(x^k)+\xi_i^k$ of the local gradient $\nabla f_i(x^k)$, for each round $k$ and all $i=1,\dots,n$,  satisfies
\begin{equation}
\E{\xi_i^k } = 0
\end{equation}
and 
\begin{equation}
    \E{\norm{\xi_i^k}^2| x^k_i } \leq M\norm{\nabla f_i(x^k)}^2 + \sigma^2,\quad\text{for some $M\geq 0$}.
\end{equation}
This further implies  that $\E{\frac{1}{n}\sum_{i=1}^n g_i^k \;| \;x^k} = \nabla f(x^k)$.
\end{assumption}

\begin{assumption}[Gradient oracle for {\tt FedAvg}]
\label{ass:3} The stochastic gradient estimator $g_i(y_{i,r}^k)=\nabla f_i(y_{i,r}^k)+\xi_{i,r}^k$ of the local gradient $\nabla f_i(y_{i,r}^k)$, for each round $k$, each local step $r=0,\dots,R$ and all $i=1,\dots,n$,  satisfies
\begin{equation}
\E{\xi_{i,r}^k } = 0
\end{equation}
and 
\begin{equation}
    \E{\norm{\xi_{i,r}^k}^2| y_{i,r}^k } \leq M\norm{\nabla f_i(y_{i,r}^k)}^2 + \sigma^2,\quad\text{for some $M\geq 0$,}
\end{equation}
where $y_{i,0}^k=x^k$ and $y_{i,r}^k=y_{i,r-1}^k-\eta_l g_i(y_{i,r}^k)$, for $r=1,\cdots,R$.
\end{assumption}

For non-convex objectives, one can construct counter-examples that would diverge for both {\tt DSGD} and {\tt FedAvg} if the sampling variance is not bounded. Therefore, we need to employ the following standard assumption of local gradients for bounding the sampling variance\footnote{This assumption is not required for convex objectives, as one can show that the sampling variance is bounded using smoothness and convexity.}.
\begin{assumption}[Similarity among local gradients]\label{ass:local grad similarity}
    The gradients of local loss functions $f_i$ satisfy
    \begin{align}
        \sum_{i=1}^n w_i \norm{\nabla f_i(x)-\nabla f(x)}^2\leq \rho,\quad \text{for some $\rho\geq0$}.
    \end{align}
\end{assumption}
\begin{remark}[Interpretation of Assumption \ref{ass:local grad similarity}]
    Some works employ a more restrictive assumption which requires $\norm{\nabla f_i(x) - \nabla f(x)} \leq \rho,~\forall i$, from which Assumption \ref{ass:local grad similarity} can be derived, since $\sum_{i=1}^n w_i = 1$. Therefore, Assumption \ref{ass:local grad similarity} can be seen as an assumption on similarity among local gradients. Furthermore, this assumption does not require $w_i$'s to be lower-bounded, as clients with $w_i = 0$ will never be sampled and thus can be removed from the objective.
\end{remark}

We now define some important quantities for our convergence analyses.
\begin{definition}[The improvement factor]
We define the improvement factor of optimal client sampling over uniform sampling:
\begin{align}
\label{eq:alpha_k}
\alpha^k \eqdef 
\frac{
                \E{ \norm{ \sum_{i\in S^k}\frac{w_i}{p_i^k}\U - \sum_{i=1}^n w_i \U }^2 }
            }
            {
                \E{ \norm{ \sum_{i\in U^k}\frac{w_i}{p_i^U}\U - \sum_{i=1}^n w_i \U }^2 }
            },
\end{align}
where $S^k \sim \Sam^k$ with $p_i^k$ defined in \eqref{eq:unique_sol_main} and $U^k \sim \mathbb{U}$ is an independent uniform sampling with  $p_i^U = \nicefrac{m}{n}$. By construction, $0\leq\alpha^k\leq 1$, as $\Sam^k$ minimizes the variance term (see Appendix \ref{appendix:improvement factor} for a proof). Note that $\alpha^k$ can reach zero in the case where there are at most $m$ non-zero updates. If $\alpha^k = 0$,  our method performs as if all updates were communicated. In the worst-case $\alpha^k = 1$,  our method performs as if we picked $m$ updates uniformly at random, and one could not do better in theory due to the structure of the updates $\U$. The actual value of $\alpha^k$ will depend on the updates $\U$. We also define the relative improvement factor:
\begin{align}
    \gamma^k\eqdef\frac{m}{\alpha^k(n-m)+m}\in\left[\frac{m}{n},1\right],\quad k=0,\dots,K-1.
\end{align}
\end{definition}

\begin{definition}[Simplified notation]
For simplicity of notation, we define the following quantities which will be useful for our convergence analyses:
\begin{align}
 W\eqdef\max_{i \in [n]}\{w_i\},\quad \textcolor{black}{Z_i \eqdef f_i(x^\star) -  f_i^\star} ,\quad r^k \eqdef x^k - x^\star,
\end{align}
where $f_i^\star$ is the functional value of $f_i$ at its optimum, $Z_i$ represents the mismatch between the local and global minimizer, and $r^k$ captures the distance between the current point and the minimizer of $f$.
\end{definition}

We are now ready to proceed with our convergence analyses. In the following subsections, we provide convergence analyses of specific methods for solving the optimization problem \eqref{eq:probR}. The proofs of the theorems are deferred to Appendices \ref{appendix:proof_dsgd} and \ref{appendix:proof_fedavg}.

\subsection{Distributed {\tt SGD} ({\tt DSGD}) with Optimal Client Sampling}
This subsection presents convergence analyses for {\tt DSGD} \eqref{eq:SGD_step} with optimal client sampling in both convex and non-convex settings. 

 \begin{theorem}[{\tt DSGD}, strongly-convex]
 \label{THM:DSGD_MAIN}
           Let $f_i$ be $L$-smooth and convex for $i=1,\dots,n$. Let $f$ be $\mu$-strongly convex. Suppose that Assumption~\ref{ass:2} holds. Choose $\eta^k \in\left(0,\frac{\gamma^k}{(1 + W M)L}\right]$. Define
            \begin{align}
            \beta_1 &\eqdef \sum_{i=1}^n w_i^2 (2L(1+M)Z_i + \sigma^2), \\
            \beta_2 &\eqdef 2L\sum_{i=1}^n w_i^2Z_i.
            \end{align}
            The iterates of {\tt DSGD} with optimal client sampling \eqref{eq:unique_sol_main} satisfy
            \begin{equation}
               \E{\norm{r^{k+1}}^2}
            \leq (1-\mu\eta^k)\E{\norm{r^k}^2} +(\eta^k)^2 \left( \frac{\beta_1}{\gamma^k} - \beta_2 \right).
        \end{equation}
        \end{theorem}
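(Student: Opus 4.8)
The plan is to run the classical one-step analysis of stochastic gradient descent for the estimator $\mG^k$ from \eqref{eq:SGD_step}, and to verify that the stated step-size cap is exactly what cancels every term involving the function-value gap. First I would expand $\norm{r^{k+1}}^2=\norm{r^k-\eta^k\mG^k}^2$ and take the expectation conditional on $x^k$. Because the reweighting by $1/p_i^k$ makes the sampled update unbiased and each $\U=g_i^k$ is an unbiased oracle for $\nabla f_i(x^k)$ (Assumption~\ref{ass:2}), we have $\E{\mG^k\mid x^k}=\nabla f(x^k)$, so $\E{\norm{r^{k+1}}^2\mid x^k}=\norm{r^k}^2-2\eta^k\dotprod{\nabla f(x^k),r^k}+(\eta^k)^2\E{\norm{\mG^k}^2\mid x^k}$. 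Applying $\mu$-strong convexity of $f$ together with $\nabla f(\xs)=0$ gives $\dotprod{\nabla f(x^k),r^k}\ge \big(f(x^k)-\fs\big)+\tfrac{\mu}{2}\norm{r^k}^2$, which produces the $(1-\mu\eta^k)\norm{r^k}^2$ factor plus a spare $-2\eta^k(f(x^k)-\fs)$ that I will spend at the end.

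The core of the argument is bounding $\E{\norm{\mG^k}^2\mid x^k}=\E{\norm{\mG^k-\nabla f(x^k)}^2\mid x^k}+\norm{\nabla f(x^k)}^2$. For the variance I would use the decomposition \eqref{eq:dif}. Since our sampling is independent, \eqref{eq:key_inequality} holds with equality at $v_i=1-p_i^k$, i.e.\ the sampling part equals $\E{\sum_{i=1}^n w_i^2\tfrac{1-p_i^k}{p_i^k}\norm{\U}^2\mid x^k}$ as recorded in \eqref{varianceToBeOptimized}; by the definition of $\alpha^k$ and the identity $\tfrac1{\gamma^k}-1=\tfrac{\alpha^k(n-m)}{m}$ this equals $\big(\tfrac1{\gamma^k}-1\big)\E{\sum_i w_i^2\norm{\U}^2\mid x^k}$, while the remaining ``full-participation'' part of \eqref{eq:dif} equals $\E{\norm{\sum_i w_i\xi_i^k}^2\mid x^k}=\sum_i w_i^2\,\E{\norm{\xi_i^k}^2\mid x^k}$ (by mean-zeroness and independence of the per-client noise). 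Feeding the oracle bounds of Assumption~\ref{ass:2} into both ($\E{\norm{g_i^k}^2\mid x^k}\le(1+M)\norm{\nabla f_i(x^k)}^2+\sigma^2$ and $\E{\norm{\xi_i^k}^2\mid x^k}\le M\norm{\nabla f_i(x^k)}^2+\sigma^2$) and collecting coefficients gives
\[
\E{\norm{\mG^k}^2\mid x^k}\ \le\ \Big(\tfrac{1+M}{\gamma^k}-1\Big)\sum_{i=1}^n w_i^2\norm{\nabla f_i(x^k)}^2\ +\ \tfrac{1}{\gamma^k}\,\sigma^2\sum_{i=1}^n w_i^2\ +\ \norm{\nabla f(x^k)}^2.
\]

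Next I would invoke $L$-smoothness: $\norm{\nabla f_i(x^k)}^2\le 2L\big(f_i(x^k)-f_i^\star\big)=2L\big(f_i(x^k)-f_i(\xs)\big)+2LZ_i$ and $\norm{\nabla f(x^k)}^2\le 2L\big(f(x^k)-\fs\big)$. The $Z_i$-contributions assemble exactly into $\big(\tfrac{1+M}{\gamma^k}-1\big)2L\sum_i w_i^2 Z_i+\tfrac1{\gamma^k}\sigma^2\sum_i w_i^2=\tfrac{\beta_1}{\gamma^k}-\beta_2$, and every remaining term is proportional to a function-value gap. Using $w_i^2\le Ww_i$ on the nonnegative quantities and $\gamma^k\le1$ to fold the extra constant into the $1/\gamma^k$ factor, these gap terms are at most $\tfrac{2L(1+WM)}{\gamma^k}\big(f(x^k)-\fs\big)$; multiplied by $(\eta^k)^2$ and combined with the $-2\eta^k(f(x^k)-\fs)$ budget from the first step, their net contribution is
\[
2\eta^k\Big(\tfrac{\eta^k L(1+WM)}{\gamma^k}-1\Big)\big(f(x^k)-\fs\big)\ \le\ 0 \qquad\Longleftrightarrow\qquad \eta^k\le \tfrac{\gamma^k}{(1+WM)L}.
\]
Substituting back into the first display and taking total expectation yields the claimed recursion.

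I expect the last step to be the main obstacle: one must track the constants so that the function-value-gap terms cancel the strong-convexity budget exactly — this is what pins down the threshold $\gamma^k/((1+WM)L)$ — while retaining the tight $w_i^2$-weighting appearing in $\beta_1,\beta_2$. The delicate point is that $\xs$ need not minimize the individual $f_i$, so $f_i(x^k)-f_i(\xs)$ is not sign-definite and must be grouped with the nonnegative $f_i(x^k)-f_i^\star$ (equivalently, kept together with $Z_i$) before any weight inequality is applied; this is cleanest when the weights are uniform (as implicit in Assumption~\ref{ass:2}), and otherwise one pays a mild slack. A minor, notational point is the interplay of $\alpha^k,\gamma^k$ with the oracle randomness, which is cleanest if these are read as ratios of expectations that have already integrated out both the sampling and the stochastic gradients.
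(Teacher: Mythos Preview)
Your proposal is correct and follows essentially the same route as the paper: expand $\norm{r^{k+1}}^2$, use strong convexity for the cross term, split $\E{\norm{\mG^k}^2}$ into the sampling-variance piece (rewritten via $\alpha^k$ as $(\tfrac{1}{\gamma^k}-1)\sum_i w_i^2\E{\norm{g_i^k}^2}$) and the full-participation piece, bound each per-client second moment through Assumption~\ref{ass:2} and the smoothness inequality $\norm{\nabla f_i(x^k)}^2\le 2L(f_i(x^k)-f_i^\star)$, and finally choose the step size so that the aggregate function-gap term is absorbed by the $-2\eta^k(f(x^k)-\fs)$ budget. The constants you track match the paper's, and the ``fold $1-W$ into $1/\gamma^k$'' step is exactly how the paper reaches the coefficient $(1+WM)/\gamma^k$. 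Regarding the delicate point you flag: the paper makes precisely the same move, passing from $\sum_i w_i^2\big(f_i(x^k)-f_i(\xs)\big)$ to $W\big(f(x^k)-\fs\big)$ via $w_i^2\le W w_i$ without further comment, so your concern is well placed but is not a deviation from the published argument.
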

\begin{remark}[Interpretation of Theorem \ref{THM:DSGD_MAIN}]
    We first look at the best and worst case scenarios. In the best case scenario, we have $\gamma^k = 1$ for all $k$'s. This implies that there is no loss of speed comparing to the method with full participation. It is indeed confirmed by our theory as our obtained recursion recovers the best-known rate of {\tt DSGD} in the full participation regime~\textcolor{black}{\citep[Theorem 3.1]{gower2019sgd}}.
{\color{black}
To provide a better intuition, we include a full derivation in this case. To match their (stronger) assumptions, we let $M=0$ and $w_i = \nicefrac{1}{n}$. In full participation, we have $\gamma^k = 1$ for all $k$'s. Then, taking the same step size $\eta$ for all $k$ leads to 
\begin{align}
            \E{\norm{r^{k+1}}^2}
            \leq (1-\mu\eta)\E{\norm{r^k}^2} +\eta^2  \frac{\sigma^2}{n}.
\end{align}
Applying the above inequality recursively yields
\begin{align}
            \E{\norm{r^{K}}^2}
            \leq (1-\mu\eta)^K\E{\norm{r^0}^2} + \eta \frac{\sigma^2}{\mu n},
\end{align}
which is equivalent to the result in \citet[Theorem 3.1]{gower2019sgd}.} Similarly, in the worst case, we have $\gamma^k = \nicefrac{m}{n}$ for all $k$'s, which corresponds to uniform sampling with sample size $m$, and our recursion recovers the best-known rate for {\tt DSGD} in this regime. This is expected as \eqref{eq:alpha_k} implies that every update $\U$ is equivalent, and thus it is theoretically impossible to obtain a better rate than that of uniform sampling in the worst case scenario. In the general scenario, our obtained recursion sits somewhere between full and uniform partial participation, where the actual position is determined by $\gamma^k$'s which capture the distribution of updates (here gradients) on the clients. For instance, with a larger number of $\gamma^k$'s tending to $1$, we are closer to the full participation regime. Similarly, with more $\gamma^k$'s tending to $\nicefrac{m}{n}$, we are closer to the rate of uniform partial participation.
\end{remark}

\begin{theorem}[{\tt DSGD}, non-convex]\label{THM:DSGD_NONCONVEX}
        Let $f_i$ be $L$-smooth for $i=1,\dots,n$. Suppose that Assumptions~\ref{ass:2} and \ref{ass:local grad similarity} hold. Let $\eta^k$ be the step size and define
        \begin{align}
            \beta^k\eqdef  \frac{L}{2\gamma^k}\left((1+M-\gamma^k)W\rho+\sum_{i=1}^n w_i^2\sigma^2\right).
        \end{align}
        The iterates of {\tt DSGD} with optimal client sampling \eqref{eq:unique_sol_main} satisfy
        \begin{equation}\label{eq:DSGD_nonconvex}
            \E{f(x^{k+1})}\leq
        \E{f(x^{k})} 
         - \eta^k\left(1-\frac{(1+M)L}{2\gamma^k}\eta^k\right)\E{\norm{\nabla f(x^k)}^2} + (\eta^k)^2\beta^k.
        \end{equation}
\end{theorem}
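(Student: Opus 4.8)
The plan is to combine the descent lemma for $L$-smooth functions with a second-moment bound on the sampled update $\mG^k = \sum_{i \in S^k}\tfrac{w_i}{p_i^k}g_i^k$. Since $f = \sum_i w_i f_i$ is $L$-smooth (a convex combination of $L$-smooth functions), applying smoothness at $x^k$ and $x^{k+1} = x^k - \eta^k \mG^k$ gives $f(x^{k+1}) \le f(x^k) - \eta^k \ip{\nabla f(x^k)}{\mG^k} + \tfrac{L(\eta^k)^2}{2}\norm{\mG^k}^2$. Taking expectation conditioned on $x^k$, unbiasedness of $\mG^k$ --- which holds because the $\tfrac1{p_i^k}$ factor removes the sampling bias and $g_i^k$ is unbiased for $\nabla f_i(x^k)$ by Assumption~\ref{ass:2} --- collapses the inner-product term to $-\eta^k\norm{\nabla f(x^k)}^2$. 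Thus the whole theorem reduces to establishing $\E{\norm{\mG^k}^2 \mid x^k} \le \tfrac{1+M}{\gamma^k}\norm{\nabla f(x^k)}^2 + \tfrac1{\gamma^k}\big((1+M-\gamma^k)W\rho + \sum_i w_i^2\sigma^2\big)$, after which substituting back, grouping the $\norm{\nabla f(x^k)}^2$ terms and the constants, and taking total expectation yields exactly \eqref{eq:DSGD_nonconvex}.

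For the second-moment bound I would write $\E{\norm{\mG^k}^2 \mid x^k} = \norm{\nabla f(x^k)}^2 + \E{\norm{\mG^k - \nabla f(x^k)}^2 \mid x^k}$ and then split the variance as in \eqref{eq:dif} into a sampling part $\E{\norm{\mG^k - \sum_i w_i g_i^k}^2}$ and an oracle part $\E{\norm{\sum_i w_i g_i^k - \nabla f(x^k)}^2}$; the cross term vanishes because $\E{\mG^k \mid x^k, \{g_i^k\}} = \sum_i w_i g_i^k$. For the sampling part, \eqref{varianceToBeOptimized} gives the exact value $\E{\sum_i w_i^2\tfrac{1-p_i^k}{p_i^k}\norm{g_i^k}^2}$, which by the definition of the improvement factor $\alpha^k$ equals $\alpha^k$ times the same quantity for independent uniform sampling (itself equal, by the equality case of Lemma~\ref{LEM:UPPERV}, to $\tfrac{n-m}{m}\E{\sum_i w_i^2 \norm{g_i^k}^2}$); using the identity $\tfrac{1-\gamma^k}{\gamma^k} = \tfrac{\alpha^k(n-m)}{m}$ this becomes $\tfrac{1-\gamma^k}{\gamma^k}\E{\sum_i w_i^2\norm{g_i^k}^2}$. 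The oracle part equals $\sum_i w_i^2 \E{\norm{\xi_i^k}^2 \mid x^k}$ by independence of the $\xi_i^k$ across clients and is bounded directly via Assumption~\ref{ass:2}.

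It then remains to bound $\E{\sum_i w_i^2\norm{g_i^k}^2 \mid x^k}$ and $\E{\sum_i w_i^2 \norm{\xi_i^k}^2 \mid x^k}$: the second-moment bound of Assumption~\ref{ass:2} gives at most $(1+M)\norm{\nabla f_i(x^k)}^2 + \sigma^2$ and $M\norm{\nabla f_i(x^k)}^2 + \sigma^2$ per client, respectively; factoring out $W = \max_i w_i$ (and using $W \le 1$) passes from $\sum_i w_i^2\norm{\nabla f_i(x^k)}^2$ to $W\sum_i w_i\norm{\nabla f_i(x^k)}^2$; and $\sum_i w_i\norm{\nabla f_i(x^k)}^2 = \sum_i w_i\norm{\nabla f_i(x^k) - \nabla f(x^k)}^2 + \norm{\nabla f(x^k)}^2 \le \rho + \norm{\nabla f(x^k)}^2$ by Assumption~\ref{ass:local grad similarity} (the cross term cancels because $\sum_i w_i(\nabla f_i - \nabla f) = 0$). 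Adding the two contributions, the coefficient of $W\rho$ is $\tfrac{(1-\gamma^k)(1+M)}{\gamma^k} + M = \tfrac{1+M-\gamma^k}{\gamma^k}$, the coefficient of $\norm{\nabla f(x^k)}^2$ after including the $\norm{\nabla f(x^k)}^2$ from the bias-variance split is $1 + \tfrac{(1-\gamma^k)(1+M)}{\gamma^k} + M = \tfrac{1+M}{\gamma^k}$, and $\sum_i w_i^2\sigma^2$ picks up a factor $\tfrac{1-\gamma^k}{\gamma^k} + 1 = \tfrac1{\gamma^k}$, which is precisely the bound claimed in the first paragraph.

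I expect the main obstacle to be the bookkeeping around the improvement factor: being explicit enough about the conditioning that $\alpha^k$ and hence $\gamma^k$ are well-defined functions of $x^k$, invoking the equality case of Lemma~\ref{LEM:UPPERV} (via \eqref{varianceToBeOptimized}) for both our sampling and the uniform sampling, and then checking that the sampling-variance and oracle-variance contributions recombine so that the coefficients of $\norm{\nabla f(x^k)}^2$ and of $W\rho$ match exactly. Everything else is the routine descent-lemma computation.
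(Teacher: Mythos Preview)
Your proposal is correct and follows essentially the same route as the paper's proof: descent lemma for $L$-smooth $f$, unbiasedness of $\mG^k$ to collapse the inner product, the variance decomposition \eqref{eq:dif} combined with the identity $\alpha^k\tfrac{n-m}{m}=\tfrac{1-\gamma^k}{\gamma^k}$ for the sampling part, Assumption~\ref{ass:2} for the oracle part, and finally $\sum_i w_i^2\norm{\nabla f_i(x^k)}^2\le W\rho+\norm{\nabla f(x^k)}^2$ via Assumption~\ref{ass:local grad similarity} together with $W\le 1$. The paper organizes the second-moment bound by reusing the intermediate calculation from the strongly-convex proof (Appendix~\ref{appendix:proof_dsgd_convex}) rather than re-deriving it through \eqref{eq:dif}, but the resulting bound $\E{\norm{\mG^k}^2}\le(\tfrac{1+M}{\gamma^k}-1)\sum_i w_i^2\norm{\nabla f_i(x^k)}^2+\tfrac{1}{\gamma^k}\sum_i w_i^2\sigma^2+\norm{\nabla f(x^k)}^2$ and the subsequent algebra are identical to yours.
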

\begin{remark}[Interpretation of Theorem \ref{THM:DSGD_NONCONVEX}]
    The iterate \eqref{eq:DSGD_nonconvex} recovers the standard form of the convergence result of {\tt DSGD} for one recursion step in the non-convex setting. Similar to the previous results, this convergence bound sits between the best-known rate of full participation and uniform sampling~\textcolor{black}{\citep[Theorem 4.8]{bottou2018optimization}}.
\end{remark}

\subsection{Federated Averaging ({\tt FedAvg}) with Optimal Client Sampling}
\label{sec: FedAvg}

    \begin{figure}[!t]
	\centering
    \begin{algorithm}[H]
        \begin{algorithmic}[1]
        \STATE {\bfseries Input:}  initial global model $x^1$, global and local step-sizes $\eta_g^k$, $\eta_l^k$ 
        \FOR{each round $k=1,\dots,K$}
            \STATE master broadcasts $x^{k}$ to all clients $i\in[n]$
            \FOR{each client $i\in[n]$ (in parallel)}
                \STATE initialize local model $y_{i,0}^k\leftarrow x^{k}$
                \FOR{$r=1,\dots,R$}
                   \STATE  compute mini-batch gradient $g_i(y_{i,r-1}^k)$
                    \STATE update $y_{i,r}^k \leftarrow y_{i,r-1}^k - \eta_l^k g_i(y_{i,r-1}^k)$
                \ENDFOR
                \STATE compute $\U \eqdef \Delta y_i^k=x^{k}-y_{i,R}^k$
                \STATE compute $p_i^k$ using Algorithm~\ref{alg:OCS} or~\ref{alg:AOCS}
                \STATE send $\frac{w_i}{p_i^k}\Delta y_i^k$ to master with probability $p_i^k$
            \ENDFOR
           \STATE  master computes $\Delta x^k=\sum_{i\in S^k} \frac{w_i}{p_i^k}\Delta y_i^k$
            \STATE master updates global model $x^{k+1} \leftarrow x^{k} - \eta_g^k \Delta x^k$
        \ENDFOR
        \end{algorithmic}
        \caption{{\tt FedAvg} with Optimal Client Sampling.}
        \label{alg:FedAvg}
    \end{algorithm}
    \end{figure}

Pseudo-code that adapts the standard {\tt FedAvg} algorithm to our framework is provided in Algorithm~\ref{alg:FedAvg}. This subsection presents convergence analyses for {\tt FedAvg} with optimal client sampling in both convex and non-convex settings.

\begin{theorem}[{\tt FedAvg}, strongly-convex]
\label{THM:FEDAVG_MAIN}
   Let $f_i$ be $L$-smooth and $\mu$-strongly convex for $i=1,\dots,n$. Suppose that Assumption~\ref{ass:3} holds. Let $\eta^k \eqdef R\eta_l^k \eta_g^k$ be the effective step-size and $\eta_g^k \geq \sqrt{\frac{\gamma^k}{\sum_{i}w_i^2}}$. Choose $\eta^k \in\left(0, \frac{1}{8}\min\left\{ \frac{1}{L(2 +\nicefrac{M}{R})},\frac{\gamma^k}{(1 + W(1 + \nicefrac{M}{R}))L} \right\}\right]$,
   \begin{align}
\beta^k_1 &\eqdef  \frac{2\sigma^2}{\gamma^k R} \sum_{i=1}^n w_i^2 + 4L \left(\frac{M}{R}+1   - \gamma^k \right)  \sum_{i=1}^n w_i^2 Z_i,\\
\beta_2 &\eqdef  72L^2 \left(1 + \frac{M}{R} \right) \sum_{i=1}^n w_i Z_i.
\end{align}
   The iterates of {\tt FedAvg} ($R\geq 2$) with optimal client sampling~\eqref{eq:unique_sol_main} satisfy
    \begin{align}
\frac{3}{8} \E{(f(x^k)-f^\star)}  \leq \frac{1}{\eta^k}\left( 1 - \frac{\mu\eta^k}{2} \right) \E{\norm{r^k}^2}  
 - \frac{1}{\eta^k}\E{\norm{r^{k+1}}^2} 
+  \eta^k \beta^k_1 + (\eta^k)^2 \beta_2.
    \end{align}
\end{theorem}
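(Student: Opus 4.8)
\textbf{Proof proposal for Theorem \ref{THM:FEDAVG_MAIN}.}

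The plan is to follow the standard one-round descent analysis for {\tt FedAvg} (in the style of \citet{karimireddy2019scaffold} and \citet{khaled2020tighter}), but threading our client-sampling variance bound \eqref{varianceToBeOptimized}--\eqref{eq:key_inequality} through at the point where the aggregated update $\Delta x^k = \sum_{i\in S^k}\frac{w_i}{p_i^k}\Delta y_i^k$ enters. First I would expand $\norm{r^{k+1}}^2 = \norm{x^k - \eta_g^k \Delta x^k - x^\star}^2$ and take conditional expectation over the sampling $S^k$ first and then over the stochastic gradients. Since the sampling is unbiased, $\E{\Delta x^k \mid \{\Delta y_i^k\}} = \sum_{i=1}^n w_i \Delta y_i^k =: \Delta \bar y^k$, so the cross term reduces to $-2\eta_g^k \ip{r^k}{\Delta \bar y^k}$ and the quadratic term splits as
\begin{align}
\E{\norm{\Delta x^k}^2} = \E{\norm{\Delta x^k - \Delta \bar y^k}^2} + \E{\norm{\Delta \bar y^k}^2}. \nonumber
\end{align}
The first piece is exactly the sampling variance, which by \eqref{varianceToBeOptimized} and the definition of $\alpha^k,\gamma^k$ is controlled by $\frac{1-\gamma^k}{\gamma^k}$ (up to the weighting) times $\sum_i w_i^2 \E{\norm{\Delta y_i^k}^2}$; this is where the effective step-size condition $\eta_g^k \geq \sqrt{\gamma^k/\sum_i w_i^2}$ is used to absorb the $\sum_i w_i^2$ factor cleanly into the $\gamma^k$ bookkeeping.

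Next I would bound the two ``deterministic-sampling'' quantities $\ip{r^k}{\Delta \bar y^k}$ and $\norm{\Delta \bar y^k}^2$ using the local-SGD machinery: write $\Delta y_i^k = \eta_l^k \sum_{r=0}^{R-1} g_i(y_{i,r}^k)$, use $L$-smoothness, $\mu$-strong convexity and Assumption~\ref{ass:3} to relate $\sum_r \nabla f_i(y_{i,r}^k)$ back to $\nabla f_i(x^k)$ and ultimately to $f(x^k) - f^\star$ and $\norm{r^k}^2$, paying the price of a client-drift term $\E{\norm{y_{i,r}^k - x^k}^2}$. The client-drift term is then bounded by the usual recursion, which under the stated step-size cap $\eta^k = R\eta_l^k\eta_g^k \leq \tfrac{1}{8}\min\{\dots\}$ yields something like $\E{\norm{y_{i,r}^k - x^k}^2} \lesssim (\eta_l^k)^2 R^2 (\norm{\nabla f_i(x^k)}^2 + \sigma^2/R)$; smoothness plus $\nabla f_i(x^\star) $-centering converts $\norm{\nabla f_i(x^k)}^2$ into $O(L(f_i(x^k)-f_i^\star)) = O(L(f(x^k)-f^\star) + LZ_i)$ after weighting by $w_i$, which is precisely the source of the $Z_i$ terms in $\beta_1^k$ and $\beta_2$. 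The stochastic-noise contributions ($\sigma^2$ terms) land in $\beta_1^k$ through both the sampling-variance piece (the $\frac{2\sigma^2}{\gamma^k R}\sum_i w_i^2$ term) and the local-noise accumulation.

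Collecting terms, I would arrange the inequality so that the negative coefficient on $\E{f(x^k)-f^\star}$ survives with constant $\tfrac38$ — this requires the step-size to be small enough that the various positive $O(\eta^k L)$ and $O((\eta^k)^2 L^2)$ corrections eat at most $\tfrac58$ of the descent, which is exactly what the factor $\tfrac18$ and the two-term $\min$ in the step-size constraint are calibrated for — while the $(1-\tfrac{\mu\eta^k}{2})$ contraction on $\E{\norm{r^k}^2}$ comes from combining the $-\eta^k$-scaled strong-convexity term $-\tfrac{\mu}{2}\norm{r^k}^2$ (absorbed from $\ip{r^k}{\Delta\bar y^k}$) against the leading $\norm{r^k}^2$. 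Multiplying through by $1/\eta^k$ puts it in the stated form. The main obstacle I anticipate is the bookkeeping in the client-drift recursion under partial-plus-local-step coupling: one must be careful that the variance-reduction factor $\gamma^k$ multiplies only the terms it should (the aggregation-variance and its induced $\sigma^2/R$ and $(1-\gamma^k)Z_i$ pieces) and does \emph{not} contaminate the drift bound, since drift is a per-client quantity computed before sampling; getting the constants ($72L^2$, the $M/R$ factors, the $1/8$) to line up so the $\tfrac38$ coefficient is clean is the delicate part, and is presumably why the proof is relegated to Appendix~\ref{appendix:proof_fedavg}. For $R\geq 2$ the analysis also needs $R-1 \geq R/2$-type slack to simplify $\sum_{r=0}^{R-1}$ sums, which is why the theorem restricts to $R\geq 2$.
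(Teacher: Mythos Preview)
Your proposal is essentially correct and follows the same route as the paper's proof: expand $\norm{r^{k+1}}^2$, split the quadratic term into the sampling variance (controlled by $\alpha^k\tfrac{n-m}{m}=\tfrac{1}{\gamma^k}-1$) plus the full-sum term, bound the cross term via the perturbed strong-convexity inequality of \citet{karimireddy2019scaffold} (this is the paper's Lemma~\ref{lemma:FedAvgProof}, which you invoke implicitly), center gradients at $\nabla f_i(x^k)$ to expose the drift $\mathcal{E}=\tfrac{1}{R}\sum_{i,r}w_i\E{\norm{x^k-y_{i,r-1}^k}^2}$, bound the drift by a recursion under $R\geq 2$, and then use the step-size caps to make the $\tfrac{3}{8}$ coefficient survive. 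One small correction on bookkeeping: the condition $\eta_g^k\geq\sqrt{\gamma^k/\sum_i w_i^2}$ is \emph{not} used inside the sampling-variance piece (that piece already comes with $\sum_i w_i^2$ from Lemma~\ref{LEM:UPPERV}); it is used at the very end to merge the drift-induced noise term $\tfrac{\sigma^2}{R\eta_g^2}$ (which arises because $\eta_l^k=\eta^k/(R\eta_g^k)$) with the aggregation noise $\tfrac{\sigma^2}{\gamma^k R}\sum_i w_i^2$, yielding the single $\tfrac{2\sigma^2}{\gamma^k R}\sum_i w_i^2$ in $\beta_1^k$.
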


\begin{theorem}[{\tt FedAvg}, non-convex]
\label{THM:FEDAVG_MAIN_nonconvex}
   Let $f_i$ be $L$-smooth for all $i=1,\dots,n$. Suppose that Assumptions~\ref{ass:3} and \ref{ass:local grad similarity} hold. Let $\eta^k \eqdef R\eta_l^k \eta_g^k$ be the effective step-size and $\eta_g^k \geq \sqrt{\frac{5\gamma^k}{4\sum_{i}w_i^2}}$. Choose $\eta^k \in\left(0, \frac{1}{8L(2 +\nicefrac{M}{R})} \right]$. Define
   \begin{align}
       \beta^k \eqdef \left(\frac{\rho}{4}+\frac{\sigma^2 }{\gamma^k R}\sum_{i=1}^n w_i^2\right)L.
   \end{align}
   The iterates of {\tt FedAvg} ($R\geq 2$) with optimal client sampling~\eqref{eq:unique_sol_main} satisfy
    \begin{align}
            \E{f(x^{k+1})}
            \leq \E{f(x^k)} - \frac{3\eta^k}{8}\left( 1-\frac{10\eta^k L}{3} \right) \E{\norm{\nabla f(x^k)}^2}
            +\eta^k\frac{\rho}{8}+(\eta^k)^2\beta^k.
    \end{align}
\end{theorem}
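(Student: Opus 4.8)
The plan is to carry out the standard descent-lemma analysis for one round of {\tt FedAvg}, the only non-standard ingredient being that the client-sampling noise is controlled through Lemma~\ref{LEM:UPPERV} and the improvement factor $\gamma^k$. Since the claim is a single-step recursion (as with the other non-convex results), no telescoping is needed; condition on $x^k$ throughout and take expectations over the local stochastic gradients and over $S^k$. \emph{Step 1.} By $L$-smoothness of $f=\sum_i w_i f_i$, the quadratic upper bound at $x^{k+1}=x^k-\eta_g^k\Delta x^k$ gives
\[
f(x^{k+1})\le f(x^k)-\eta_g^k\ip{\nabla f(x^k)}{\Delta x^k}+\tfrac{L(\eta_g^k)^2}{2}\norm{\Delta x^k}^2.
\]
\emph{Step 2.} Since the $\tfrac{w_i}{p_i^k}$-reweighting over $S^k$ is unbiased, $\EE{S^k}{\Delta x^k}=\sum_i w_i\Delta y_i^k=:\bar\Delta^k$, so the linear term becomes $-\eta_g^k\ip{\nabla f(x^k)}{\bar\Delta^k}$. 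For the quadratic term use the bias--variance split $\norm{\Delta x^k}^2\mapsto\norm{\bar\Delta^k}^2+\norm{\Delta x^k-\bar\Delta^k}^2$; by \eqref{varianceToBeOptimized} the second piece has expectation $\sum_i w_i^2\tfrac{1-p_i^k}{p_i^k}\norm{\Delta y_i^k}^2$, which, because \eqref{eq:unique_sol_main} is the variance-minimizing independent sampling, equals $\tfrac{1-\gamma^k}{\gamma^k}\sum_i w_i^2\norm{\Delta y_i^k}^2$ (using $\tfrac{1-\gamma^k}{\gamma^k}=\tfrac{\alpha^k(n-m)}{m}$).

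\emph{Step 3 (the crux).} Write $\Delta y_i^k=\eta_l^k\sum_{r=0}^{R-1}g_i(y_{i,r}^k)$, so $\E{\Delta y_i^k}=\eta_l^k\sum_{r=0}^{R-1}\E{\nabla f_i(y_{i,r}^k)}$. The technical heart is a recursive control of the client drift: using $L$-smoothness of $f_i$, Assumption~\ref{ass:3} for the stochastic-gradient variance, Young's inequality, and smallness of $R\eta_l^k L$, one obtains
\[
\sum_{i=1}^n w_i\,\E{\norm{y_{i,r}^k-x^k}^2}\;\lesssim\;r(\eta_l^k)^2\Big((1+\tfrac{M}{R})\big(\norm{\nabla f(x^k)}^2+\rho\big)+\sigma^2\Big),
\]
after invoking Assumption~\ref{ass:local grad similarity} in the form $\sum_i w_i\norm{\nabla f_i(x)}^2\le 2\norm{\nabla f(x)}^2+2\rho$. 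This bound feeds into three places: (i) it controls the drift $\sum_r\sum_i w_i\E{\norm{y_{i,r}^k-x^k}^2}$ that remains after linearizing the inner product around $\eta_l^k R\,\nabla f(x^k)$ (using $\sum_i w_i\nabla f_i(x^k)=\nabla f(x^k)$ and Young); (ii) together with Assumption~\ref{ass:3} and $\norm{\Delta y_i^k}^2\le R(\eta_l^k)^2\sum_r\norm{g_i(y_{i,r}^k)}^2$ it bounds $\norm{\bar\Delta^k}^2$ and $\sum_i w_i^2\norm{\Delta y_i^k}^2$ (here $w_i^2\le W w_i$ is used); (iii) it generates the additive $\rho$ and $\sigma^2$ contributions.

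\emph{Step 4 (assemble).} Substitute Steps 2--3 into Step 1 and rewrite in terms of the effective step-size $\eta^k=R\eta_l^k\eta_g^k$. The linearized inner product supplies the negative term of order $-\eta^k\norm{\nabla f(x^k)}^2$; the sampling-variance and drift contributions are $O((\eta^k)^2)$ and get split between an $O((\eta^k)^2)$ correction to the $\norm{\nabla f(x^k)}^2$ coefficient and the additive error. Applying the step-size ceiling $\eta^k\le\tfrac{1}{8L(2+M/R)}$ and the (unusual) floor $\eta_g^k\ge\sqrt{5\gamma^k/(4\sum_i w_i^2)}$ --- the latter used to neutralize the $\tfrac{1-\gamma^k}{\gamma^k}\sum_i w_i^2$ prefactor created by decoupling the global and local steps --- collapses the constants so that the $\norm{\nabla f(x^k)}^2$ coefficient becomes $-\tfrac{3\eta^k}{8}\big(1-\tfrac{10\eta^k L}{3}\big)$, the $\rho$-terms become $\eta^k\rho/8$ plus the $\tfrac{\rho}{4}$ piece inside $(\eta^k)^2\beta^k$, and the $\sigma^2$-terms become the $\tfrac{\sigma^2}{\gamma^k R}\sum_i w_i^2$ piece inside $(\eta^k)^2\beta^k$. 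Taking total expectation yields the claimed recursion.

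\textbf{Main obstacle.} The hard part is Step 3 together with the bookkeeping in Step 4: obtaining the drift recursion with exactly the right $(1+M/R)$, $\rho$ and $\sigma^2$ dependence, and then tracking every absolute constant so that they collapse precisely to $\tfrac38$, $\tfrac{10}{3}$ and $\beta^k=\big(\tfrac{\rho}{4}+\tfrac{\sigma^2}{\gamma^k R}\sum_i w_i^2\big)L$ inside the stated step-size window. One must also insert the two step-size conditions at exactly the right spots --- the floor on $\eta_g^k$ to absorb the $(1-\gamma^k)/\gamma^k$ amplification of the sampling noise, the ceiling on $\eta^k$ to keep the $\norm{\nabla f(x^k)}^2$ coefficient negative --- and verify they are simultaneously satisfiable.
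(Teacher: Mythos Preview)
Your plan follows essentially the same route as the paper's proof: the $L$-smoothness descent lemma, the bias--variance split of the aggregated update (with the sampling variance expressed through $\alpha^k\tfrac{n-m}{m}=\tfrac{1-\gamma^k}{\gamma^k}$), and the drift recursion for $\sum_i w_i\E{\norm{y_{i,r}^k-x^k}^2}$, combined via the step-size window. The paper in fact reuses verbatim the bounds on $\E{\norm{\tilde\Delta}^2}$ and on the drift $\mathcal{E}$ derived in the strongly-convex proof, so your Step~3 is redoing that work rather than importing it; this is fine.

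Two points in your plan would, however, prevent you from landing on the stated constants. First, in Step~3 you invoke Assumption~\ref{ass:local grad similarity} as $\sum_i w_i\norm{\nabla f_i(x)}^2\le 2\norm{\nabla f(x)}^2+2\rho$. The paper uses the sharper \emph{identity}
\[
\sum_i w_i\norm{\nabla f_i(x)}^2=\norm{\nabla f(x)}^2+\sum_i w_i\norm{\nabla f_i(x)-\nabla f(x)}^2\le \norm{\nabla f(x)}^2+\rho,
\]
which holds because the cross term vanishes ($\sum_i w_i\nabla f_i(x)=\nabla f(x)$). Your factor-$2$ slack propagates through both the $\E{\norm{\tilde\Delta}^2}$ bound and the drift bound, and the resulting constants will not collapse to $\tfrac{3}{8}$, $\tfrac{10}{3}$, $\tfrac{\rho}{8}$, and $\beta^k$ as claimed.

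Second, your description of the role of the floor $\eta_g^k\ge\sqrt{5\gamma^k/(4\sum_i w_i^2)}$ is off. It is \emph{not} used to neutralize the sampling prefactor $\tfrac{1-\gamma^k}{\gamma^k}\sum_i w_i^2$. Rather, the drift bound carries a term $\tfrac{8\eta^2\sigma^2}{R(\eta_g^k)^2}$ (because $\eta_l^k=\eta^k/(R\eta_g^k)$), and after multiplying by the $\tfrac{5\eta L^2}{8}$ drift coefficient and simplifying via $8\eta L\le 1$, one obtains $\tfrac{5\eta^2 L\sigma^2}{8R(\eta_g^k)^2}$. The floor on $\eta_g^k$ is applied precisely here, so that
\[
\frac{\eta^2\sigma^2 L}{2R\gamma^k}\sum_i w_i^2+\frac{5\eta^2 L\sigma^2}{8R(\eta_g^k)^2}
=\frac{\eta^2\sigma^2 L}{2R\gamma^k}\Bigl(\sum_i w_i^2+\frac{5\gamma^k}{4(\eta_g^k)^2}\Bigr)
\le \frac{\eta^2\sigma^2 L}{R\gamma^k}\sum_i w_i^2,
\]
which is exactly the $\sigma^2$ piece of $(\eta^k)^2\beta^k$. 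If you try to use the floor where you indicated, the bookkeeping will not close.
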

\begin{remark}[Interpretation of Theorems \ref{THM:FEDAVG_MAIN} and \ref{THM:FEDAVG_MAIN_nonconvex}]
    The convergence guarantees from Theorems \ref{THM:FEDAVG_MAIN} and \ref{THM:FEDAVG_MAIN_nonconvex} sit somewhere between those for full and uniform partial participation. The actual position is again determined by the distribution of the updates which are linked to $\gamma^k$'s. In the edge cases, i.e., $\gamma^k = 1$ (best case) or $\gamma^k = \nicefrac{m}{n}$ (worst case), we recover the state-of-the-art complexity guarantees provided in \textcolor{black}{\citep[Theorem I]{karimireddy2019scaffold}} in both regimes. Note that our results are slightly more general, as \cite{karimireddy2019scaffold} assumes $M = 0$ and $w_i = \nicefrac{1}{n}$.
\end{remark}

\section{Related Work}\label{sec:related-work}
This section reviews prior works that are closely or broadly related to our proposed method.
\subsection{Importance Client Sampling in Federated Learning}
 \label{sec:fl client sampling}
\textcolor{black}{Several recent works have studied efficient importance client sampling methods in FL \citep{cho2020client,nguyen2020fast,ribero2020communication,lai2021oort,luo2022tackling}.} Unfortunately, none of these methods is principled, as they rely on heuristics, historical losses, or partial information, which can be seen as proxies for our optimal client sampling. Furthermore, they violate at least one of the core privacy requirements of FL (secure aggregation and/or stateless clients). \textcolor{black}{Specifically, the client selection strategy proposed by \citet{lai2021oort} is based on the heuristic of system and statistical utility of clients, which reveals the identity of clients; \citet{ribero2020communication} propose to model the progression of the model parameters by an Ornstein-Uhlenbeck process based on partial information, where the master needs to process the raw update from each client. The work of \cite{cho2020client} biases client selection towards clients with higher local losses, which reveals the state of each individual client.} 

In contrast, our proposed method is the first {\em principled optimal client sampling strategy} in the sense that it minimizes the variance of the master update and is {\em compatible with core privacy requirements of FL}. \textcolor{black}{We note that the client sampling/selection techniques mentioned in this section could be made compatible with our framework presented in Section \ref{sec:main}, but they would not lead to the optimal method as they are only proxies for optimal sampling.}

\subsection{Importance Sampling in Stochastic Optimization}
Importance sampling methods for optimization have been studied extensively in the last few years in several contexts, including convex optimization and deep learning.
{\tt LASVM} developed in \cite{bordes2005fast} is an online algorithm that uses importance sampling to train kernelized support vector machines. The first importance sampling for randomized coordinate descent methods was proposed in the seminal paper of \citet{nesterov2012efficiency}. It was showed by~\citet{richtarik2014iteration} that the proposed sampling is optimal. Later, several extensions and improvements followed, e.g., \cite{shalev2014accelerated, lin2014accelerated, fercoq2015accelerated, QUARTZ, allen2016even, stich2017safe}. Another branch of work studies sample complexity. In \cite{needell2014stochastic, zhao2015stochastic}, the authors make a connection with the variance of the gradient estimates of {\tt SGD} and show that the optimal sampling distribution is proportional to the per-sample gradient norm. However, obtaining this distribution is as expensive as computing the full gradient in terms of computation, and thus it is not practical. For simpler problems, one can sample proportionally to the norms of the inputs, which can be linked to the Lipschitz constants of the per-sample loss function for linear and logistic regression. For instance, it was shown by \citet{horvath2018nonconvex} that static optimal sampling can be constructed even for mini-batches and the probability is proportional to these Lipschitz constants under the assumption that these constants of the per-sample loss function are known. Unfortunately, importance measures such as smoothness of the gradient are often hard to compute/estimate for more complicated models such as those arising in deep learning, where most of the importance sampling schemes are based on heuristics. For instance, a manually designed sampling scheme was proposed in~\cite{bengio2009curriculum}. It was inspired by the perceived way that human children learn; in practice, they provide the network with examples of increasing difficulty in an arbitrary manner. In a diametrically opposite approach, it is common for deep embedding learning to sample hard examples because of the plethora of easy non-informative ones \citep{schroff2015facenet, simo2015discriminative}. Other approaches use history of losses for previously seen samples to create the sampling distribution and sample either proportionally to the loss or based on the loss ranking~\citep{schaul2015prioritized, loshchilov2015online}. \citet{katharopoulos2018not} propose to sample based on the gradient norm of a small uniformly sampled subset of samples.
 
Although our proposed optimal sampling method adapts and extends the importance sampling results from \cite{horvath2018nonconvex} to the distributed setting of FL, it does not suffer from any of the limitations discussed above, since the motivation of our work is to {\em reduce communication} rather than reduce computation. In particular, our method allows for any budge $m<n$ on the number of participating clients, which generalizes the theoretical results from \citet{zhao2015stochastic} which only applies to the case $m=1$.

\section{Experiments}\label{sec:experiments}
This section empirically evaluates our optimal client sampling method on standard federated datasets from LEAF \citep{caldas2018leaf}. 

\subsection{Setup}\label{sec:experiment setup}
We compare our method with 1) full participation where all available clients participate in each round; and 2) the baseline where participating clients are sampled uniformly from available clients in each round. We chose not to compare with other client sampling methods, as such comparisons would be unfair. This is because they violate the privacy requirements of FL: our method is the only importance client sampling strategy that is deployable to real-world FL systems (cf. Section \ref{sec:fl client sampling}).

We simulate the cross-device FL distributed setting and train our models using \href{https://github.com/tensorflow/federated}{TensorFlow Federated (TFF)}. We conclude our evaluations using {\tt FedAvg} with Algorithm~\ref{alg:AOCS}, as it supports stateless clients and secure aggregation\footnote{We compared the results of Algorithms \ref{alg:OCS} and \ref{alg:AOCS} for all experiments as a subroutine. Their results are identical, so we only show results for Algorithm~\ref{alg:AOCS} and argue that the performance loss caused by its approximation is negligible.}. We extend the \href{https://github.com/tensorflow/federated/tree/master/tensorflow_federated/python/examples/simple_fedavg}{TFF implementation of {\tt FedAvg}} to fit our framework. For all three methods, we report validation accuracy and (local) training loss as a function of the number of communication rounds and the number of bits communicated from clients to the master\footnote{The communication from the master to clients is not considered as a bottleneck and thus not included in the results. This is a standard consideration for distributed systems, as one-to-many communication primitives (i.e., from the master to clients) are several orders of magnitude faster than many-to-one communication primitives (i.e., from clients to the master). This gap is further exacerbated in FL due to the large number of clients and slow client connections.}
. Each figure displays the mean performance with standard deviation over 5 independent runs for each of the three compared methods. For a fair comparison, we use the same random seed for all three methods in a single run and vary random seeds across different runs. Detailed experimental settings and extra results can be found in Appendices \ref{appendix:EMNIST experiment} and \ref{appendix:shakespeare experiment}. Our code together with datasets can be found at \url{https://github.com/SamuelHorvath/FL-optimal-client-sampling}.

\begin{figure}
\begin{center}
    \includegraphics[width=0.32\textwidth]{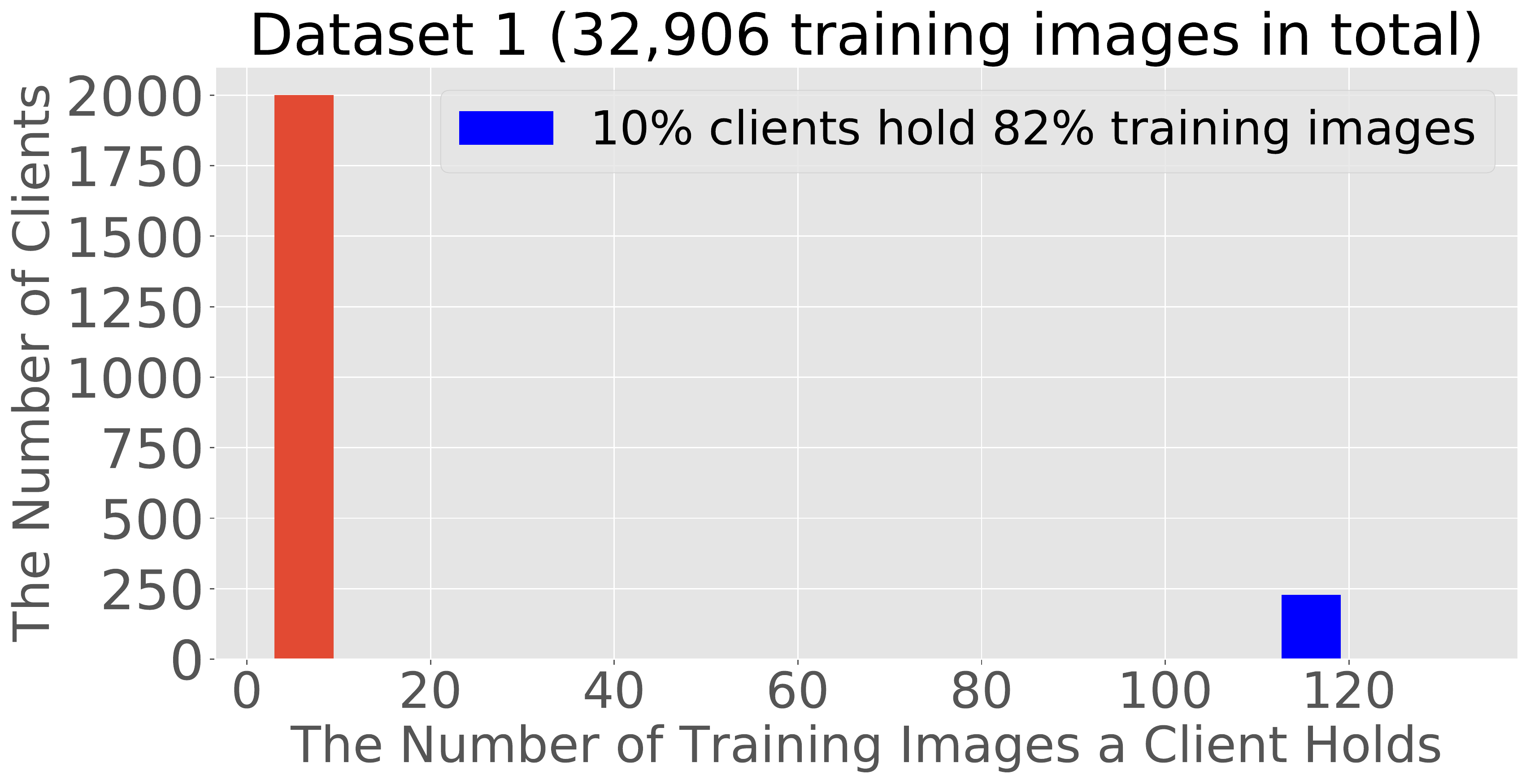}
    \includegraphics[width=0.32\textwidth]{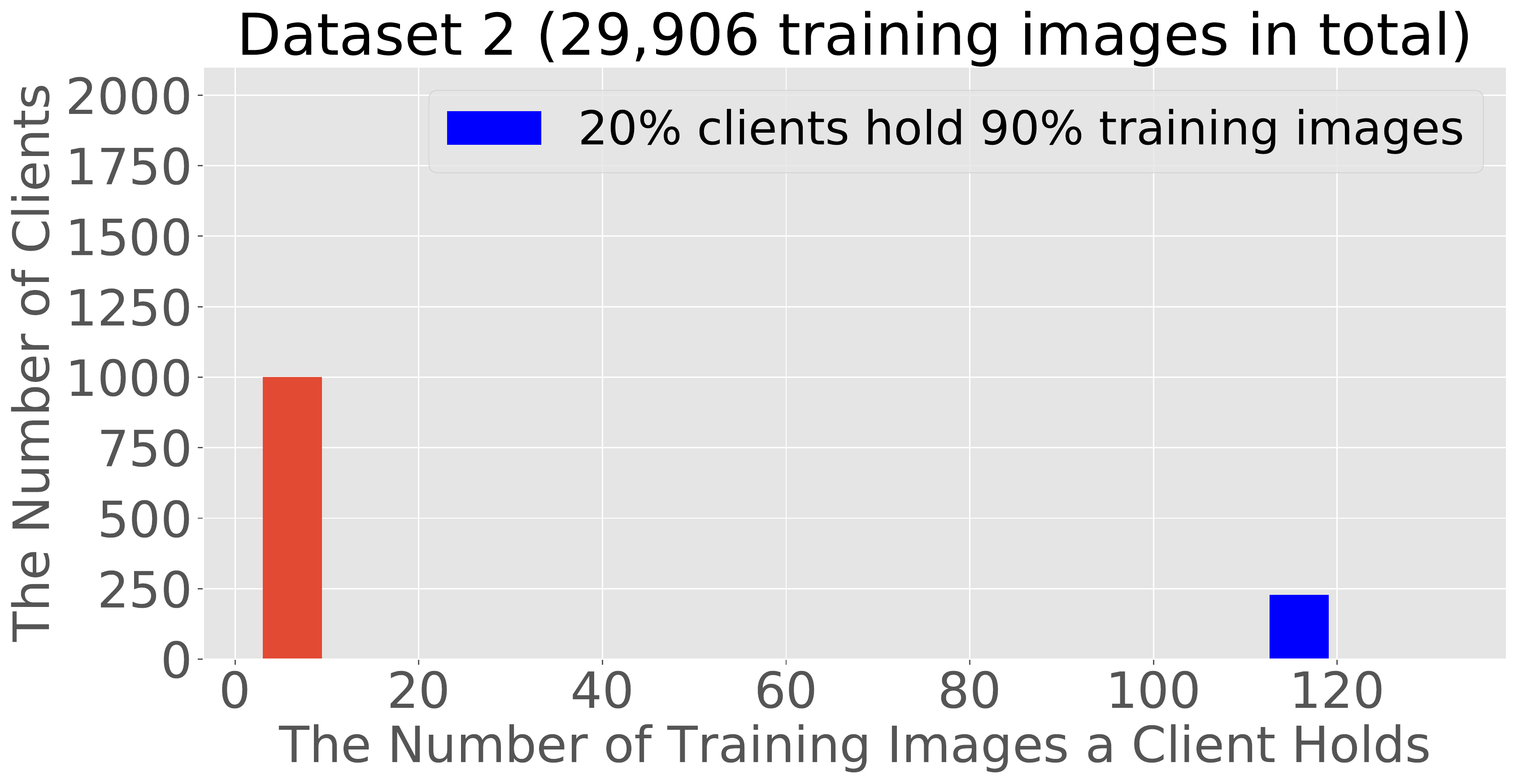}
    \includegraphics[width=0.32\textwidth]{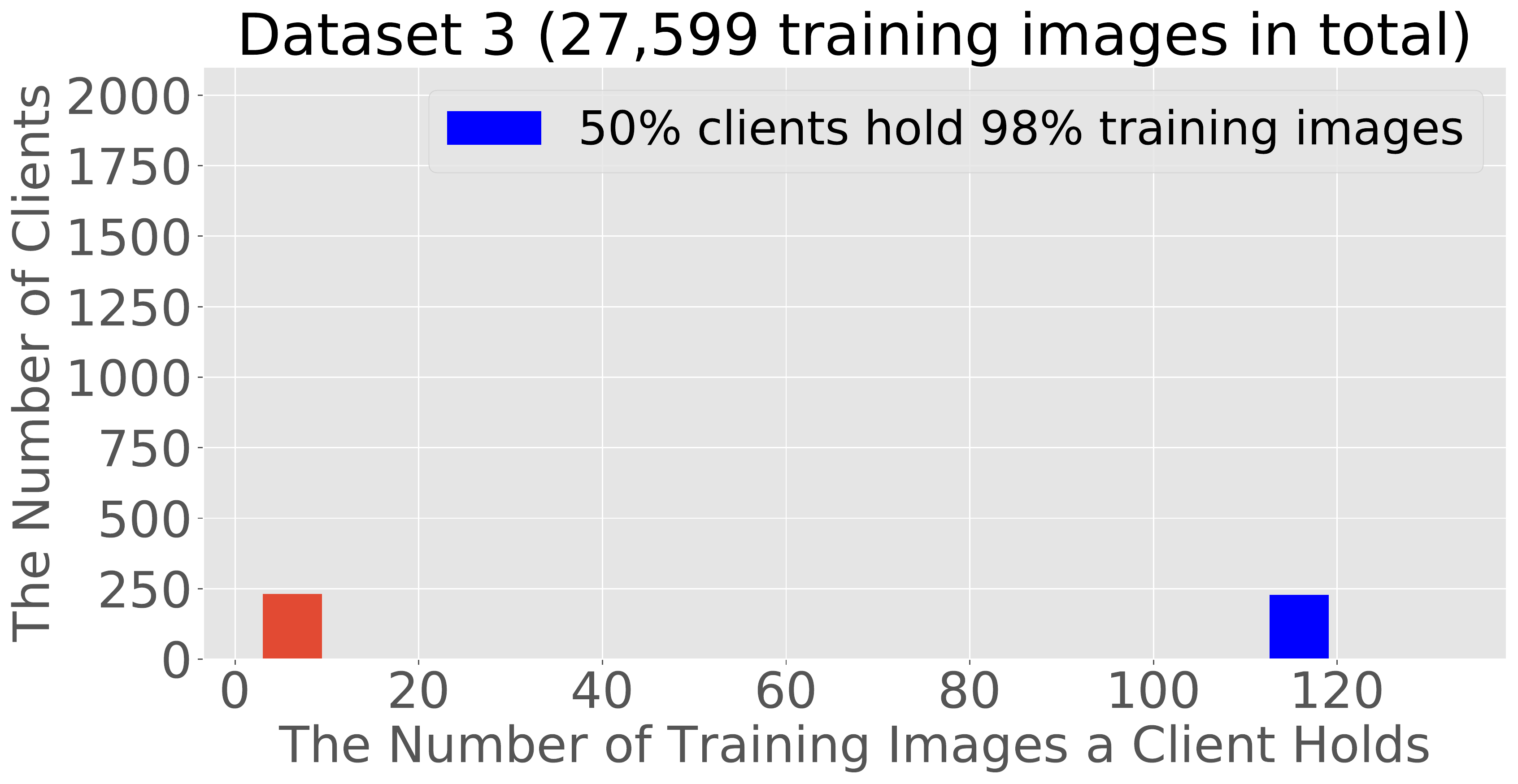}
\end{center}
\caption{Distributions of the three modified Federated EMNIST training sets.}
\label{fig:datasets}
\end{figure}
\begin{figure}[t]
\begin{subfigure}
\centering
    \includegraphics[width=0.24\textwidth]{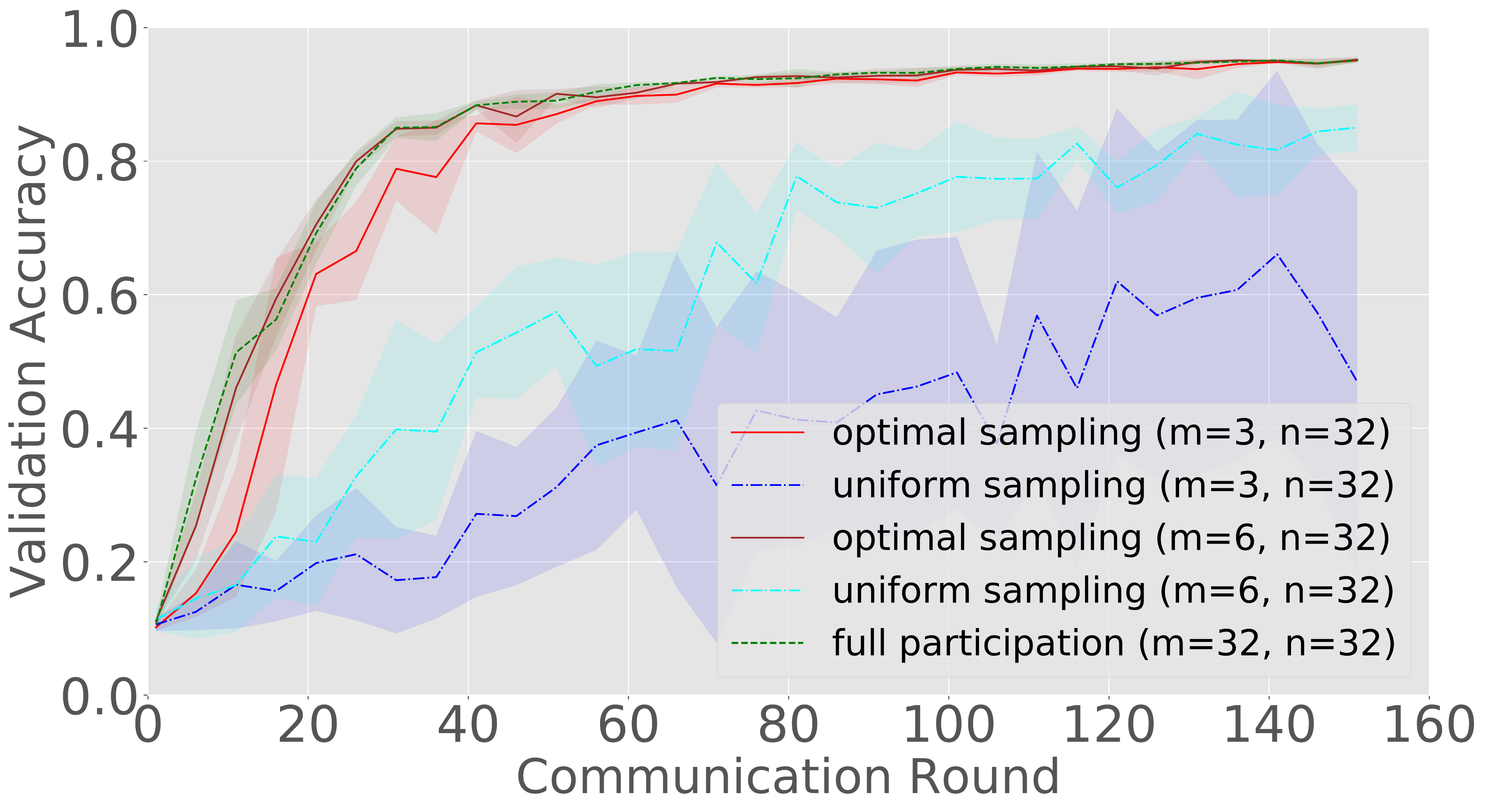}
\centering
    \includegraphics[width=0.24\textwidth]{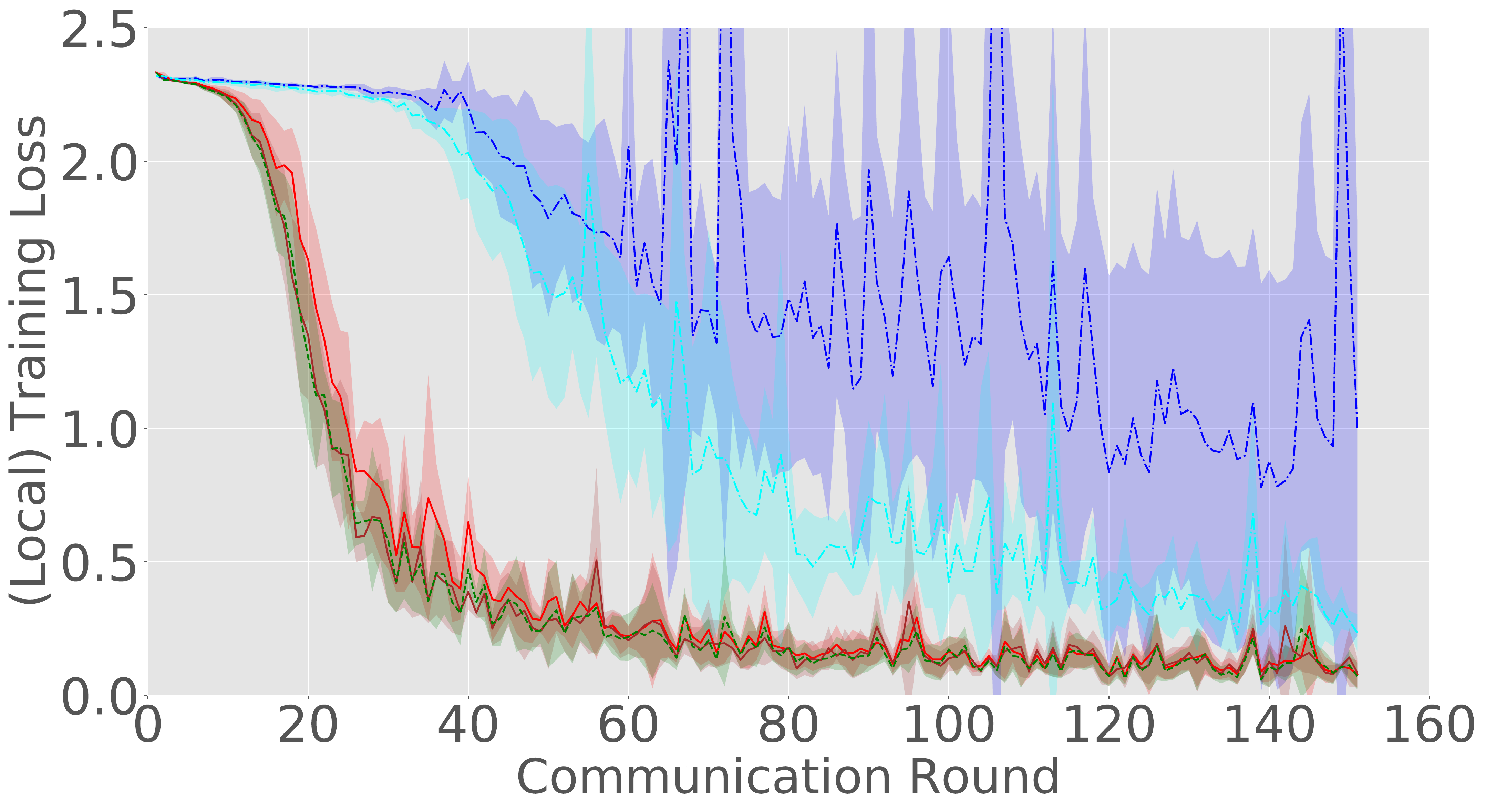}
\centering
    \includegraphics[width=0.24\textwidth]{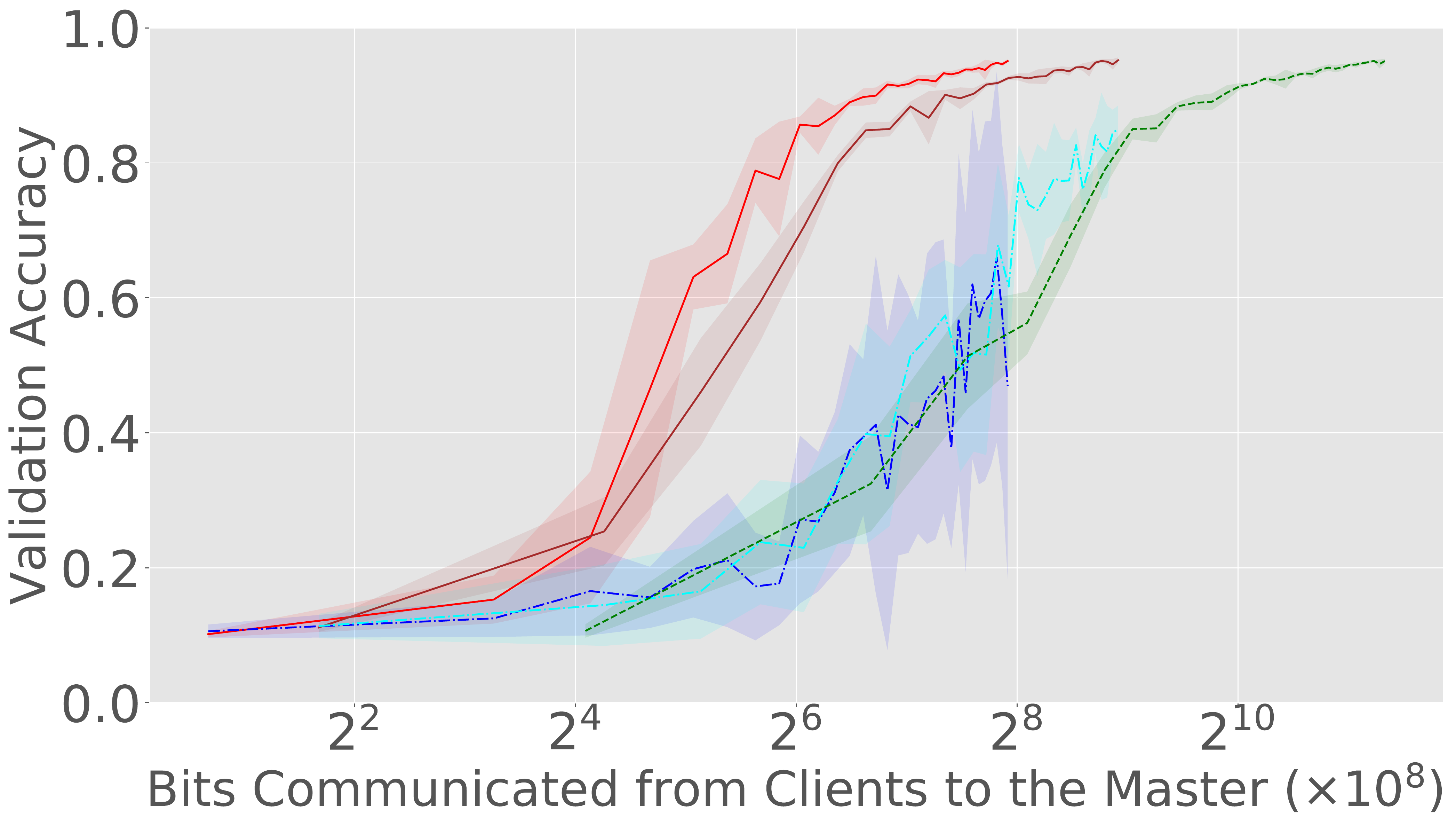}
\centering
    \includegraphics[width=0.24\textwidth]{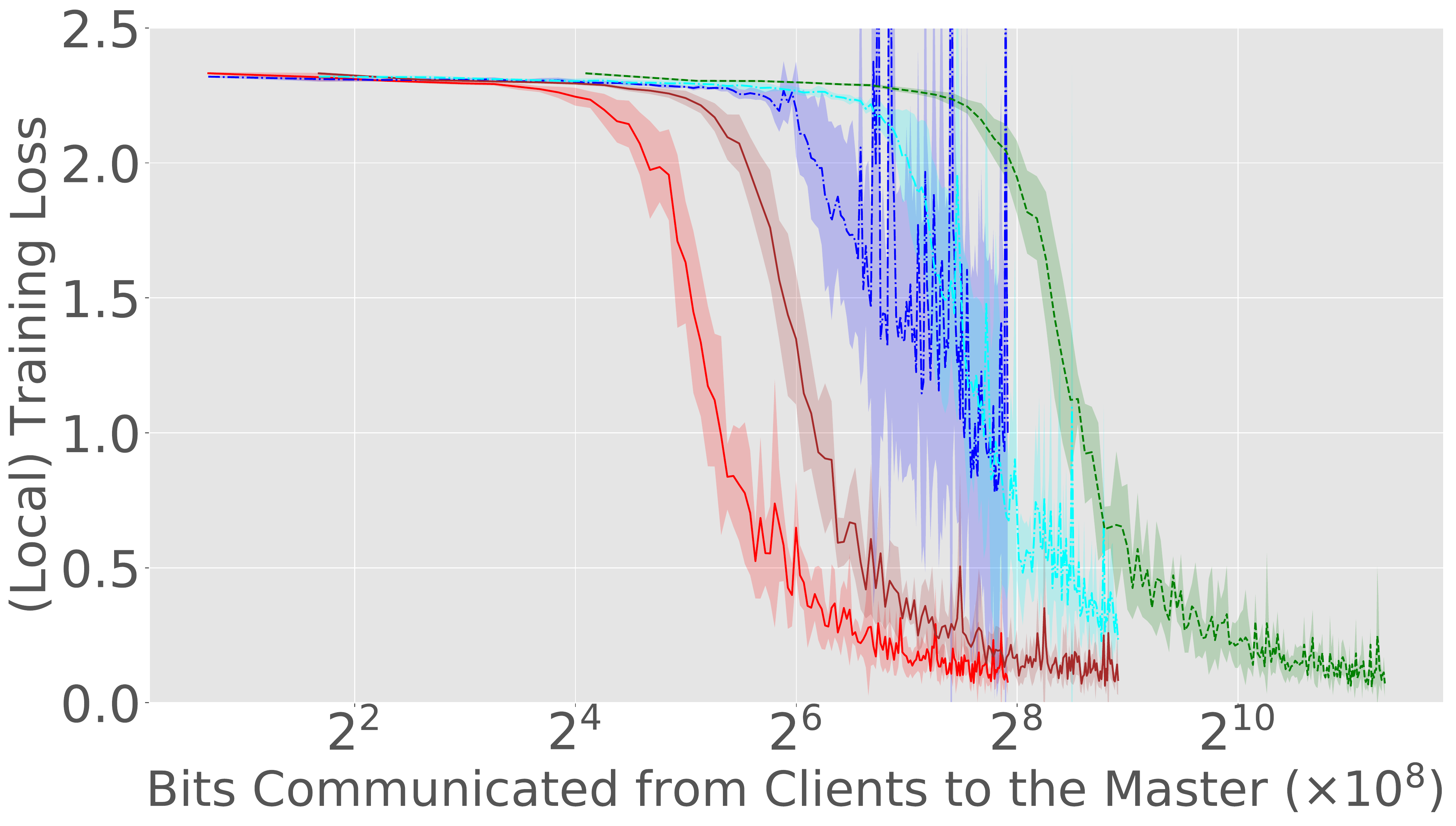}
\caption{\textcolor{black}{(FEMNIST Dataset 1, $n=32$) Validation accuracy and (local) training loss as a function of the number of communication rounds and the number of bits communicated from clients to the master.\\}}
\label{fig:cookup1}
\end{subfigure}
\begin{subfigure}
\centering
    \includegraphics[width=0.24\textwidth]{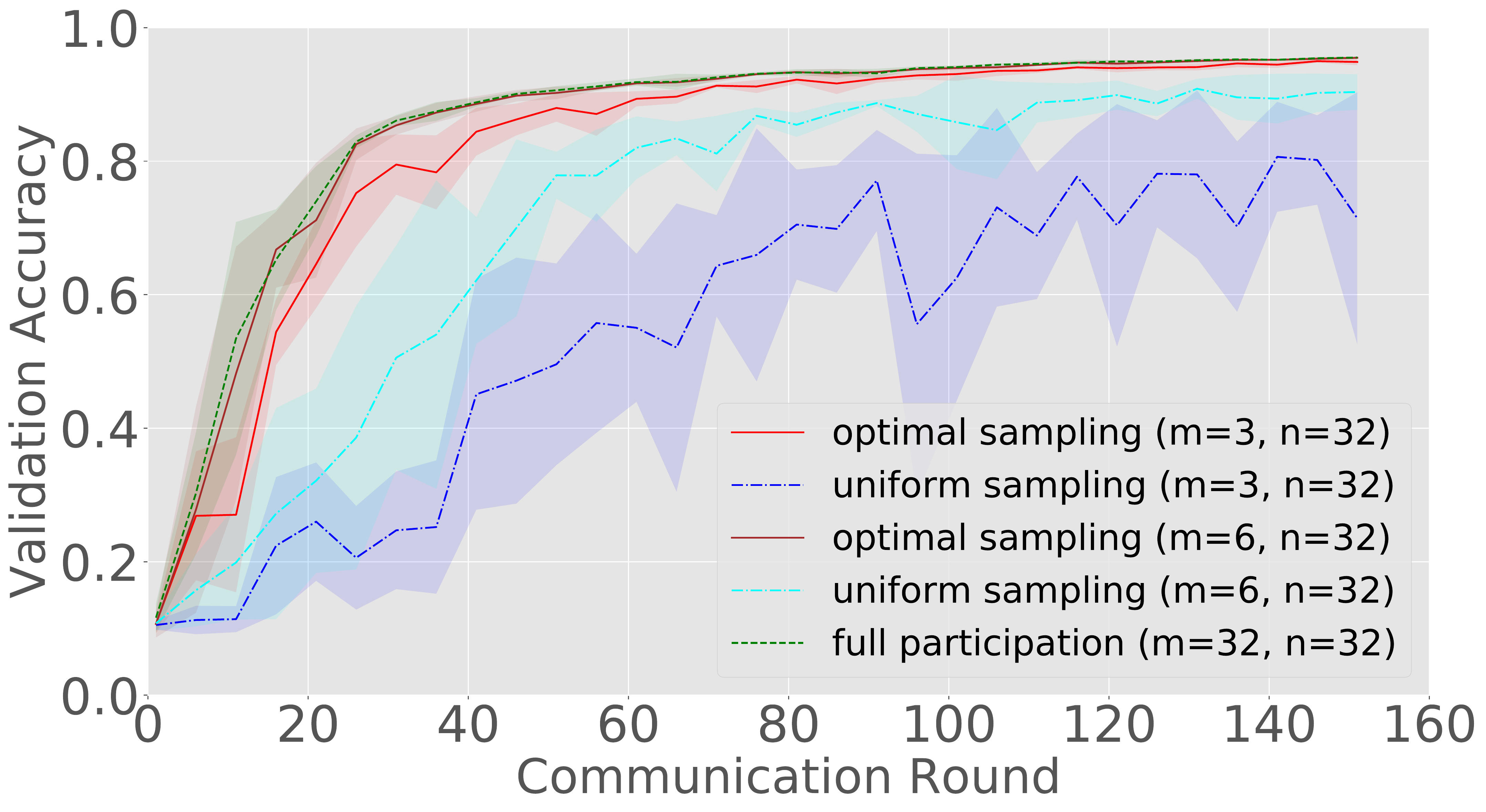}
\centering
    \includegraphics[width=0.24\textwidth]{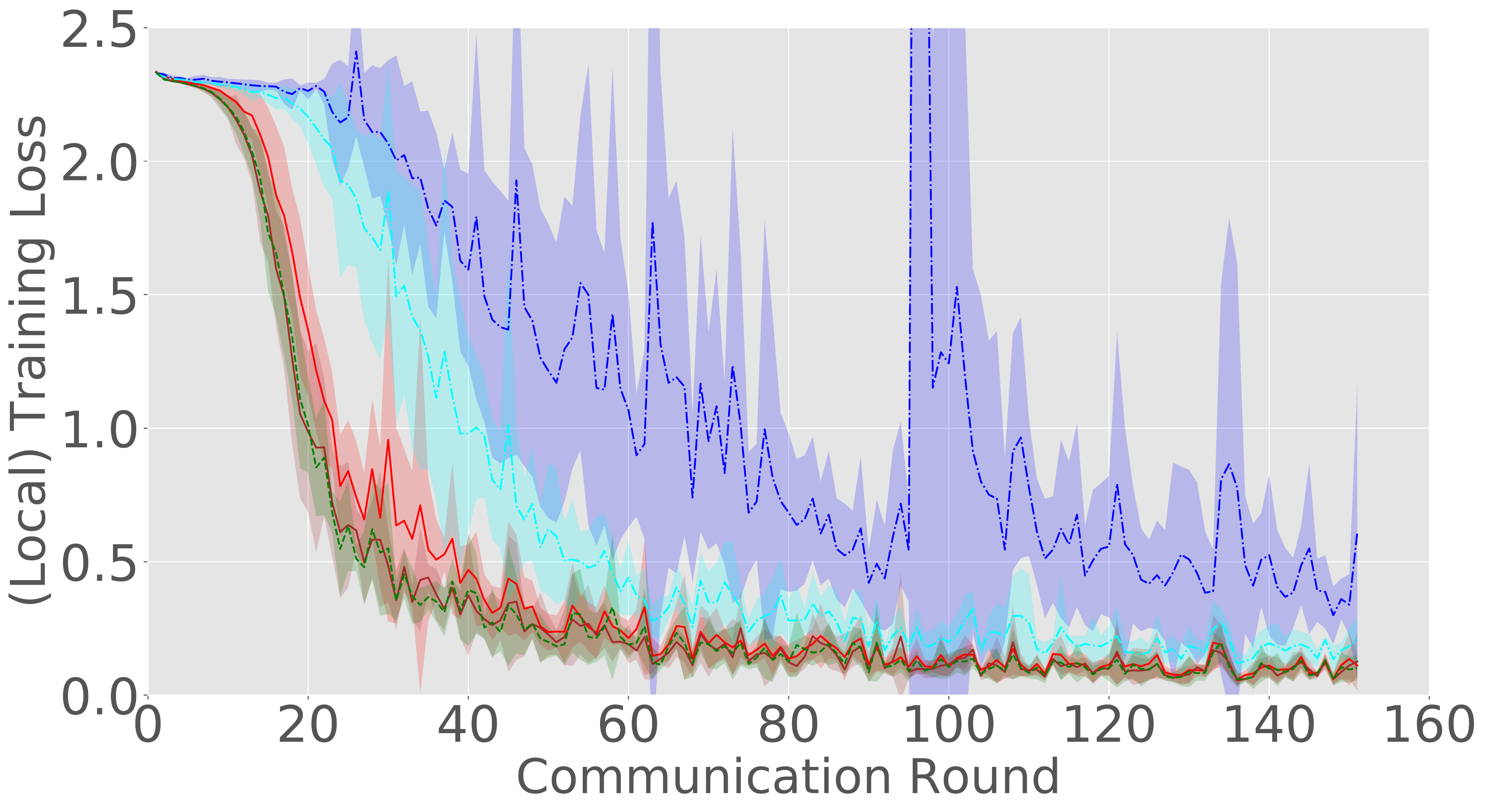}
\centering
    \includegraphics[width=0.24\textwidth]{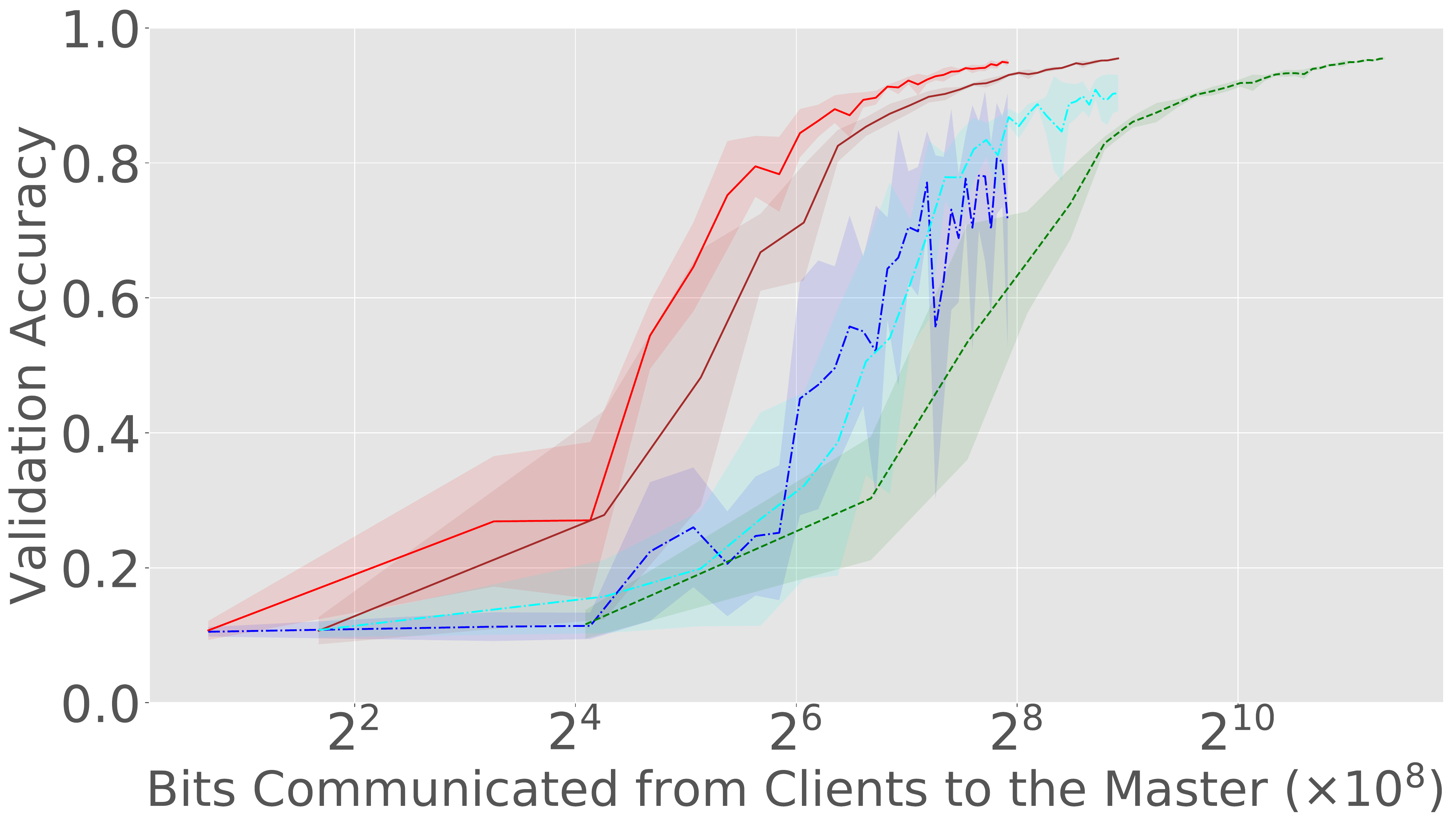}
\centering
    \includegraphics[width=0.24\textwidth]{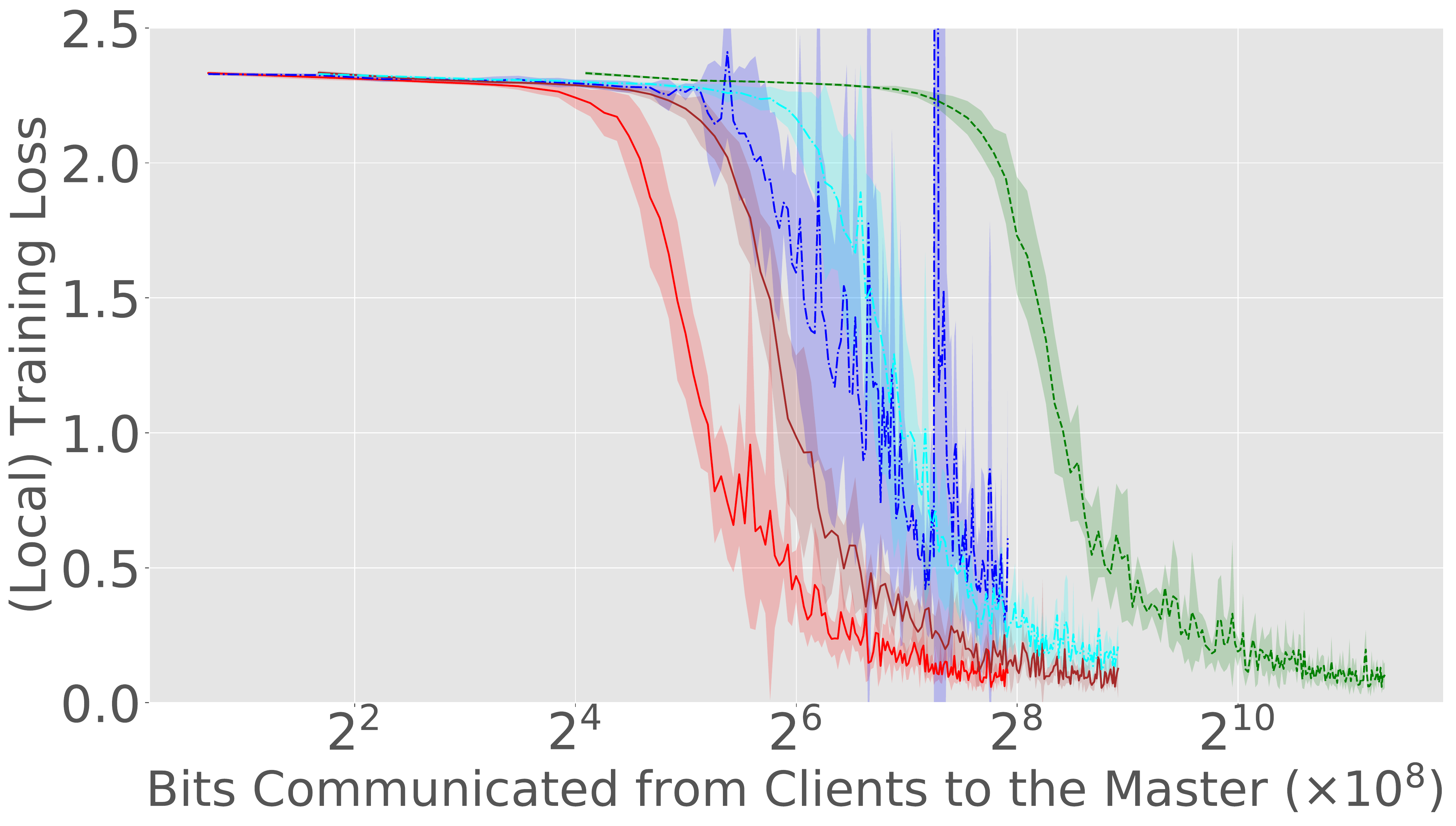}
\caption{\textcolor{black}{(FEMNIST Dataset 2, $n=32$) Validation accuracy and (local) training loss as a function of the number of communication rounds and the number of bits communicated from clients to the master.\\}}
\label{fig:cookup2}
\end{subfigure}
\begin{subfigure}
\centering
    \includegraphics[width=0.24\textwidth]{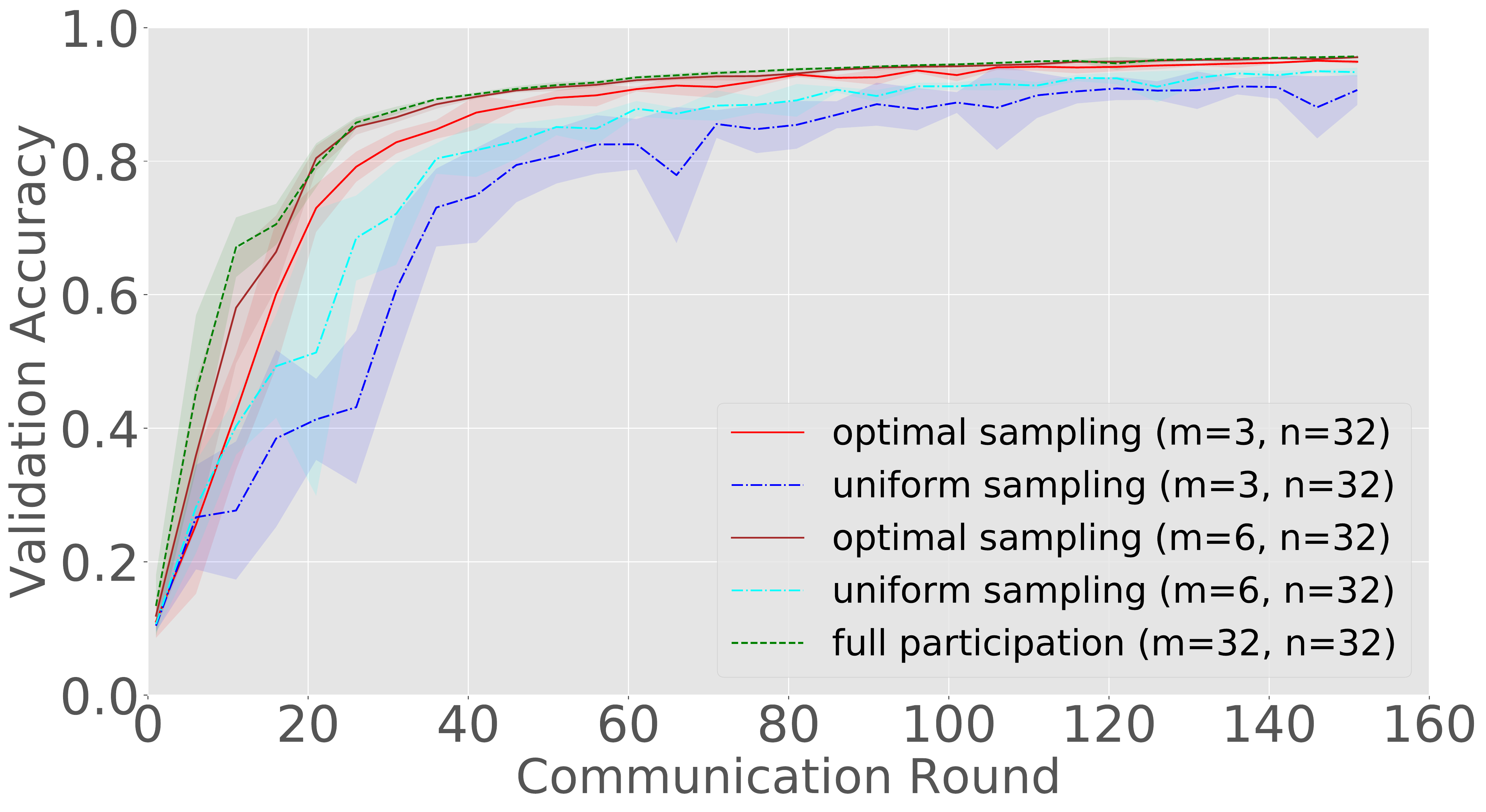}
\centering
    \includegraphics[width=0.24\textwidth]{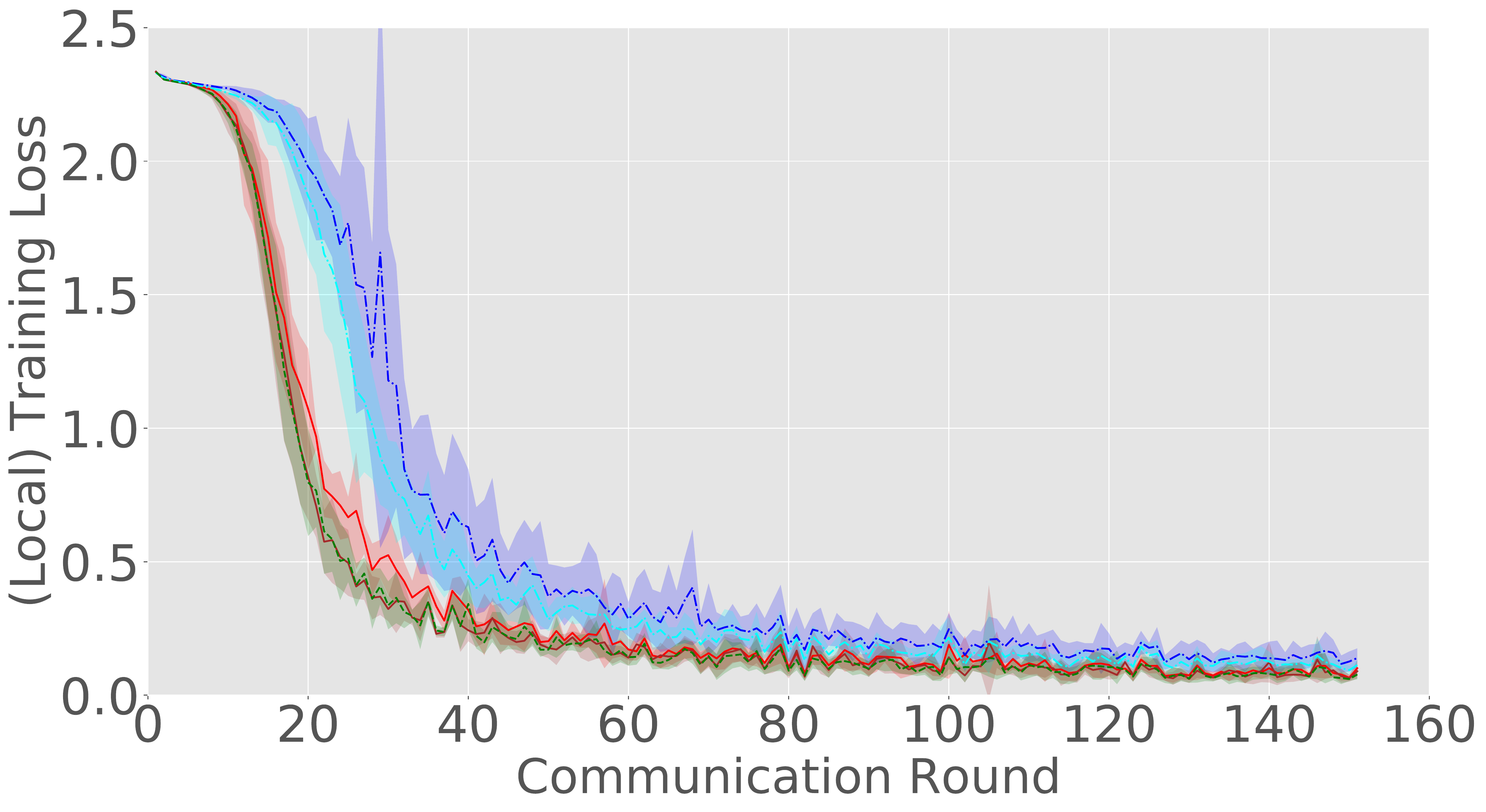}
\centering
    \includegraphics[width=0.24\textwidth]{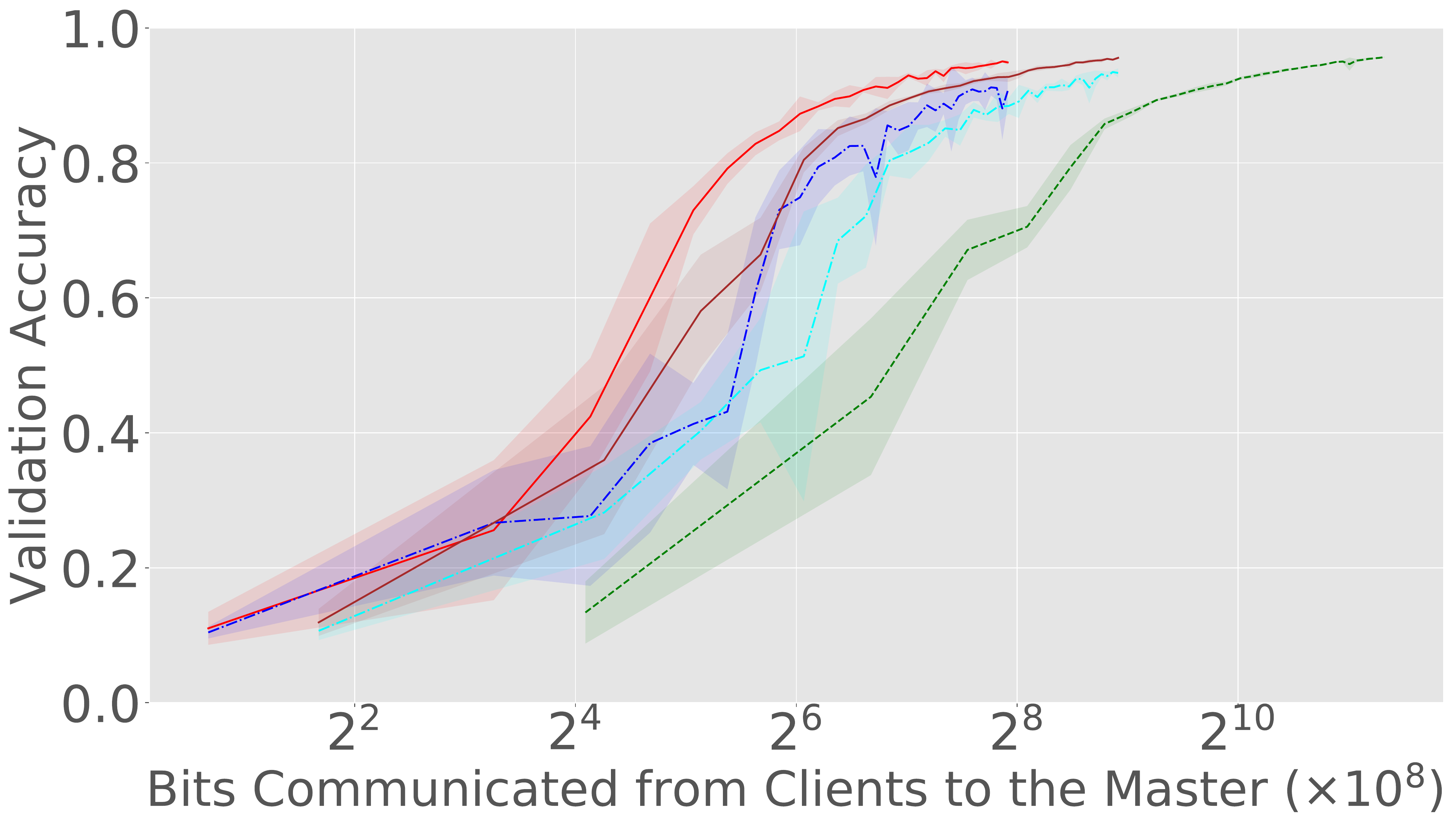}
\centering
    \includegraphics[width=0.24\textwidth]{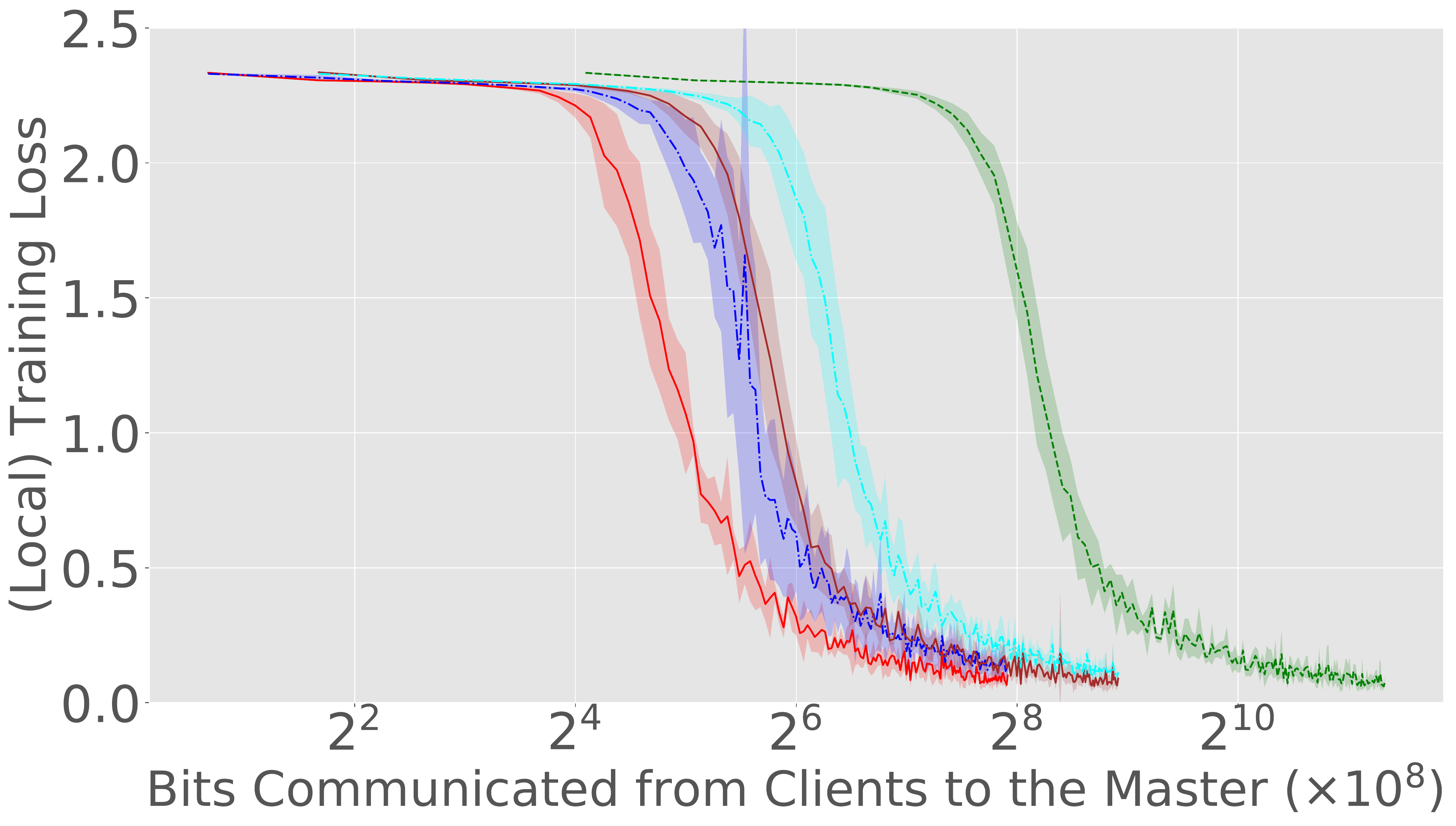}
\caption{\textcolor{black}{(FEMNIST Dataset 3, $n=32$) Validation accuracy and (local) training loss as a function of the number of communication rounds and the number of bits communicated from clients to the master.}}
\label{fig:cookup3}
\end{subfigure}
\end{figure}

\subsection{Federated EMNIST Dataset}\label{sec:femnist}
We first evaluate our method on the Federated EMNIST (FEMNIST) image dataset for image classification. Since it is a well-balanced dataset with data of similar quality on each client, we modify its training set by removing some images from some clients, in order to better simulate the conditions in which our proposed method brings significant theoretical improvements. As a result, we produce three unbalanced training sets\footnote{The aim of creating various unbalanced datasets is to show that optimal sampling has more performance gains over uniform sampling on more unbalanced datasets, since $\alpha^k$'s (defined in Equation \eqref{eq:alpha_k}) are more likely to be close to zero in this case. \textcolor{black}{These datasets are created using the following procedure. Let $s\in(0,1)$ and $a,b\in\mathbb{N_+}$ with $a<b$. For a given client with $n_c$ examples, we keep this client unchanged if $n_c\leq a$ or $n_c\geq b$, otherwise we remove this client from the dataset with probability $s$ or only keep $a$ randomly sampled examples in this client with probability $1-s$.}} as summarized in Figure~\ref{fig:datasets}. We use the same CNN model as the one used in \citep{mcmahan2017communication}. For validation, we use the unchanged EMNIST validation set, which consists of $40,832$ images. In each communication round, $n=32$ clients are sampled uniformly from the client pool, each of which then performs several SGD steps on its local training images for $1$ epoch with batch size $20$. \textcolor{black}{For partial participation, the expected number of clients allowed to communicate their updates back to the master is set to $m\in\{3,6\}$.} We use vanilla SGD optimizers with constant step sizes for both clients and the master, with $\eta_g=1$ and $\eta_l$ tuned on a holdout set. For full participation and optimal sampling, it turns out that $\eta_l=2^{-3}$ is the optimal local step size for all three datasets. For uniform sampling, the optimal is $\eta_l=2^{-5}$ for Dataset 1 and $\eta_l=2^{-4}$ for Datasets 2 and 3. We set $j_{\max}=4$ and include the extra communication costs in our results. The main results are shown in Figures~\ref{fig:cookup1}, \ref{fig:cookup2} and \ref{fig:cookup3}.

%
%

\subsection{Shakespeare Dataset}\label{sec:shakespeare}
We also evaluate our method on the Shakespeare text dataset for next character prediction. Unlike in the FEMNIST experiments, we do not change the number of examples held by each client in this dataset. The vocabulary set for this task consists of $86$ unique characters. The dataset contains $715$ clients, each corresponding to a character in Shakespeare's plays. We divide the text into batches such that each batch contains $8$ example sequences of length $5$. We use a two-hidden-layer GRU model with 256 units in each hidden layer. \textcolor{black}{We set $n\in\{32,128\},~m\in\{2,4,6,12\},~j_{max}=4$, and run several SGD steps for $1$ epoch on each client's local dataset in every communication round.} We use vanilla SGD optimizers with constant step sizes, with $\eta_g=1$ and $\eta_l$ tuned on a holdout set. For full participation and optimal sampling, it turns out that the optimal is $\eta_l=2^{-2}$. For uniform sampling, the optimal is $\eta_l=2^{-3}$. The main results are shown in Figures~\ref{fig:ss} and \ref{fig:ss128}.

\subsection{Discussions}
As predicted by our theory, the performance of {\tt FedAvg} with our proposed optimal client sampling strategy is in between that with full and uniform partial participation. For all datasets, the optimal sampling strategy performs slightly worse than but is still competitive with the full participation strategy in terms of the number of communication rounds: it almost reached the performance of full participation while only less than $10\%$ of the available clients communicate their updates back to the master (in the cases $m=2,3$). \textcolor{black}{As we increase the expected number $m$ of sampled clients, the performance of optimal sampling increases accordingly, which is consistent with our theory (e.g., Theorem~\ref{THM:FEDAVG_MAIN_nonconvex}) and with the observations from \cite{yang2021achieving}, and quickly becomes almost identical to that of full participation.} Note that the uniform sampling strategy performs significantly worse, which indicates that a careful choice of sampling probabilities can go a long way towards closing the gap between the performance of naive uniform sampling and full participation. 

Also, it can be seen that the performances of our optimal client sampling strategy with $m=6$ and $m=12$ match the performances of full participation in the cases $n=32$ and $n=128$, respectively, in terms of the number of communication rounds. We therefore conjecture that $m=\mathcal{O}(\sqrt{n})$ is sufficient for our optimal client sampling strategy to obtain identical validation accuracy to that of full participation in terms of the number of communication rounds. 

More importantly, and this was the main motivation of our work, our optimal sampling strategy is significantly better than both the uniform sampling and full participation strategies when we compare validation accuracy as a function of the number of bits communicated from clients to the master. For instance, on FEMNIST Dataset 1 (Figure~\ref{fig:cookup1}), while our optimal sampling approach with $m=3$ reached around 85\% validation accuracy after $2^6 \times 10^8$ communicated bits, neither the full sampling strategy nor the uniform sampling strategy with $m=3$ is able to exceed 40\% validation accuracy within the same communication budget. Indeed, to reach the same 85\% validation accuracy, full participation approach needs to communicate more than $2^9 \times 10^8$ bits, i.e., $8\times $ more, and uniform sampling approach needs to communicate about the same number of bits as full participation or even more. The results for FEMNIST Datasets 2 and 3 and for the Shakespeare dataset are of a similar qualitative nature, showing that these conclusions are robust across the datasets considered.

Finally, it is also worth noting that the empirical results from Sections \ref{sec:femnist} and \ref{sec:shakespeare} confirm that our optimal sampling strategy allows for larger step sizes than uniform sampling, as the hyperparameter search returns larger step sizes $\eta_l$ for optimal sampling than for uniform sampling. 

In Appendix \ref{appendix:cifar100}, we present an additional experiment on the Federated CIFAR100 dataset from LEAF. The Federated CIFAR100 dataset is a balanced dataset, where every client holds the same number of training images. In this setting, letting all clients perform $1$ epoch of local training means that all clients have the same number of local steps in each round. We show that our optimal client sampling scheme still achieves better performance than uniform sampling on this balanced dataset.

\begin{figure}
    \begin{center}
    \includegraphics[width=0.23\textwidth]{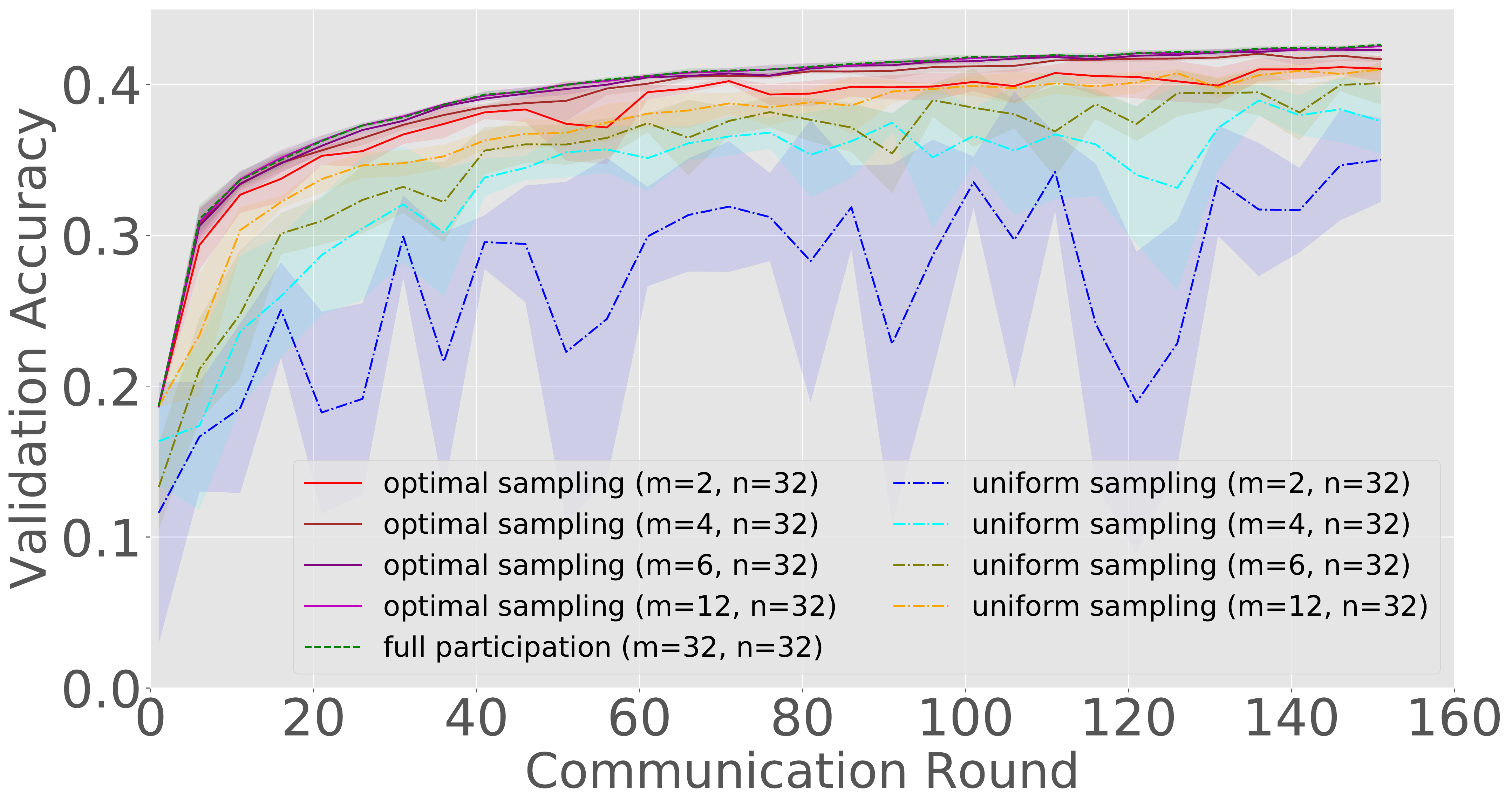}
    \includegraphics[width=0.23\textwidth]{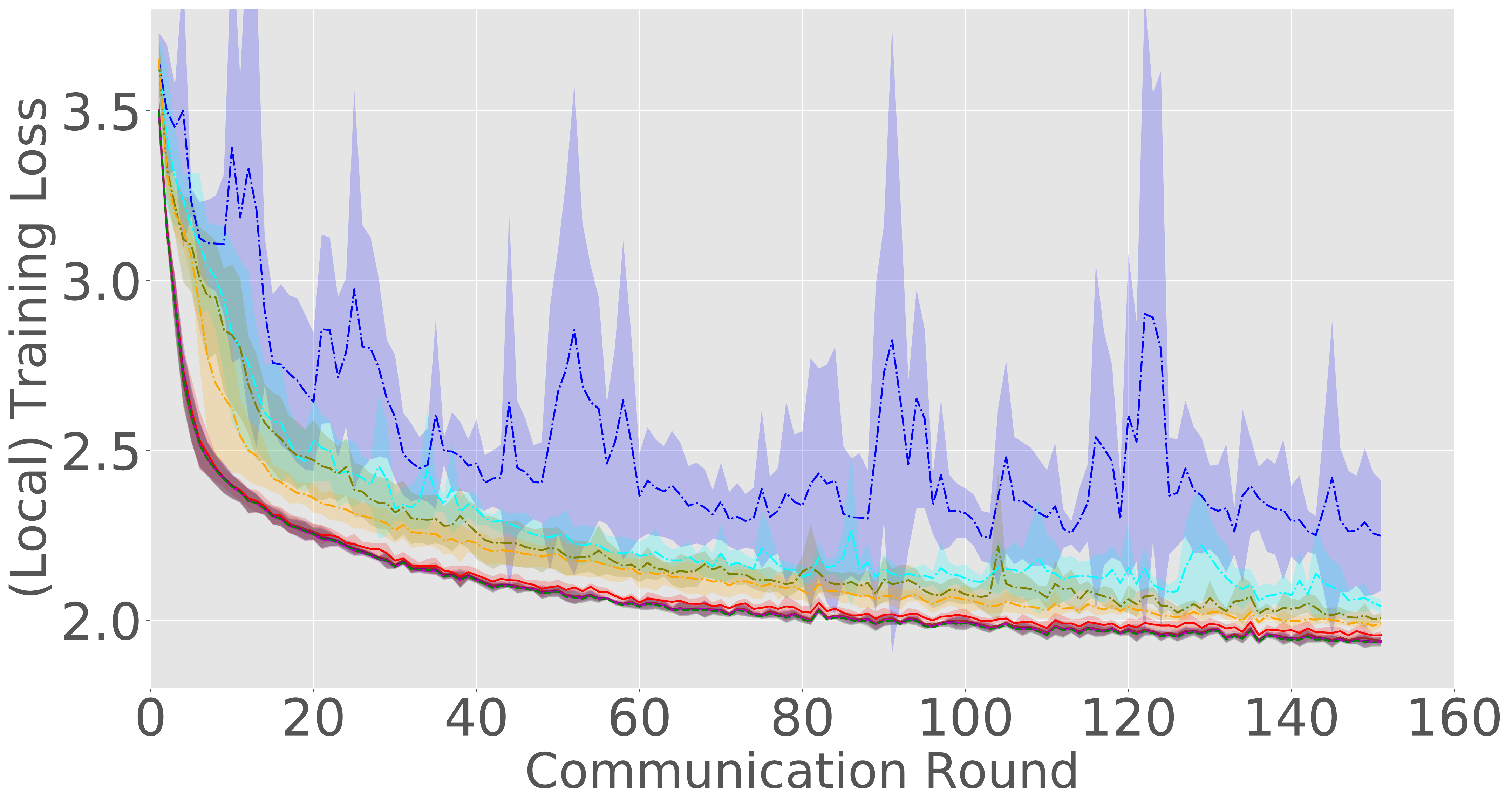}
    \includegraphics[width=0.23\textwidth]{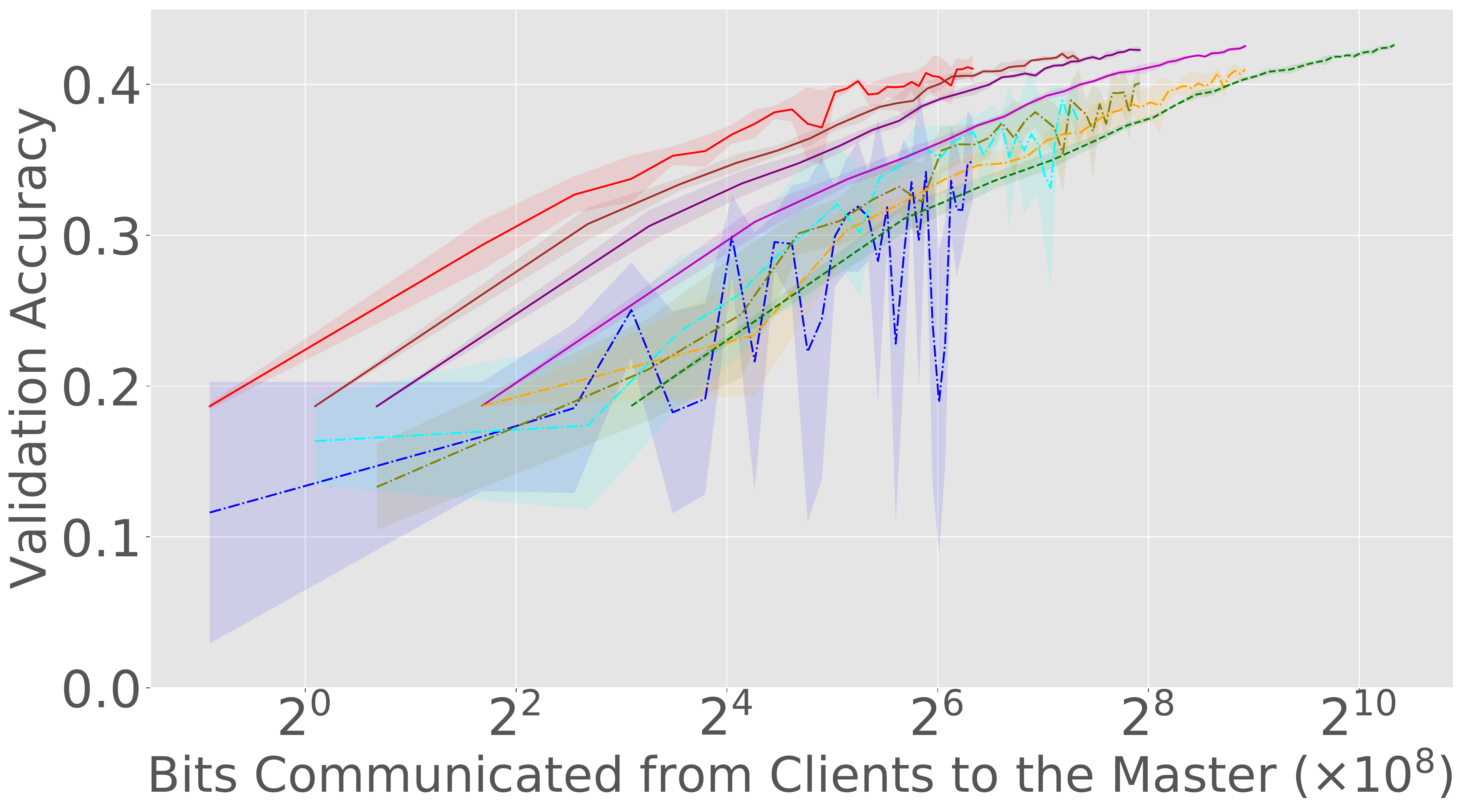}
    \includegraphics[width=0.23\textwidth]{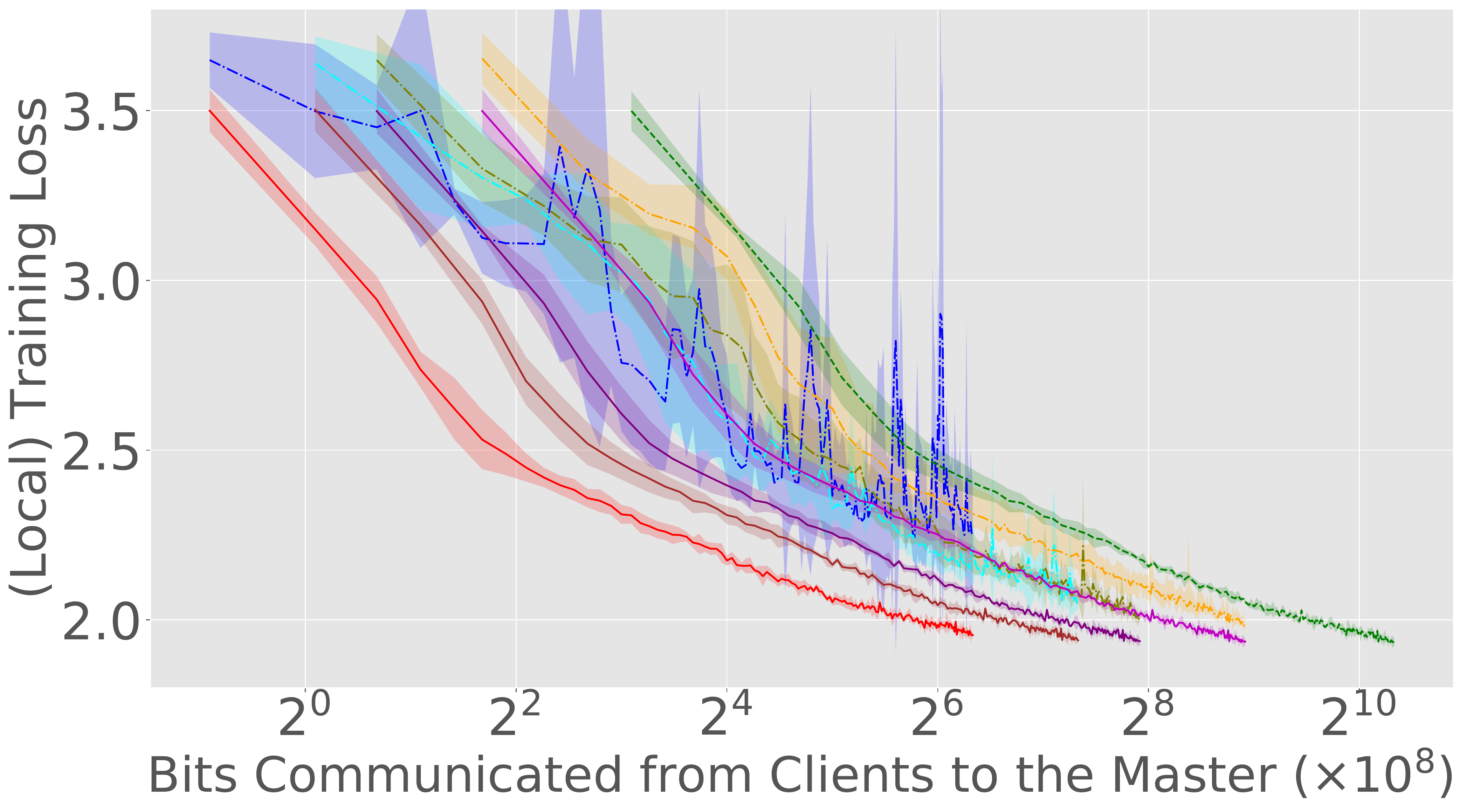}
    \end{center}
\caption{\textcolor{black}{(Shakespeare Dataset, $n=32$) Validation accuracy and (local) training loss as a function of the number of communication rounds and the number of bits communicated from clients to the master.}}
\label{fig:ss}
\end{figure}

\begin{figure}
    \begin{center}
    \includegraphics[width=0.23\textwidth]{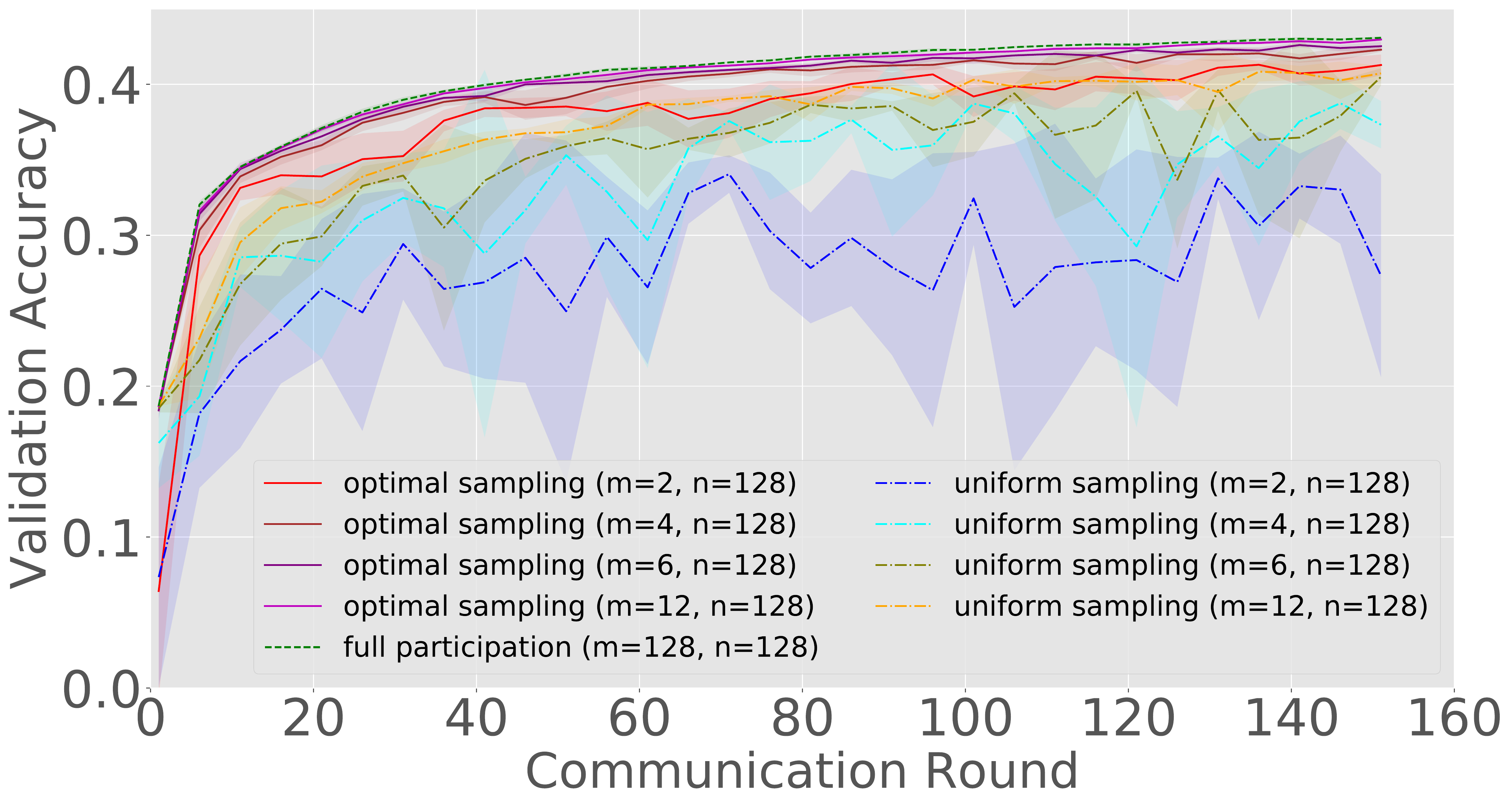}
    \includegraphics[width=0.23\textwidth]{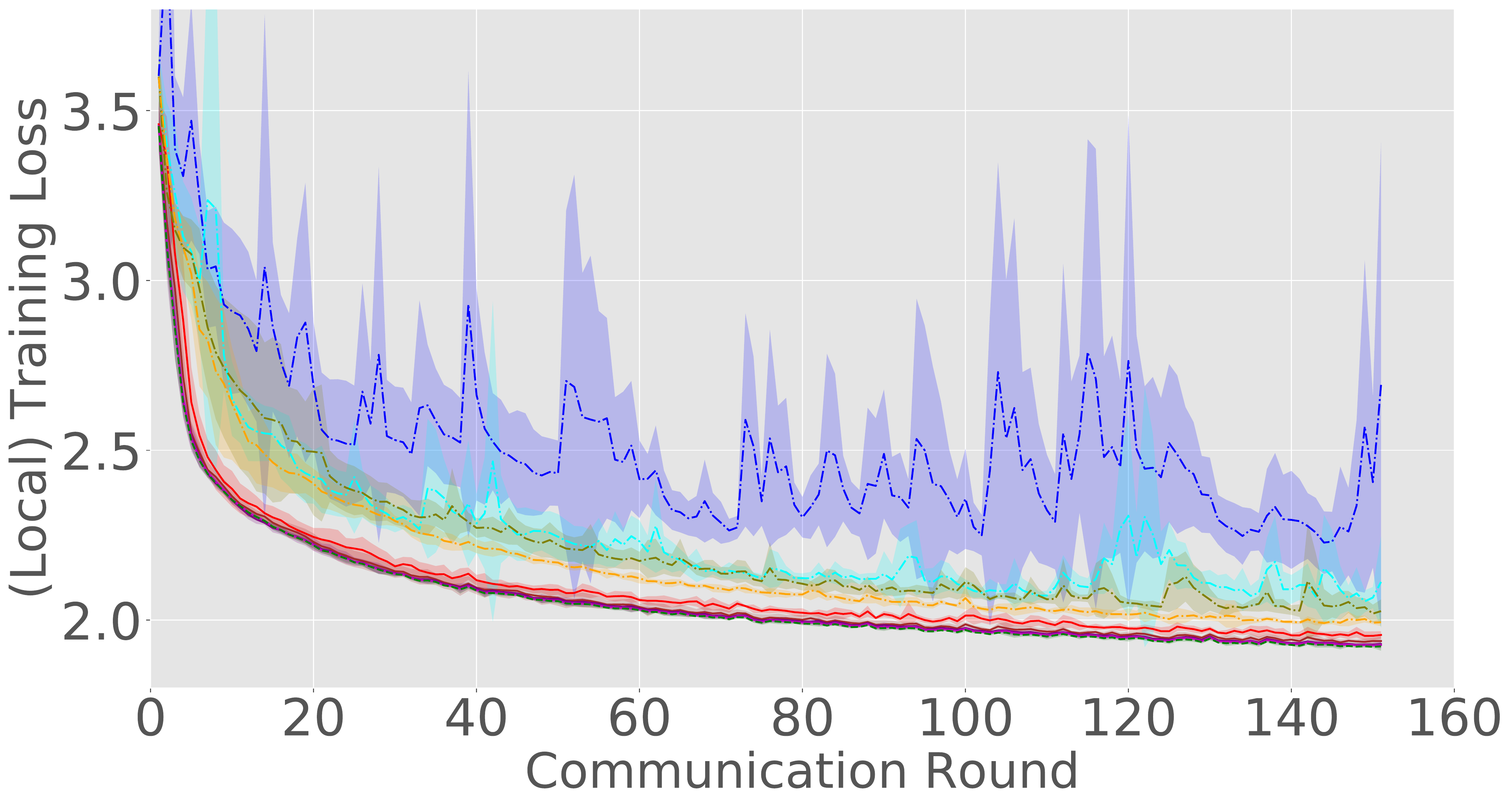}
    \includegraphics[width=0.23\textwidth]{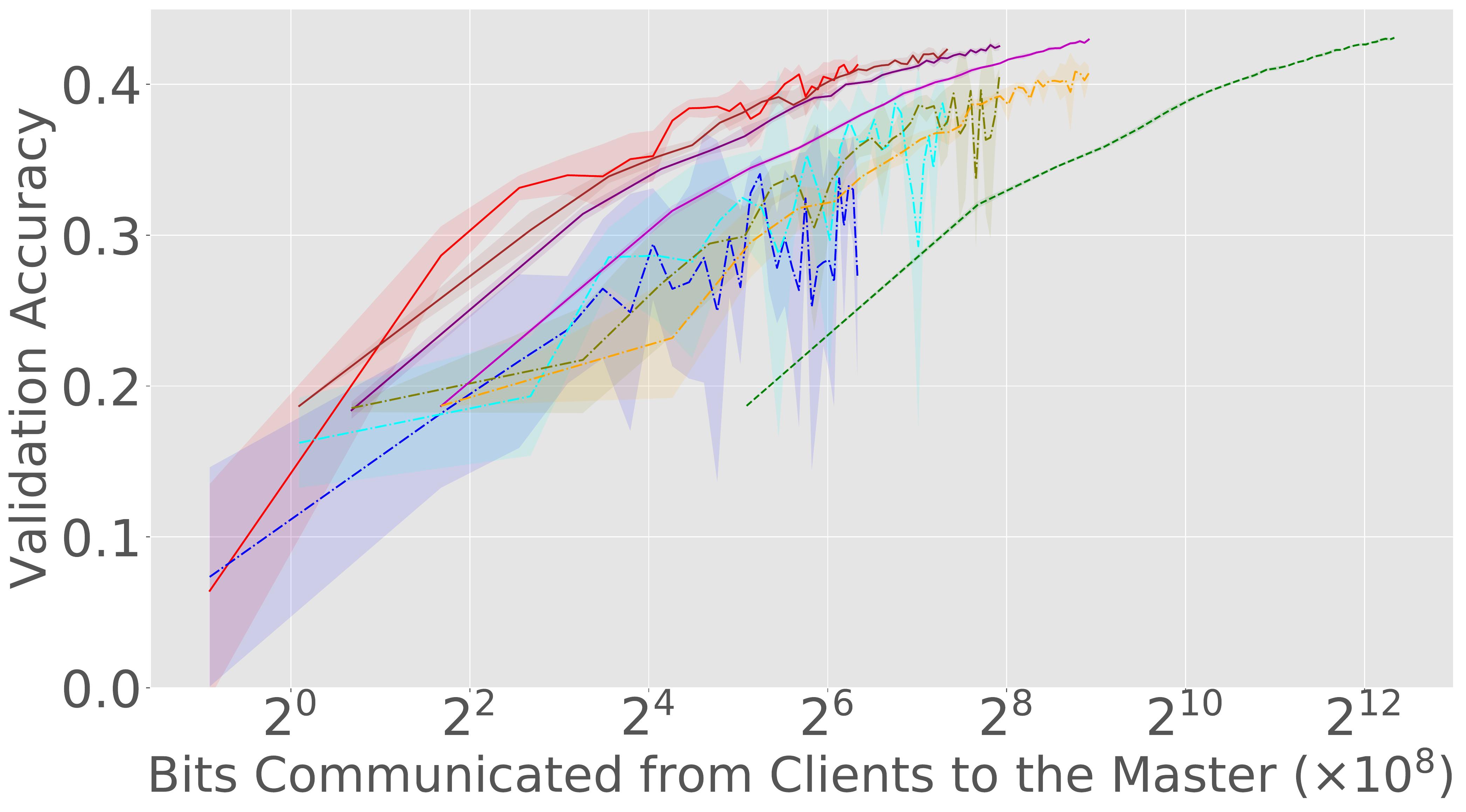}
    \includegraphics[width=0.23\textwidth]{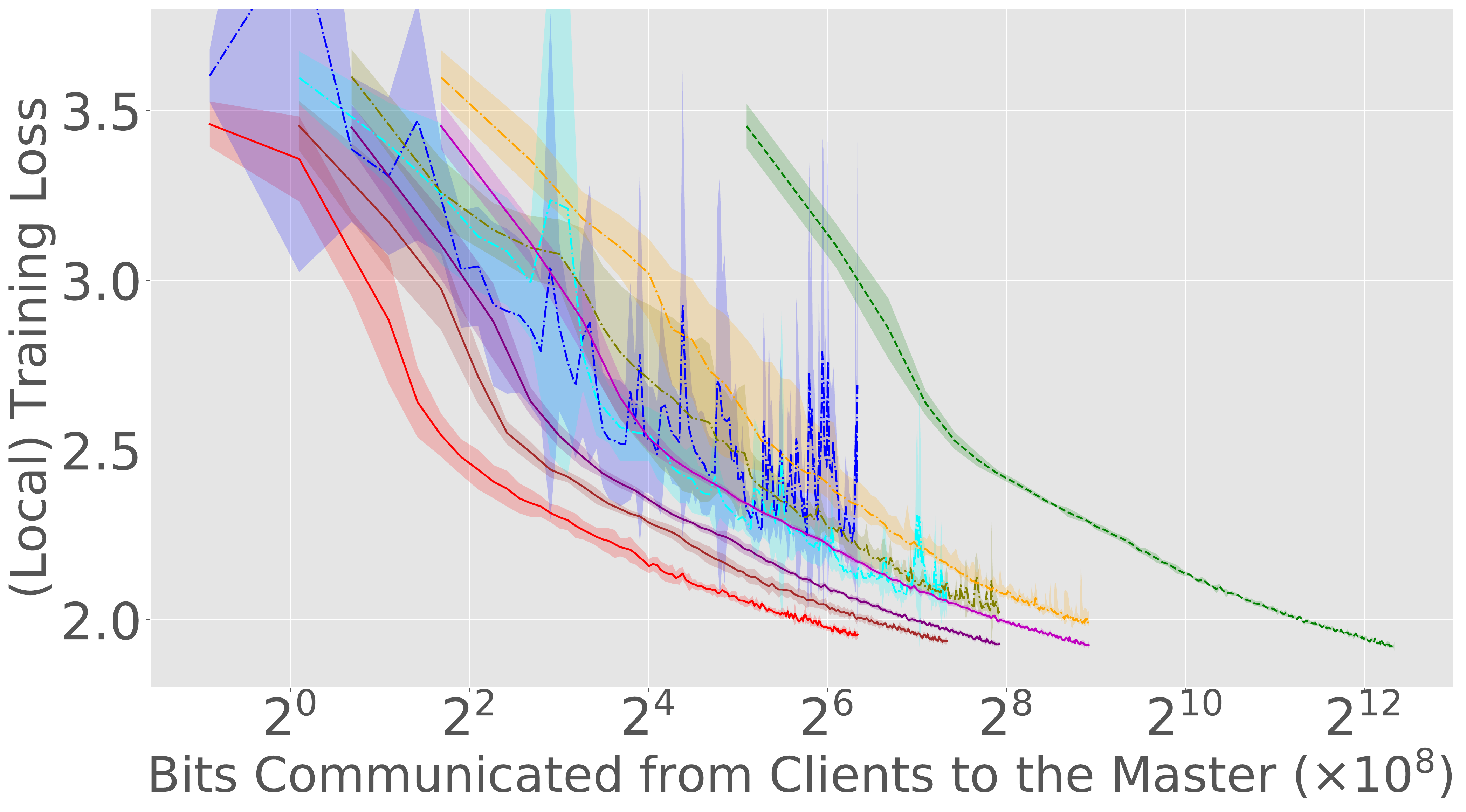}
    \end{center}
\caption{\textcolor{black}{(Shakespeare Dataset, $n=128$) Validation accuracy and (local) training loss as a function of the number of communication rounds and the number of bits communicated from clients to the master.}}
\label{fig:ss128}
\end{figure}

\section{Conclusion and Future Work}\label{sec:conclusion}
In this work, we have proposed a principled optimal client sampling strategy to address the communication bottleneck issue of federated learning. Our optimal client sampling can be computed using a closed-form formula by aggregating only the norms of the updates. Furthermore, our method is the first principled importance client sampling strategy that is compatible with stateless clients and secure aggregation. We have obtained convergence guarantees for our method with {\tt DSGD} and {\tt FedAvg} with relaxed assumptions, and have performed empirical evaluations of our method on federated datasets from the LEAF database. The empirical results show that our method is superior to uniform sampling and close to full participation, which corroborates our theoretical analysis. We believe that our proposed optimal client sampling scheme will be useful in reducing communication costs in real-world FL systems.

Some directions for future work are as follows: 
\begin{itemize}
    \item A straightforward extension would be to combine our proposed optimal sampling approach with communication compression methods to further reduce the sizes of communicated updates.
    \item In the settings where the communication latency is high, our proposed method may not be effective in reducing the real communication time. It would be interesting to extend our optimal client sampling strategy to take into account the constraints of local clients (e.g., computational speed, network bandwidth, and communication latency).
\end{itemize}



\subsubsection*{Acknowledgments}
We thank Jakub Kone\v{c}n\'{y} for helpful discussions and comments. Most of the work was done when WC was a research intern at KAUST and when SH was a PhD student at KAUST.

\bibliography{tmlr}
\bibliographystyle{tmlr}

\clearpage
\appendix

    \section{Proof of Lemma \ref{LEM:UPPERV}}\label{appendix:lemma proof}
        \begin{proof}
          Our proof technique can be seen as an extended version of that in~\citep{horvath2018nonconvex}.  Let $1_{i\in S} = 1$ if $i\in S$ and $1_{i\in S} = 0$ otherwise. Likewise, let $1_{i,j\in S} = 1$ if $i,j\in S$ and  $1_{i,j\in S} = 0$ otherwise. Note that $\E{1_{i\in S}} = p_i$ and $\E{1_{i,j\in S}}=p_{ij}$. Next, let us compute the mean of $ X \coloneqq \sum_{i\in S}\frac{w_i \zeta_i}{p_i}$:
            \begin{align*}
                    \E{X} 
                    = \E{ \sum_{i\in S}\frac{w_i \zeta_i}{p_i} } 
                    = \E{ \sum_{i=1}^n \frac{w_i \zeta_i}{p_i} 1_{i\in S} } 
                    = \sum_{i=1}^n \frac{w_i \zeta_i}{p_i} \E{1_{i\in S}} 
                    = \sum_{i=1}^n w_i \zeta_i
                    = \Tilde{\zeta}.
            \end{align*}
            Let $\boldsymbol{A}=[a_1,\dots,a_n]\in \mathbb{R}^{d\times n}$, where $a_i=\frac{w_i\zeta_i}{p_i}$, and let $e$ be the vector of all ones in $\mathbb{R}^n$. We now write the variance of $X$ in a form which will be convenient to establish a bound:
            \begin{equation}\label{eq:variance_of_X}
                \begin{split}
                    \E{\norm{X - \E{X}}^2}
                    &= \E{\norm{X}^2} - \norm{ \E{X} }^2 \\
                    &=\E{\norm{\sum_{i\in S}\frac{w_i \zeta_i}{p_i}}^2} - \norm{\Tilde{\zeta}}^2 \\
                    &=\E{ \sum_{i,j} \frac{w_i\zeta_i^\top}{p_i}\frac{w_j\zeta_j}{p_j}  1_{i,j\in S} } - \norm{\Tilde{\zeta}}^2 \\
                    &= \sum_{i,j} p_{ij} \frac{w_i\zeta_i^\top}{p_i}\frac{w_j\zeta_j}{p_j} - \sum_{i,j} w_iw_j\zeta_i^\top\zeta_j \\
                    &= \sum_{i,j} (p_{ij}-p_ip_j)a_i^\top a_j \\
                    &= e^\top ((\boldsymbol{P}-pp^\top) \circ \boldsymbol{A}^\top\boldsymbol{A})e.
                \end{split}
            \end{equation}
            
            Since, by assumption, we have $\boldsymbol{P}-pp^\top \preceq \textbf{Diag}(p\circ v)$, we can further bound
            \begin{align*}
                e^\top ((\boldsymbol{P}-pp^\top) \circ \boldsymbol{A}^\top\boldsymbol{A})e \leq e^\top (\textbf{Diag}(p\circ v)\circ \boldsymbol{A}^\top\boldsymbol{A}) e = \sum_{i=1}^n p_iv_i\norm{a_i}^2.
            \end{align*}
            To obtain \eqref{eq:key_inequality}, it remains to combine this with \eqref{eq:variance_of_X}.
            The inequality $v_i\ \geq 1-p_i$ follows by comparing the diagonal elements of the two matrices in \eqref{eq:ESO}.
            Consider now the independent sampling. Clearly,
            \begin{align*}
                \boldsymbol{P} - pp^\top = 
                \begin{bmatrix}
                    p_{1}(1-p_1) & 0& \dots  & 0 \\
                    0 & p_{2}(1-p_2) & \dots  & 0 \\
                    \vdots & \vdots & \ddots & \vdots \\
                    0 & 0  & \dots  & p_{n}(1-p_n)
                \end{bmatrix} 
                =\textbf{Diag}(p_1 v_1,\dots, p_n v_n),
            \end{align*}
            which implies $v_i=  1-p_i$.
        \end{proof}
    
    \section{The Improvement Factor for Optimal Client Sampling} \label{appendix:improvement factor}
        By Lemma \ref{LEM:UPPERV}, the independent sampling (which operates by independently flipping a coin and with probability $p_i$ includes element $i$ into $S$) is optimal. In addition, for independent sampling, \eqref{eq:key_inequality} holds as equality. Thus, letting $\Tilde{U}_i^k=w_i\U$, we have
        \begin{equation}\label{app:varianceSGD}
            \Tilde{\alpha}_{S^k} \coloneqq\E{ \norm{ \sum_{i\in S^k}\frac{w_i}{p_i^k}\U - \sum_{i=1}^n w_i \U }^2 } =\E{ \norm{ \sum_{i\in S^k}\frac{1}{p_i^k}\Tilde{U}_i^k - \sum_{i=1}^n \Tilde{U}_i^k }^2 } =\E{ \sum_{i=1}^n \frac{1-p_i^k}{p_i^k} \norm{ \Tilde{U}_i^k }^2 }.
        \end{equation}
        The optimal probabilities are obtained by minimizing \eqref{app:varianceSGD} w.r.t. $\{p_i^k\}_{i=1}^n$ subject to the constraints $0 \leq p_i^k \leq 1$ and $m \geq b^k = \sum_{i=1}^n p_i^k$. 
        \begin{lemma}
            The optimization problem
            \begin{equation}\label{eq:proxy_optim_obj}
                \min_{\{p_i^k\}_{i=1}^n} \Tilde{\alpha}_{S^k}(\{p_i^k\}_{i=1}^n)\quad\text{s.t.}\quad 0 \leq p_i^k \leq 1,~\forall i=1,\cdots,n\quad\text{and}\quad m \geq \sum_{i=1}^n p_i^k
            \end{equation}
            has the following closed-form solution:
            \begin{align}
        \label{unique_sol_main1}
            p_i^k = 
            \begin{cases}
                (m +  l - n)\frac{\norm{\Tilde{U}_i^k}}{\sum_{j=1}^l \norm{\Tilde{U}_{(j)}^k}}, &\quad\text{if } i \notin A^k \\
                1, &\quad\text{if } i \in A^k
            \end{cases},
        \end{align} 
        where $\norm{\Tilde{U}_{(j)}^k}$ is the $j$-th largest value among the values $\norm{\Tilde{U}_1^k}, \norm{\Tilde{U}_2^k}, \dots, \norm{\Tilde{U}_n^k}$, $l$ is the largest integer  for which $0<m+l-n \leq \frac{\sum_{i =1}^l \norm{\Tilde{U}_{(i)}^k}}{\norm{\Tilde{U}_{(l)}^k}}$ (note that this inequality at least holds for $l= n-m+1$), and $A^k$ contains indices $i$ such that $\norm{\Tilde{U}_i^k} \geq \norm{\Tilde{U}_{(l+1)}^k}$.
        \end{lemma}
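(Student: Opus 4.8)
The plan is to solve the constrained optimization problem \eqref{eq:proxy_optim_obj} directly. First I would observe that the objective $\tilde\alpha_{S^k} = \sum_{i=1}^n \frac{1-p_i^k}{p_i^k}\norm{\tilde U_i^k}^2 = \sum_{i=1}^n \frac{\norm{\tilde U_i^k}^2}{p_i^k} - \sum_{i=1}^n \norm{\tilde U_i^k}^2$ is strictly convex and decreasing in each $p_i^k$ on $(0,1]$, so at the optimum the budget constraint must be active, i.e. $\sum_i p_i^k = m$ (assuming not all updates are zero; the degenerate cases are handled separately). This is a convex program with linear constraints, so the KKT conditions are necessary and sufficient. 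I would write out the Lagrangian with multiplier $\lambda$ for the equality constraint $\sum_i p_i^k = m$ and multipliers $\mu_i \ge 0$ for the upper bounds $p_i^k \le 1$ (the lower bounds $p_i^k \ge 0$ will be inactive since the objective blows up as $p_i^k \to 0^+$). Stationarity gives $-\norm{\tilde U_i^k}^2/(p_i^k)^2 + \lambda - \mu_i = 0$.

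Next I would analyze the two cases coming from complementary slackness. For indices $i$ with $p_i^k < 1$ we have $\mu_i = 0$, hence $p_i^k = \norm{\tilde U_i^k}/\sqrt{\lambda}$, so these probabilities are proportional to the update norms $\norm{\tilde U_i^k}$. For indices $i$ with $p_i^k = 1$ we have $\mu_i \ge 0$, which combined with stationarity forces $\norm{\tilde U_i^k}^2 \ge \lambda$, i.e. $\norm{\tilde U_i^k} \ge \sqrt{\lambda}$. So the optimal solution has a threshold structure: the clients with the largest norms get $p_i^k = 1$, and the rest get probabilities proportional to their norms. Let $A^k$ be the set of ``saturated'' clients (those with $p_i^k=1$) and let $l = n - |A^k|$ be the number of unsaturated clients. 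Since unsaturated clients have the $l$ smallest norms, ordering $\norm{\tilde U_{(1)}^k} \le \cdots \le \norm{\tilde U_{(n)}^k}$, the budget constraint becomes $|A^k| + \sum_{i=1}^l \norm{\tilde U_{(i)}^k}/\sqrt{\lambda} = m$, which solves to $\sqrt{\lambda} = \sum_{i=1}^l \norm{\tilde U_{(i)}^k}/(m - |A^k|) = \sum_{i=1}^l \norm{\tilde U_{(i)}^k}/(m + l - n)$. Substituting back yields exactly \eqref{unique_sol_main1}: $p_i^k = (m+l-n)\norm{\tilde U_i^k}/\sum_{j=1}^l \norm{\tilde U_{(j)}^k}$ for $i \notin A^k$.

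It then remains to pin down $l$ and verify feasibility/consistency of this guess. The constraints are: (i) $p_i^k \le 1$ for the largest unsaturated client, i.e. $(m+l-n)\norm{\tilde U_{(l)}^k}/\sum_{j=1}^l \norm{\tilde U_{(j)}^k} \le 1$, equivalently $m + l - n \le \sum_{j=1}^l \norm{\tilde U_{(j)}^k}/\norm{\tilde U_{(l)}^k}$; (ii) $p_i^k = 1$ is genuinely optimal for the saturated clients, which requires $\norm{\tilde U_i^k} \ge \sqrt{\lambda}$ for $i \in A^k$, i.e. $\norm{\tilde U_{(l+1)}^k} \ge \sum_{j=1}^l \norm{\tilde U_{(j)}^k}/(m+l-n)$, which is the complementary inequality; and (iii) positivity $m + l - n > 0$. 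I would argue that the largest $l$ satisfying $0 < m+l-n \le \sum_{i=1}^l \norm{\tilde U_{(i)}^k}/\norm{\tilde U_{(l)}^k}$ is the right choice and that it automatically makes (ii) hold — this monotonicity/consistency argument, showing that the threshold $l$ is well-defined and that the resulting $A^k$ is exactly $\{i : \norm{\tilde U_i^k} \ge \norm{\tilde U_{(l+1)}^k}\}$, is the main obstacle, since one must check the inequality $m+l-n \le \sum_{i=1}^l\norm{\tilde U_{(i)}^k}/\norm{\tilde U_{(l)}^k}$ behaves well as $l$ increases (it need not be monotone, so one takes the largest valid $l$) and that this choice is compatible with the complementary slackness inequality at index $l+1$. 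The note that the inequality always holds at $l = n-m+1$ (where $m+l-n=1 \le \sum/\norm{\tilde U_{(l)}^k}$ trivially since the sum has $l \ge 1$ terms each $\ge \norm{\tilde U_{(l)}^k}$... more precisely $\sum_{i=1}^{n-m+1}\norm{\tilde U_{(i)}^k} \ge \norm{\tilde U_{(n-m+1)}^k}$) guarantees the set of valid $l$ is nonempty. Finally, since the program is strictly convex on the relevant domain, the KKT point is the unique global minimizer, establishing the claimed closed form.
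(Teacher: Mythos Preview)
Your proposal is correct and follows essentially the same route as the paper: both reduce the objective to $\sum_i \|\tilde U_i^k\|^2/p_i^k$ (up to an additive constant), set up the Lagrangian, and invoke the KKT conditions to obtain the threshold-type solution. The paper's own proof is in fact terser than yours---it simply writes down the Lagrangian, states that KKT applies because the constraints are linear and the domain is convex, and then says the closed form ``can be deduced from the KKT conditions,'' citing the analogous argument in Horv\'ath--Richt\'arik (2018, Lemma~2); you have explicitly worked out the stationarity/complementary-slackness case split and the determination of $l$ that the paper leaves implicit. One small slip: with the sign convention you wrote for the stationarity equation, $-\|\tilde U_i^k\|^2/(p_i^k)^2 + \lambda - \mu_i = 0$, the saturated case would give $\|\tilde U_i^k\|^2 \le \lambda$ rather than $\ge \lambda$; the conclusion you then state is the correct one, so this is just a typo in the sign of the $\mu_i$ term.
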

        \begin{proof}
            This proof uses an argument similar to that in the proof of Lemma 2 in~\cite{horvath2018nonconvex}. We first show that \eqref{unique_sol_main1} is the solution to the following optimization problem:
            \begin{equation*}
                \min_{\{p_i^k\}_{i=1}^n} \Omega_{S^k}(\{p_i^k\}_{i=1}^n)\coloneqq\E{ \sum_{i=1}^n \frac{\norm{ \Tilde{U}_i^k }^2}{p_i^k} }\quad\text{s.t.}\quad 0 \leq p_i^k \leq 1,~\forall i=1,\cdots,n\quad\text{and}\quad m \geq \sum_{i=1}^n p_i^k.
            \end{equation*}
            The Lagrangian of this optimization problem is given by
            \begin{equation*}
                L(\{p_i^k\}_{i=1}^n,\{\lambda_i\}_{i=1}^n,\{u_i\}_{i=1}^n,y)=\Omega_{S^k}(\{p_i^k\}_{i=1}^n)-\sum_{i=1}^n \lambda_i p_i - \sum_{i=1}^n u_i (1-p_i) - y \left(m - \sum_{i=1}^n p_i^k\right).
            \end{equation*}
            Since all constraints are linear and the support of $\{p_i^k\}_{i=1}^n$ is convex, the KKT conditions hold. Therefore, the solution \eqref{unique_sol_main1} can be deduced from the KKT conditions. Now, notice that $\Tilde{\alpha}_{S^k}(\{p_i^k\}_{i=1}^n)$ and $\Omega_{S^k}(\{p_i^k\}_{i=1}^n)$ are equal up to a constant $\E{ \sum_{i=1}^n \norm{ \Tilde{U}_i^k }^2 }$:
            \begin{equation*}
                \Tilde{\alpha}_{S^k}(\{p_i^k\}_{i=1}^n)=\Omega_{S^k}(\{p_i^k\}_{i=1}^n) - \E{ \sum_{i=1}^n \norm{ \Tilde{U}_i^k }^2 }.
            \end{equation*}
            This indicates that \eqref{unique_sol_main1} is also the solution to the original optimization problem \eqref{eq:proxy_optim_obj}.
        \end{proof}
        
        Plugging the optimal probabilities obtained in \eqref{unique_sol_main1} into \eqref{app:varianceSGD} gives
        \begin{align*}
                \Tilde{\alpha}_{S^k}^\star
                =\E{ \sum_{i=1}^n \frac{1}{p_i^k} \norm{ \Tilde{U}_i^k }^2 - \sum_{i=1}^n \norm{ \Tilde{U}_i^k }^2 }
                =\E{ \frac{1}{m-(n-l)} \left( \sum_{i=1}^l \norm{ \Tilde{U}_{(i)}^k } \right)^2 - \sum_{i=1}^l \norm{ \Tilde{U}_{(i)}^k }^2 }.
        \end{align*}
        With $m\norm{\Tilde{U}_{(n)}^k}\leq \sum_{i=1}^n \norm{\Tilde{U}_i^k}$, we have
        \begin{align*}
            \Tilde{\alpha}_{S^k}^\star 
            =\E{ \frac{1}{m} \left( \sum_{i=1}^n \norm{ \Tilde{U}_i^k } \right)^2 - \sum_{i=1}^n \norm{ \Tilde{U}_i^k }^2 }
            &=\E{ \frac{1}{m} \left( \sum_{i=1}^n \norm{ \Tilde{U}_i^k } \right)^2 \left( 1 - m \frac{\sum_{i=1}^n \norm{ \Tilde{U}_i^k }^2}{\left( \sum_{i=1}^n \norm{ \Tilde{U}_i^k } \right)^2} \right) }\\
            &\leq \frac{n-m}{nm}\E{ \left( \sum_{i=1}^n \norm{ \Tilde{U}_i^k } \right)^2 }.
        \end{align*}
        For independent uniform sampling $U^k\sim\mathbb{U}$ ($p_i^U=\frac{m}{n}$ for all $i$), we have
        \begin{align*}
            \Tilde{\alpha}_{U^k} 
            \coloneqq\E{ \norm{ \sum_{i\in U^k}\frac{w_i}{p_i^U}\U - \sum_{i=1}^n w_i \U }^2 } 
            =\E{ \sum_{i=1}^n \frac{1-\frac{m}{n}}{\frac{m}{n}} \norm{ \Tilde{U}_i^k }^2 }
            = \frac{n-m}{m}\E{ \sum_{i=1}^n  \norm{ \Tilde{U}_i^k }^2 }.
        \end{align*}
        Putting them together gives the improvement factor:
        \begin{align*}
            \alpha^k \coloneqq
            \frac{\Tilde{\alpha}_{S^k}^\star}{\Tilde{\alpha}_{U^k}}
            =\frac{
               \E{ \norm{ \sum_{i\in S^k}\frac{w_i}{p_i^k}\U - \sum_{i=1}^n w_i \U }^2 }
            }
            {
               \E{ \norm{ \sum_{i\in U^k}\frac{w_i}{p_i^U}\U - \sum_{i=1}^n w_i \U }^2 }
            }
            \leq \frac{
               \E{ \left( \sum_{i=1}^n \norm{ \Tilde{U}_i^k } \right)^2 }
            }
            {
                n\E{\sum_{i=1}^n  \norm{ \Tilde{U}_i^k }^2 }
            }
            \leq 1.
        \end{align*}
        The upper bound is attained when all $\norm{\Tilde{U}_i^k}$ are identical. Note that the lower bound $0$ can also be attained in the case where the number of non-zero updates is at most $m$. These considerations are discussed in the main paper.

    \section{{\tt DSGD} with Optimal Client Sampling}\label{appendix:proof_dsgd}

    \subsection{Proof of Theorem \ref{THM:DSGD_MAIN}}\label{appendix:proof_dsgd_convex}
        
        \begin{proof}
        $L$-smoothness of $f_i$ and the assumption on the gradient  imply that the inequality $$\E{\norm{g_i^k}^2}\leq 2L(1+M)(f_i(x^k)-f_i(x^\star)+Z_i) + \sigma^2$$ holds for all $ k\geq0$.         We first take expectations over $x^{k+1}$ conditioned on $x^k$ and over the sampling $S^k$:
        \begin{align*}
               \E{\norm{r^{k+1}}^2}
                &= \norm{r^k}^2 - 2\eta^k\E{ \left< \sum_{i\in S^k}\frac{w_i}{p_i^k}g_i^k,r^k \right> } + (\eta^k)^2\E{ \norm{ \sum_{i\in S^k}\frac{w_i}{p_i^k}g_i^k }^2 } \\
                &= \norm{r^k}^2 - 2\eta^k \left< \nabla f(x^k),r^k \right> + (\eta^k)^2 \left(\E{ \norm{ \sum_{i\in S^k}\frac{w_i}{p_i^k}g_i^k - \sum_{i=1}^n w_i g_i^k }^2} + \E{ \norm{ \sum_{i=1}^n w_i g_i^k }^2 } \right) \\
                &\leq (1-\mu\eta^k)\norm{r^k}^2 - 2\eta^k \left( f(x^k)-f^\star \right) + (\eta^k)^2 \left(\E{ \norm{ \sum_{i\in S^k}\frac{w_i}{p_i^k}g_i^k - \sum_{i=1}^n w_i g_i^k }^2} +\E{ \norm{ \sum_{i=1}^n w_i g_i^k }^2 } \right),
        \end{align*}
        where
        \begin{align*}
               \E{ \norm{ \sum_{i\in S^k}\frac{w_i}{p_i^k}g_i^k - \sum_{i=1}^n w_i g_i^k }^2}
                &= \alpha^k \frac{n-m}{m}\E{ \sum_{i=1}^n w_i^2 \norm{ g_i^k }^2 } \\
                &= \alpha^k \frac{n-m}{m}\E{ \sum_{i=1}^n w_i^2 \left(\norm{ g_i^k - \nabla f_i(x^k) }^2 + \norm{\nabla f_i(x^k)}^2\right)} \\
                &= \alpha^k \frac{n-m}{m}\E{ \sum_{i=1}^n w_i^2 \left(\norm{ \xi_i^k }^2 + \norm{\nabla f_i(x^k)}^2\right)} \\
                &\leq \alpha^k \frac{n-m}{m} \sum_{i=1}^n w_i^2 \left( 2L(1+M)(f_i(x^k)-f_i(x^\star)+Z_i)+\sigma^2 \right)\\
                &\leq \alpha^k \frac{n-m}{m}  \left( 2WL(1+M)(f(x^k)-f^\star) + \sum_{i=1}^n w_i^2(2L(1+M)Z_i+\sigma^2)  \right),
        \end{align*}
        and 
        \begin{align*}
           \E{ \norm{ \sum_{i=1}^n w_i g_i^k }^2 } &= \E{ \norm{ \sum_{i=1}^n w_i g_i^k - \nabla f(x^k) }^2 } + \norm{\nabla f(x^k)}^2  \\
           &= \sum_{i=1}^n \E{ \norm{ w_i g_i^k - w_i \nabla f_i(x^k) }^2 } + \norm{\nabla f(x^k)}^2  \\
           &= \sum_{i=1}^n w_i^2 \E{ \norm{ \xi_i^k }^2 } + \norm{\nabla f(x^k)}^2  \\
            &\leq \sum_{i=1}^n w_i^2 (2LM(f_i(x^k) - f_i^\star) + \sigma^2) + 2L(f(x^k) - f^\star) \\
            &=  2L\left(1+WM\right)(f(x^k)-f^\star) + \sum_{i=1}^n w_i^2 (2LMZ_i+\sigma^2).
        \end{align*}
        Therefore, we obtain
        \begin{align*}
               \E{\norm{r^{k+1}}^2}
                &\leq (1-\mu\eta^k)\norm{r^k}^2 - 2\eta^k \left( f(x^k)-f^\star \right) \\
                &\quad+ (\eta^k)^2 \left(2L\left(1+WM\right)(f(x^k)-f^\star) + \sum_{i=1}^n w_i^2 (2LMZ_i+\sigma^2)\right) \\
                &\quad + (\eta^k)^2 \alpha^k \frac{n-m}{m}  \left( 2WL(1+M)(f(x^k)-f^\star) + \sum_{i=1}^n w_i^2(2L(1+M)Z_i+\sigma^2)  \right)\\
                &\leq (1-\mu\eta^k)\norm{r^k}^2 - 2\eta^k \left( 1-\eta^k\frac{(\alpha^k(n-m)+m)(1 + WM)L}{m} \right) \left( f(x^k)-f^\star \right) \\
                &\quad+ (\eta^k)^2 \frac{\alpha^k(n-m)+m}{m} \left( \sum_{i=1}^n w_i^2 (2L(1+M)Z_i + \sigma^2) \right)  - (\eta^k)^2 2L\sum_{i=1}^n w_i^2Z_i.
        \end{align*}
        Now choose any $0<\eta^k\leq\frac{m}{(\alpha^k(n-m)+m)(1 + WM)L}$ and define
        \begin{align*}
            \beta_1 \eqdef \sum_{i=1}^n w_i^2 (2L(1+M)Z_i + \sigma^2),\quad \beta_2 \eqdef 2L\sum_{i=1}^n w_i^2Z_i, \quad \gamma^k \eqdef \frac{m}{\alpha^k(n-m)+m} \in \left[\frac{m}{n},  1 \right].
        \end{align*}
      Taking full expectation yields the desired result:
        \begin{align*}
          \E{\norm{r^{k+1}}^2}
            \leq (1-\mu\eta^k)\E{\norm{r^k}^2} +(\eta^k)^2 \left( \frac{\beta_1}{\gamma^k} - \beta_2 \right).
        \end{align*}
        \end{proof}

\subsection{Proof of Theorem \ref{THM:DSGD_NONCONVEX}}\label{appendix:proof_dsgd_nonconvex}
    \begin{proof}
    Using equation \eqref{eq:SGD_step}, we have
    \begin{align*}
        f(x^{k+1})&=f(x^k - \eta^k \mG^k)\\
        &=f(x^k)- \eta^k \left<\mG^k,\nabla f(x^k)\right> + \frac{(\eta^k)^2}{2} \left<\mG^k,\nabla^2 f(z^k) \mG^k\right>, \quad\text{for some $z^k\in\mathbb{R}^d$}.
    \end{align*}
    Since all $f_i$'s are $L$-smooth, $f$ is also $L$-smooth. Therefore, we have $-L\boldsymbol{I}\preceq \nabla^2 f(x) \preceq L\boldsymbol{I}$ for all $x\in\mathbb{R}^d$. Combining this with the fact that $\mG^k$ is an unbiased estimator of $\nabla f(x^k)$, we have
    \begin{align}\label{eq:nonconve:middle}
        \E{f(x^{k+1})}&\leq
        f(x^{k}) -\eta^k \norm{\nabla f(x^k)}^2+\frac{(\eta^k)^2 L}{2}\E{\norm{\mG^k}^2},
    \end{align}
    where the expectations are conditioned on $x^k$. In Appendix \ref{appendix:proof_dsgd_convex}, we already obtained the upper bound for the last term in equation \eqref{eq:nonconve:middle}:
    \begin{align*}
        \E{\norm{\mG^k}^2}
        &\leq
        \left( (1+M)\alpha^k\frac{n-m}{m}+M \right)\sum_{i=1}^n w_i^2\norm{\nabla f_i(x^k)}^2+\left( \alpha^k\frac{n-m}{m}+1 \right)\sum_{i=1}^nw_i^2\sigma^2+\norm{\nabla f(x^k)}^2\\
        &=\left( \frac{1+M}{\gamma^k}-1 \right)\sum_{i=1}^n w_i^2\norm{\nabla f_i(x^k)}^2+\frac{1}{\gamma^k}\sum_{i=1}^nw_i^2\sigma^2+\norm{\nabla f(x^k)}^2.
    \end{align*}
    By Assumption \ref{ass:local grad similarity}, we further bound
    \begin{align*}
        \sum_{i=1}^n w_i^2\norm{\nabla f_i(x^k)}^2
        &\leq W \sum_{i=1}^n w_i\norm{\nabla f_i(x^k)}^2\\
        &\leq W\left( \sum_{i=1}^n w_i\norm{\nabla f_i(x^k)-\nabla f(x^k)}^2 + \norm{\nabla f(x^k)}^2 \right)\\
        &\leq W\rho + \norm{\nabla f(x^k)}^2.
    \end{align*}
    Combining the inequalities above and taking full expectation yields equation \eqref{eq:DSGD_nonconvex}.
    \end{proof}

    \section{{\tt FedAvg} with Optimal Client Sampling}\label{appendix:proof_fedavg}
    
        \begin{lemma}[\citep{karimireddy2019scaffold}]
        \label{lemma:FedAvgProof}
            For any $L$-smooth and $\mu$-strongly convex function $h:\R^d\to \R$ and any $x,y,z \in \R^d$, the following inequality holds
            \begin{equation}
                \left<\nabla h(x),z-y\right>\geq h(z)-h(y)+\frac{\mu}{4}\norm{y-z}^2-L\norm{z-x}^2.
            \end{equation}
        \end{lemma}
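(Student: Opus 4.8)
The plan is to split the inner product at the anchor point $x$, writing $z-y=(z-x)+(x-y)$, and bound each of the two pieces by a single first-order inequality. For the term $\langle\nabla h(x),z-x\rangle$ I would invoke the standard consequence of $L$-smoothness (the descent lemma), $h(z)\le h(x)+\langle\nabla h(x),z-x\rangle+\tfrac{L}{2}\norm{z-x}^2$, which rearranges to $\langle\nabla h(x),z-x\rangle\ge h(z)-h(x)-\tfrac{L}{2}\norm{z-x}^2$. For the term $\langle\nabla h(x),x-y\rangle$ I would use $\mu$-strong convexity (Definition \ref{ass:1}) applied to the pair $(x,y)$, that is $h(y)\ge h(x)+\langle\nabla h(x),y-x\rangle+\tfrac{\mu}{2}\norm{y-x}^2$, which rearranges to $\langle\nabla h(x),x-y\rangle\ge h(x)-h(y)+\tfrac{\mu}{2}\norm{y-x}^2$. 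Summing the two estimates, the $h(x)$ terms cancel and one gets $\langle\nabla h(x),z-y\rangle\ge h(z)-h(y)+\tfrac{\mu}{2}\norm{y-x}^2-\tfrac{L}{2}\norm{z-x}^2$.

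What remains is to reshape the quadratic terms into the stated form. To replace $\norm{y-x}^2$ by $\norm{y-z}^2$ I would use the elementary bound $\norm{y-z}^2=\norm{(y-x)+(x-z)}^2\le 2\norm{y-x}^2+2\norm{x-z}^2$, equivalently $\norm{y-x}^2\ge\tfrac{1}{2}\norm{y-z}^2-\norm{x-z}^2$, whence $\tfrac{\mu}{2}\norm{y-x}^2\ge\tfrac{\mu}{4}\norm{y-z}^2-\tfrac{\mu}{2}\norm{x-z}^2$. Substituting this in, the coefficient multiplying $\norm{z-x}^2$ becomes $-(\tfrac{\mu}{2}+\tfrac{L}{2})$; since a function that is simultaneously $L$-smooth and $\mu$-strongly convex necessarily satisfies $\mu\le L$ (obtained by comparing the descent lemma with the strong-convexity lower bound at an arbitrary pair of points), we have $\tfrac{\mu}{2}+\tfrac{L}{2}\le L$, and weakening this coefficient to $-L$ yields exactly the claimed inequality.

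Every individual step is a textbook inequality, so I do not anticipate a genuine obstacle. The only mildly delicate point is the bookkeeping in the second paragraph: getting the factor of $2$ right when passing from $\norm{y-x}^2$ to $\norm{y-z}^2$, and explicitly recording the fact $\mu\le L$, which is what licenses absorbing the leftover $\tfrac{\mu}{2}\norm{z-x}^2$ into the $L\norm{z-x}^2$ term.
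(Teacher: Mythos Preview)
Your proposal is correct and follows essentially the same approach as the paper's proof: split $z-y=(z-x)+(x-y)$, apply smoothness and strong convexity to the two pieces, use the relaxed triangle inequality $\norm{y-x}^2\ge\tfrac12\norm{y-z}^2-\norm{x-z}^2$ to convert $\norm{y-x}^2$ into $\norm{y-z}^2$, and then absorb the residual $\tfrac{\mu}{2}\norm{z-x}^2$ using $\mu\le L$. The paper's proof is line-for-line the same.
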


        \begin{proof}
    For any given $x$, $y$, and $z$, the two inequalities below follows by the smoothness and strong convexity of the function $h$:
    \begin{align*}
\dotprod{\nabla h(x), z - x} &\geq h(z) - h(x) - \frac{L}{2}\norm{z - x}^2,\\
\dotprod{\nabla h(x), x - y} &\geq h(x) - h(y) + \frac{\mu}{2}\norm{y - x}^2 .
    \end{align*}
    Further, applying the relaxed triangle inequality gives
    \[
\frac{\mu}{2}\norm{y - x}^2 \geq \frac{\mu}{4}\norm{y - z}^2 - \frac{\mu}{2}\norm{x - z}^2\,.
    \]
    Combining all these inequalities together we have
    \begin{align*}
    \label{eq:perturbed}
\dotprod{\nabla h(x), z - y} \geq h(z) - h(y) +\frac{\mu}{4}\norm{y - z}^2  - \frac{L + \mu}{2}\norm{z - x}^2\,.
    \end{align*}
    The lemma follows by $L \geq \mu$.
\end{proof}

    \subsection{Proof of Theorem \ref{THM:FEDAVG_MAIN}}\label{appdix:proof fedavg convex}
        \begin{proof}
        The master update during round $k$ can be written as (superscript $k$ is dropped from here onward)
        \begin{align*}
            \eta_g\Delta x=\frac{\eta}{R}\sum_{i\in S,r}\frac{w_i}{p_i}g_i(y_{i,r-1})
            \quad\text{and}\quad
            \E{\eta_g\Delta x} = \frac{\eta}{R}\sum_{i,r}w_i\E{\nabla f_i(y_{i,r-1})}.
        \end{align*}
        
        Summations are always over $i\in[n]$ and $r\in[R]$ unless  stated otherwise. Taking expectations over $x$ conditioned on the results prior to round $k$ and over the sampling $S$ gives
        \begin{align*}
               \E{\norm{x-\eta_g\Delta x-x^\star}^2}
                &= \norm{x-x^\star}^2\underbrace{-\frac{2\eta}{R}\sum_{i,r}\left<w_i\nabla f_i(y_{i,r-1}),x-x^\star\right>}_{\mathcal{A}_1} + \underbrace{\frac{\eta^2}{R^2}\E{\norm{ \sum_{i\in S,r}\frac{w_i}{p_i}g_i(y_{i,r-1})  }^2}}_{\mathcal{A}_2}.
        \end{align*}
        
        Applying Lemma \ref{lemma:FedAvgProof} with $h=w_if_i$, $x=y_{i,r-1}$, $y=x^\star$ and $z=x$ gives
        \begin{align*}
                \mathcal{A}_1
                &\leq -\frac{2\eta}{R}\sum_{i,r}\left( w_if_i(x) - w_if_i(x^\star) + w_i\frac{\mu}{4} \norm{ x-x^\star }^2 - w_iL \norm{x-y_{i,r-1}}^2 \right)\\
                &\leq -2\eta \left( f(x)-f^\star + \frac{\mu}{4} \norm{ x-x^\star }^2 \right) + 2L\eta\mathcal{E},
        \end{align*}
        where $\mathcal{E}$ is the drift caused by the local updates on the clients:
        \begin{equation}\label{eq:drift}
            \mathcal{E} \eqdef \frac{1}{R}\sum_{i,r}w_i \E{\norm{x-y_{i,r-1}}^2}.
        \end{equation}
        
        Bounding $\mathcal{A}_2$, we obtain 
        \begin{align*}
               \frac{1}{\eta^2} \mathcal{A}_2
                &=\E{\norm{ \sum_{i\in S}\frac{w_i}{p_i}\frac{1}{R}\sum_r g_i(y_{i,r-1}) - \sum_{i}w_i\frac{1}{R}\sum_r g_i(y_{i,r-1}) }^2} +\E{\norm{ \sum_{i}w_i\frac{1}{R}\sum_r g_i(y_{i,r-1}) }^2}\\
                &\leq  \alpha \frac{n-m}{m}\sum_i w_i^2 \E{ \norm{ \frac{1}{R}\sum_r g_i(y_{i,r-1}) }^2 } +\E{\norm{ \sum_{i}w_i\frac{1}{R}\sum_rg_i(y_{i,r-1}) }^2}\\
                &=   \alpha \frac{n-m}{m}\sum_i w_i^2 \left(\E{ \norm{ \frac{1}{R}\sum_r \xi_{i, r-1} }^2 }  + \E{ \norm{ \frac{1}{R}\sum_r  \nabla f_i(y_{i,r-1}) }^2 } \right) \\
                &\quad + \E{\norm{ \sum_{i}w_i \frac{1}{R}\sum_r\xi_{i, r-1} }^2} + \E{\norm{ \sum_{i}w_i \frac{1}{R}\sum_r\nabla f_i(y_{i, r-1}) }^2}.
        \end{align*}
        
       Using independence, zero mean and bounded second moment of the random variables $\xi_{i, r}$, we obtain
        \begin{align*}
          \frac{1}{\eta^2} \mathcal{A}_2
          &\leq   \alpha \frac{n-m}{m}\sum_{i} w_i^2 \left(\frac{1}{R^2}\sum_r \E{ \norm{  \xi_{i, r-1} }^2 }  + \E{ \norm{ \frac{1}{R}\sum_{r}  \nabla f_i(y_{i,r-1}) }^2 } \right) \\
                &\quad + \sum_{i}w_i^2 \frac{1}{R^2}\sum_r \E{\norm{  \xi_{i, r-1} }^2} + \E{\norm{ \sum_{i}w_i \frac{1}{R}\sum_r \nabla f_i(y_{i, r-1}) }^2}\\
                &\leq  \alpha \frac{n-m}{m}\sum_{i} w_i^2  \left(\left(\frac{M}{R^2}+ \frac{1}{R}\right) \sum_r \E{\norm{\nabla f_i(y_{i, r-1}) }^2 } +\frac{\sigma^2}{R} \right) \\
                &\quad + \sum_{i}w_i^2 \left(\frac{M}{R^2} \sum_r \E{\norm{  \nabla f_i(y_{i, r-1}) }^2} + \frac{\sigma^2}{R} \right) + \E{\norm{ \sum_{i}w_i  \frac{1}{R}\sum_r \nabla f_i(y_{i, r-1}) }^2}\\
                &= \frac{\sigma^2}{R\gamma} \sum_{i}w_i^2 + \left(\frac{M}{R} + \left(\frac{M}{R}+1\right)\alpha \frac{n-m}{m} \right)  \sum_{i}w_i^2 \frac{1}{R} \sum_r \E{\norm{  \nabla f_i(y_{i, r-1}) - \nabla f_i(x) + \nabla f_i(x) }^2}  \\
                &\quad +  \E{\norm{ \sum_{i}w_i \frac{1}{R} \sum_r  (\nabla f_i(y_{i, r-1}) - \nabla f_i(x)) + \nabla f(x)  }^2} \\
                &\leq \frac{\sigma^2}{R\gamma} \sum_{i}w_i^2 + \left(\frac{M}{R} + \left(\frac{M}{R}+1\right) \alpha \frac{n-m}{m} \right)  \sum_{i}w_i^2 \left(\frac{2}{R} \sum_r \E{\norm{  \nabla f_i(y_{i, r-1}) - \nabla f_i(x)}^2}   + 2\E{\norm{\nabla f_i(x) }^2}\right)\\
                &\quad +  2\E{\norm{ \sum_{i}w_i \frac{1}{R} \sum_r  (\nabla f_i(y_{i, r-1}) - \nabla f_i(x))}^2} + 2\E{\norm{\nabla f(x) }^2}.
          \end{align*}
          
        Combining the smoothness of $f_i$'s, the definition of $\mathcal{E}$, and Jensen's inequality with definition $\gamma\eqdef \frac{m}{\alpha(n-m)+m}$, we obtain  
        \begin{align*}
            \frac{1}{\eta^2} \mathcal{A}_2
                &\leq \frac{\sigma^2}{R\gamma} \sum_{i} w_i^2 +2\left(\frac{M}{R} + \left(\frac{M}{R}+1 \right) \alpha \frac{n-m}{m} \right)\left(WL^2\mathcal{E} + 2WL(f(x) - f^\star) + 2L\sum_i w_i^2 Z_i \right) \\
                &\quad +  2L^2\mathcal{E}  + 4L(f(x) - f(x^\star))  \\
                 &=  \frac{\sigma^2}{R\gamma} \sum_{i}w_i^2 + 2L^2\left((1 - W) + \frac{W}{\gamma}\left(\frac{M}{R}+1 \right) \right) \mathcal{E} + 4L \left( \frac{1}{\gamma}\left(\frac{M}{R}+1  \right) - 1 \right)  \sum_i w_i^2 Z_i \\
                &\quad+4L\left((1 - W)+ \frac{W}{\gamma}\left(\frac{M}{R}+1\right) \right)(f(x) - f^\star).
        \end{align*}
       
        Putting these bounds on $\mathcal{A}_1$ and $\mathcal{A}_2$ together and using the fact that $1-W\leq \nicefrac{1}{\gamma}$ yields
        \begin{align*}
               \E{\norm{x-\eta_g\Delta x-x^\star}^2}
                &\leq
                \left( 1 - \frac{\mu\eta}{2} \right) \norm{x-x^\star}^2
                - 2\eta\left( 1-2L\frac{\eta}{\gamma} \left( W\left( \frac{M}{R}+1 \right) + 1 \right)\right) (f(x)-f^\star)\\
			&\quad + \eta^2 \left( \frac{\sigma^2}{R\gamma} \sum_{i}w_i^2+ 4L \left( \frac{1}{\gamma}\left(\frac{M}{R}+1  \right) - 1 \right)  \sum_i w_i^2 Z_i \right) \\
                &\quad +\left( 1 +  \eta L \left((1 - W) + \frac{W}{\gamma}\left(\frac{M}{R}+1 \right) \right) \right) 2L\eta \mathcal{E}.
        \end{align*}
        
        Let $\eta\leq\frac{\gamma}{8(1 + W(1 + \nicefrac{M}{R}))L}$, then
        \begin{align*}
            \frac{3}{4} \leq   1-2L\frac{\eta}{\gamma} \left( W\left( \frac{M}{R}+1 \right) + 1 \right), 
        \end{align*}
        which in turn yields
        \begin{align}
        \label{eq:fedavg_rec}
               \E{\norm{x-\eta_g\Delta x-x^\star}^2}
                &\leq \left( 1 - \frac{\mu\eta}{2} \right) \norm{x-x^\star}^2
                - \frac{3\eta}{2}(f(x)-f^\star)  \notag \\
                &\quad+ \eta^2 \left( \frac{\sigma^2}{R\gamma} \sum_{i}w_i^2+ 4L \left( \frac{1}{\gamma}\left(\frac{M}{R}+1  \right) - 1 \right)  \sum_i w_i^2 Z_i \right)  \notag \\
                &\quad +\left( 1 + \eta L \left((1 - W) + \frac{W}{\gamma}\left(\frac{M}{R}+1 \right) \right) \right) 2L\eta \mathcal{E}. 
        \end{align}    
        
        Next, we need to bound the drift $\mathcal{E}$. For $R\geq 2$, we have
        \begin{align*}
                \E{\norm{y_{i,r}-x}^2}
                &= \E{\norm{y_{i,r-1}-x-\eta_l g_i(y_{i,r-1})}^2} \\
                &\leq \E{\norm{y_{i,r-1}-x-\eta_l \nabla f_i(y_{i,r-1})}^2} + \eta_l^2 (M\norm{\nabla f_i(y_{i,r-1})}^2 + \sigma^2)\\
                &\leq \left( 1+\frac{1}{R-1} \right) \E{\norm{y_{i,r-1}-x}^2} + (R + M)\eta_l^2\norm{\nabla f_i(y_{i,r-1})}^2+ \eta_l^2\sigma^2\\
                &=\left( 1+\frac{1}{R-1} \right) \E{\norm{y_{i,r-1}-x}^2} + \left(1 + \frac{M}{R} \right)\frac{\eta^2}{R\eta_g^2}\norm{\nabla f_i(y_{i,r-1})}^2+ \frac{\eta^2\sigma^2}{R^2\eta_g^2}\\
                &\leq \left( 1+\frac{1}{R-1} \right) \E{\norm{y_{i,r-1}-x}^2} + \left(1 + \frac{M}{R} \right)  \frac{2\eta^2}{R\eta_g^2}\norm{\nabla f_i(y_{i,r-1})-\nabla f_i(x) }^2 \\
                &\quad + \left(1 + \frac{M}{R} \right)\frac{2\eta^2}{R\eta_g^2}\norm{\nabla f_i(x)}^2+ \frac{\eta^2\sigma^2}{R^2\eta_g^2}\\
                &\leq \left( 1+\frac{1}{R-1} +  \left(1 + \frac{M}{R} \right)\frac{2\eta^2L^2}{R\eta_g^2}\right) \E{\norm{y_{i,r-1}-x}^2} +   \left(1 + \frac{M}{R} \right)\frac{2\eta^2}{R\eta_g^2}\norm{\nabla f_i(x)}^2+ \frac{\eta^2\sigma^2}{R^2\eta_g^2}.
        \end{align*}
        
        If we further restrict $\eta\leq\frac{1}{8L(2 + \nicefrac{M}{R})}$, then for any $\eta_g \geq 1$, we have
        \begin{align*}
            \left(1 + \frac{M}{R} \right)\frac{2\eta^2L^2}{R\eta_g^2} \leq \frac{2L^2}{R\eta_g^2}\frac{1}{64L^2} \leq \frac{1}{32R} \leq \frac{1}{32(R-1)},
        \end{align*}
   and    therefore,
        \begin{align*}
                \E{\norm{y_{i,r}-x}^2}
                &\leq \left( 1+\frac{33}{32(R-1)} \right) \E{\norm{y_{i,r-1}-x}^2} +   \left(1 + \frac{M}{R} \right)\frac{2\eta^2}{R\eta_g^2}\norm{\nabla f_i(x)}^2 + \frac{\eta^2\sigma^2}{R^2\eta_g^2} \\
                &\leq \sum_{\tau=0}^{r-1} \left( 1+\frac{33}{32(R-1)} \right)^\tau \left(  \left(1 + \frac{M}{R} \right) \frac{2\eta^2}{R\eta_g^2}\norm{\nabla f_i(x)}^2+ \frac{\eta^2\sigma^2}{R^2\eta_g^2} \right)\\
                &\leq 8R\left(  \left(1 + \frac{M}{R} \right) \frac{2\eta^2}{R\eta_g^2}\norm{\nabla f_i(x)}^2+ \frac{\eta^2\sigma^2}{R^2\eta_g^2} \right)\\
                &= 16 \left(1 + \frac{M}{R} \right)\eta^2\norm{\nabla f_i(x)}^2+ \frac{8\eta^2\sigma^2}{R\eta_g^2}.
        \end{align*}
        
        Hence, the drift is bounded by
        \begin{align*}
                \mathcal{E}
                &\leq 16 \left(1 + \frac{M}{R} \right)\eta^2 \sum_i w_i \norm{\nabla f_i(x)}^2+ \frac{8\eta^2\sigma^2}{R\eta_g^2} \\
                &\leq 32 \left(1 + \frac{M}{R} \right)\eta^2L \sum_i w_i (f_i(x)-f_i^\star)+ \frac{8\eta^2\sigma^2}{R\eta_g^2} \\
                &= 32 \left(1 + \frac{M}{R} \right)\eta^2L (f(x)-f^\star)+ 32 \left(1 + \frac{M}{R} \right)\eta^2L \sum_i w_i Z_i + \frac{8\eta^2\sigma^2}{R\eta_g^2}\\
                &\leq 4\eta (f(x)-f^\star)+ 32 \left(1 + \frac{M}{R} \right)\eta^2L \sum_i w_i Z_i + \frac{8\eta^2\sigma^2}{R\eta_g^2}.
        \end{align*}
        
        Due to the upper bound on the step size $\eta\leq\frac{1}{8L(2 + \nicefrac{M}{R})}$, we have the inequalities
        \begin{align} \label{eq:fedavg_simplify}
                1 +  \eta L \left((1 - W) + \frac{W}{\gamma}\left(\frac{M}{R}+1 \right) \right) \leq \frac{9}{8} \quad \mbox{and} \quad 8\eta L \leq 1.
        \end{align}
        
        Plugging these to \eqref{eq:fedavg_rec}, we obtain
        \begin{align*}
               \E{\norm{x-\eta_g\Delta x-x^\star}^2}
                &\leq \left( 1 - \frac{\mu\eta}{2} \right) \norm{x-x^\star}^2
                 -\frac{3}{8}\eta(f(x)-f^\star)\\
                &\quad + \eta^2 \left(\frac{\sigma^2}{\gamma R} \left(\frac{\gamma}{\eta_g^2} + \sum_{i}w_i^2\right)+ 4L \left( \frac{M}{R}+1  - \gamma\right)  \sum_i w_i^2 Z_i \right) \\
                &\quad + \eta^3 72L^2 \left(1 + \frac{M}{R} \right) \sum_i w_i Z_i. 
        \end{align*}    
        
        Rearranging the terms in the last inequality, taking full expectation  and including superscripts lead to
          \begin{align*}
                \frac{3}{8} \E{(f(x^k)-f^\star)}  &\leq \frac{1}{\eta^k}\left( 1 - \frac{\mu\eta^k}{2} \right) \E{\norm{x^k-x^\star}^2} - \frac{1}{\eta^k}\E{\norm{x^{k+1}-x^\star}^2} \\
                &\quad +  \eta^k \left( \frac{\sigma^2}{\gamma^k R} \left(\frac{\gamma^k}{\eta_g^2} + \sum_{i}w_i^2\right)+ 4L \left( \frac{M}{R}+1   - \gamma^k \right)  \sum_i w_i^2 Z_i \right) \\
                &\quad +  (\eta^k)^2 72 L^2\left(1 + \frac{M}{R} \right) \sum_i w_i Z_i.
        \end{align*}
        Plugging the assumption $\eta_g^k \geq \sqrt{\frac{\gamma^k}{\sum_{i}w_i^2}}$ into the RHS of the above inequality completes the proof.
    \end{proof}
    
    \subsection{Proof of Theorem \ref{THM:FEDAVG_MAIN_nonconvex}}
    \begin{proof}
        We drop superscript $k$ and write the master update during round $k$ as:
        \begin{align*}
            \eta_g\Delta x=\frac{\eta}{R}\sum_{i\in S,r}\frac{w_i}{p_i}g_i(y_{i,r-1})
            \eqdef \eta\Tilde{\Delta}.
        \end{align*}
        Summations are always over $i\in[n]$ and $r\in[R]$ unless stated otherwise. Taking expectations conditioned on $x$ and using a similar argument as in the proof in Appendix \ref{appendix:proof_dsgd_nonconvex}, we have
        \begin{align*}
            \E{f(x-\eta_g\Delta x)}
            &\leq f(x)-\eta \left<\nabla f(x),\E{\Tilde{\Delta}}\right>+\frac{\eta^2L}{2}\E{\norm{\Tilde{\Delta}}^2}\\
            &= f(x)-\eta \norm{\nabla f(x)}^2+\eta\left<\nabla f(x),\nabla f(x)-\E{\Tilde{\Delta}}\right>+\frac{\eta^2L}{2}\E{\norm{\Tilde{\Delta}}^2}\\
            &\leq f(x)- \frac{\eta}{2} \norm{\nabla f(x)}^2+\frac{\eta}{2}\E{\norm{\nabla f(x)-\EE{S}{\Tilde{\Delta}}}^2}+\frac{\eta^2L}{2}\E{\norm{\Tilde{\Delta}}^2},
        \end{align*}
        where the last inequality follows since $\left<a,b\right>\leq\frac{1}{2}\norm{a}^2+\frac{1}{2}\norm{b}^2,~\forall a,b\in\mathbb{R}^d$. Since $f_i$'s are $L$-smooth, by the (relaxed) triangular inequality, we have
        \begin{align*}
            \frac{\eta}{2}\E{\norm{\nabla f(x)-\E{\Tilde{\Delta}}}^2}
            &= \frac{\eta}{2}\E{\norm{\frac{1}{R} \sum_{i,r}w_i\left( \nabla f_i(x) - \nabla f_i(y_{i,r-1}) \right)}^2}\\
            &\leq \frac{\eta L^2}{2R}\sum_{i,r}w_i\E{\norm{x-y_{i,r-1}}^2}
            = \frac{\eta L^2}{2}\mathcal{E},
        \end{align*}
        where $\mathcal{E}$ is the drift caused by the local updates on the clients as defined in \eqref{eq:drift}. 
        
        In Appendix \ref{appdix:proof fedavg convex}, we already obtained the upper bound for $\frac{1}{\eta^2}\mathcal{A}_2=\E{\norm{\Tilde{\Delta}}^2}$:
        \begin{align*}
            \E{\norm{\Tilde{\Delta}}^2}
            &\leq \frac{\sigma^2}{R\gamma} \sum_{i}w_i^2 + 2W\left(\frac{M}{R} + \left(\frac{M}{R}+1\right) \alpha \frac{n-m}{m} \right)  \left(L^2\mathcal{E}  + \sum_{i}w_i\norm{\nabla f_i(x) }^2\right)+2L^2\mathcal{E} + 2\norm{\nabla f(x) }^2.
        \end{align*}
        Together with Assumption \ref{ass:local grad similarity} that
        \begin{align*}
            \sum_{i}w_i\norm{\nabla f_i(x) }^2 - \norm{\nabla f(x)}^2
            \leq \sum_{i}w_i\norm{\nabla f_i(x) - \nabla f(x)}^2 \leq \rho,
        \end{align*}
        we have
        \begin{align*}
            \E{\norm{\Tilde{\Delta}}^2}
            &\leq \frac{\sigma^2}{R\gamma} \sum_{i}w_i^2 + \frac{2W}{\gamma}\left(\frac{M}{R}+1 -\gamma \right)  \left(L^2\mathcal{E}  + \norm{\nabla f(x) }^2+\rho\right)+2L^2\mathcal{E} + 2\norm{\nabla f(x) }^2.
        \end{align*}
        Combining the above inequalities gives
        \begin{align*}
            \E{f(x-\eta_g\Delta x)}
            &\leq f(x) + \eta^2\frac{\sigma^2L}{2R\gamma}\sum_i w_i^2+ \eta L^2\left( \eta L\left( (1- W) + \frac{W}{\gamma} \left( 1+\frac{M}{R} \right) \right)+\frac{1}{2} \right)\mathcal{E}\\
            &\quad+ \eta \left( \eta L\left( (1- W) + \frac{W}{\gamma} \left( 1+\frac{M}{R} \right) \right)-\frac{1}{2} \right)\norm{\nabla f(x)}^2\\
            &\quad+ \eta \left( \eta L\left( (1- W) + \frac{W}{\gamma} \left( 1+\frac{M}{R} \right) \right)-\eta L \right)\rho.
        \end{align*}
        Now, applying inequality \eqref{eq:fedavg_simplify} gives
        \begin{align*}
            \E{f(x-\eta_g\Delta x)}
            &\leq f(x) + \frac{\eta^2\sigma^2L}{2R\gamma}\sum_i w_i^2 +\frac{5\eta L^2}{8}\mathcal{E} -\frac{3\eta}{8}\norm{\nabla f(x)}^2+\frac{\eta}{8}(1-8\eta L)\rho.
        \end{align*}
        In Appendix \ref{appdix:proof fedavg convex}, we also obtained the upper bound for the drift $\mathcal{E}$:
        \begin{align*}
            \mathcal{E}
                &\leq 16 \left(1 + \frac{M}{R} \right)\eta^2 \sum_i w_i \norm{\nabla f_i(x)}^2+ \frac{8\eta^2\sigma^2}{R\eta_g^2}\\
                &\leq 16 \left(1 + \frac{M}{R} \right)\eta^2 (\norm{\nabla f(x)}^2+\rho)+ \frac{8\eta^2\sigma^2}{R\eta_g^2}.
        \end{align*}
        Since $8\eta L\leq 8\eta L(1+\nicefrac{M}{R})\leq 1$, we have
        \begin{align*}
            \frac{5\eta L^2}{8}\mathcal{E}
            &\leq 10\eta^3 L^2 \left(1 + \frac{M}{R} \right) (\norm{\nabla f(x)}^2+\rho)+  \frac{5\eta^3L^2\sigma^2}{R\eta_g^2}\\
            &\leq \frac{5\eta^2 L}{4}(\norm{\nabla f(x)}^2+\rho)+  \frac{5\eta^2L\sigma^2}{8R\eta_g^2}.
        \end{align*}
        This further simplifies the iterate to
        \begin{align*}
            \E{f(x-\eta_g\Delta x)}
            &\leq f(x) - \frac{3}{8}\eta\left( 1-\frac{10}{3}\eta L \right) \norm{\nabla f(x)}^2+\frac{1}{8}\eta\left(1+2\eta L\right)\rho+\frac{\eta^2\sigma^2 L}{2R\gamma}\left( \frac{5\gamma}{4\eta_g^2}+\sum_i w_i^2 \right).
        \end{align*}
        Applying the assumption that $\eta_g\geq\sqrt{\frac{5\gamma}{4\sum_i w_i^2}}$ and taking full expectations completes the proof:
        \begin{align*}
            \E{f(x-\eta_g\Delta x)}
            &\leq \E{f(x)} - \frac{3}{8}\eta\left( 1-\frac{10}{3}\eta L \right) \E{\norm{\nabla f(x)}^2}+\eta\frac{\rho}{8}+\eta^2\left(\frac{\rho}{4}+\frac{\sigma^2 }{R\gamma}\sum_{i=1}^n w_i^2\right)L.
        \end{align*}
    \end{proof}
        
    \section{A Sketch of Results on Partial Participation}
    \label{sec:partial_particpation}
    {\color{black}
    This section discusses how our analysis can be extended to the case where not all clients are available to participate in each round. As an illustrative example, we consider Distributed SGD ({\tt DSGD}), i.e., $\U = g_i^k$.
    
    If not all clients are available to participate in each communication round, we will assume that there is a known distribution of client availability $\cQ$ such that in each step a subset $\cQ^k \sim \cQ$ of clients are available to participate in a given communication round $k$. We denote the probability that client $i$ is available in the current run by $q_i$, i.e., $q_i = \Prob(i \in \cQ^k)$. Under this setting, we can apply twice tower property of the expectation and obtain the following variance decomposition:
    
    \begin{equation}
      \label{eq:partial}
      \begin{split}
      &\E{\norm{\mG^k- \nabla f(x^k)}^2} \\
      =~&\E{\E{\norm{\mG^k - \sum \limits_{i \in \cQ^k} \frac{w_i}{q_i} \U}^2|\cQ^k}} + \E{\norm{\sum \limits_{i \in \cQ^k} \frac{w_i}{q_i} \U - \sum \limits_{i=1}^n w_i \U}^2}
       \quad+ \E{\norm{\sum \limits_{i=1}^n w_i \U - \nabla f(x^k)}^2},
    \end{split}
    \end{equation}
    where we update the definition of $\mG^k$
    \begin{align}
        \mG^k \eqdef \sum \limits_{i \in S^k \subseteq \cQ^k} \frac{w_i}{q_i p_i^k}.
    \end{align}
    Note that $S^k \subseteq \cQ^k$ as we can only sample from available clients. Furthermore, in the particular case where all clients are available, the above equations become identical to the ones that we present in the main paper. 
    
    Upper-bounding Equation \eqref{eq:partial} in an analogous way as we proceed in our analysis in Appendices~\ref{appendix:proof_dsgd} and \ref{appendix:proof_fedavg} would complete the proof of convergence for these settings.
    }
    
    
        \begin{figure}
            \centering
            \begin{subfigure}
                \centering
                \includegraphics[width=0.4\textwidth]{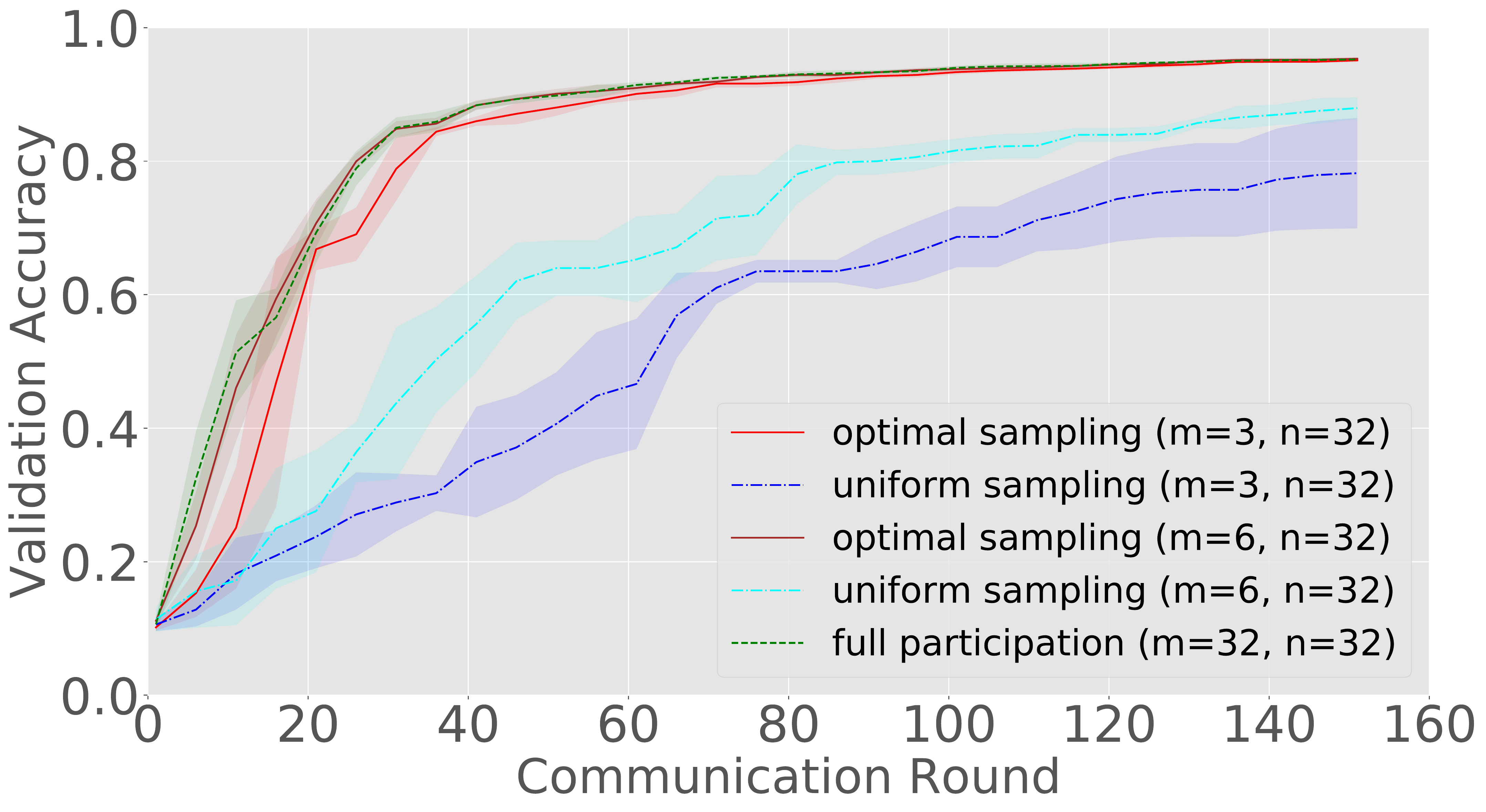}
            \end{subfigure}
            \begin{subfigure}
                \centering
                \includegraphics[width=0.4\textwidth]{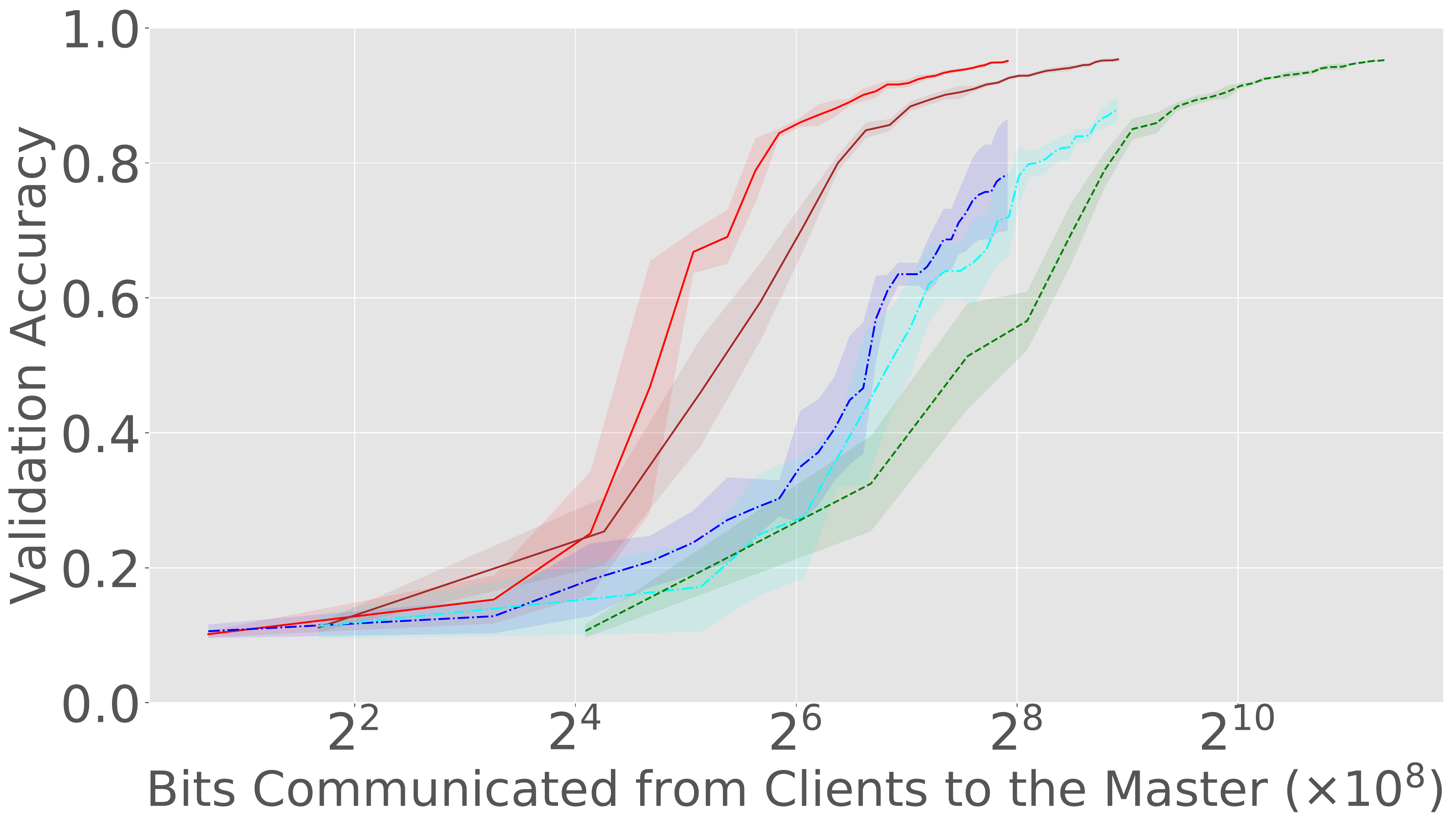}
            \end{subfigure}
            \caption{(FEMNIST Dataset 1, $n=32$) current best validation accuracy as a function of the number of communication rounds and the number of bits communicated from clients to the master.}
            \label{fig:cookup1_best}
        \end{figure}
    
               \begin{figure}
            \centering
            \begin{subfigure}
                \centering
                \includegraphics[width=0.4\textwidth]{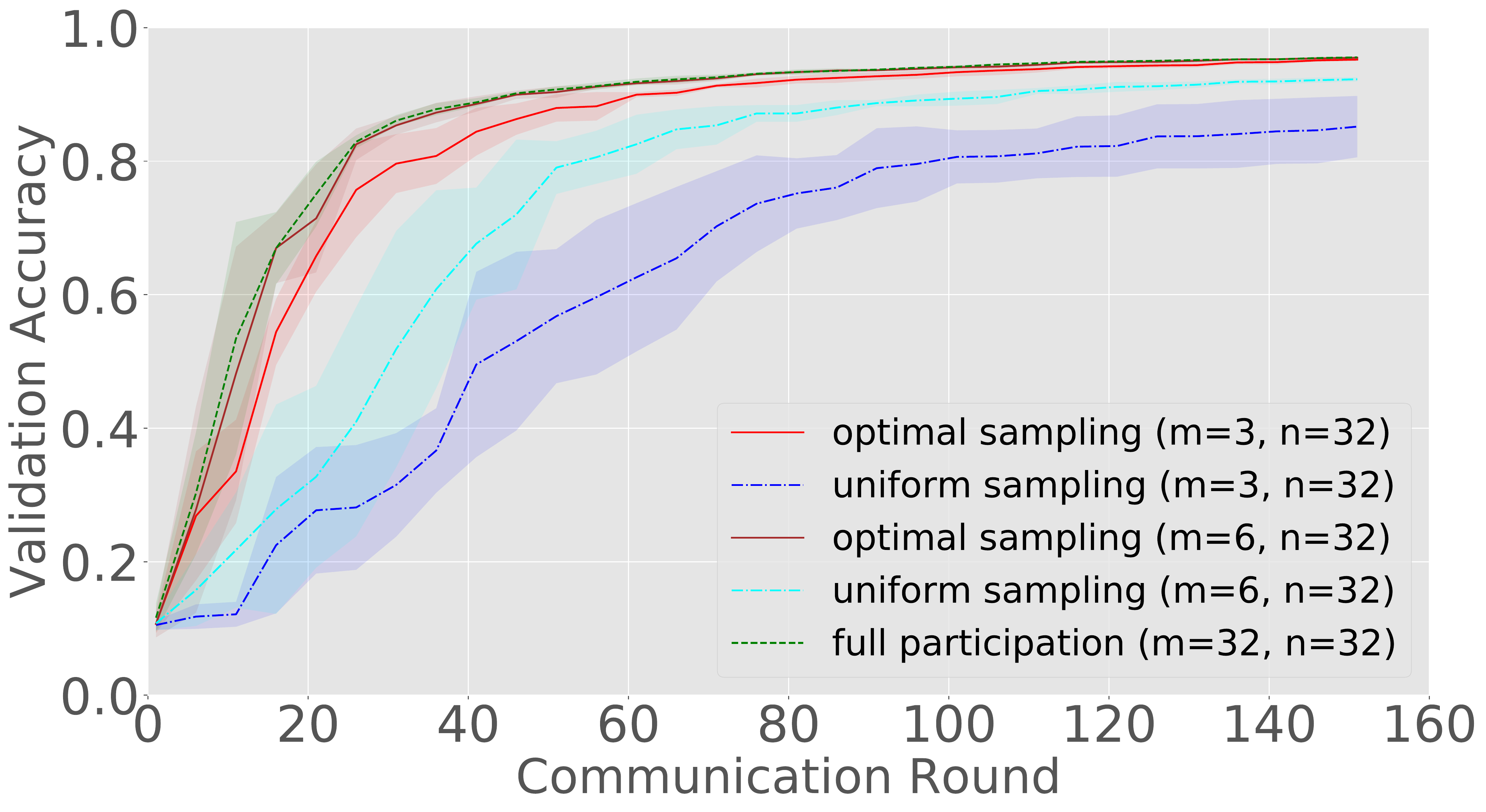}
            \end{subfigure}
            \begin{subfigure}
                \centering
                \includegraphics[width=0.4\textwidth]{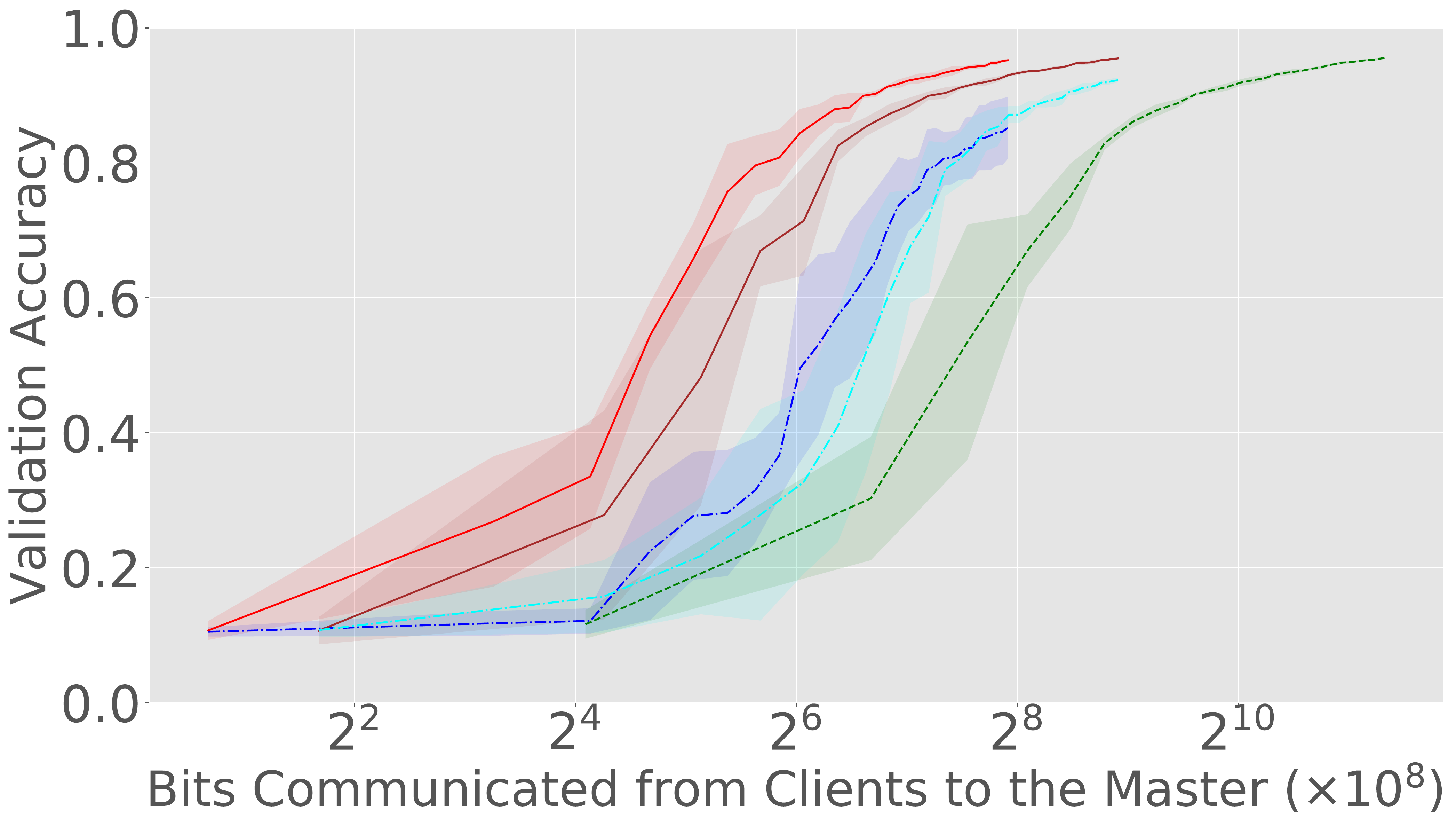}
            \end{subfigure}
            \caption{(FEMNIST Dataset 2, $n=32$) current best validation accuracy as a function of the number of communication rounds and the number of bits communicated from clients to the master.}
            \label{fig:cookup2_best}
        \end{figure}
        
        \begin{figure}
            \centering
            \begin{subfigure}
                \centering
                \includegraphics[width=0.4\textwidth]{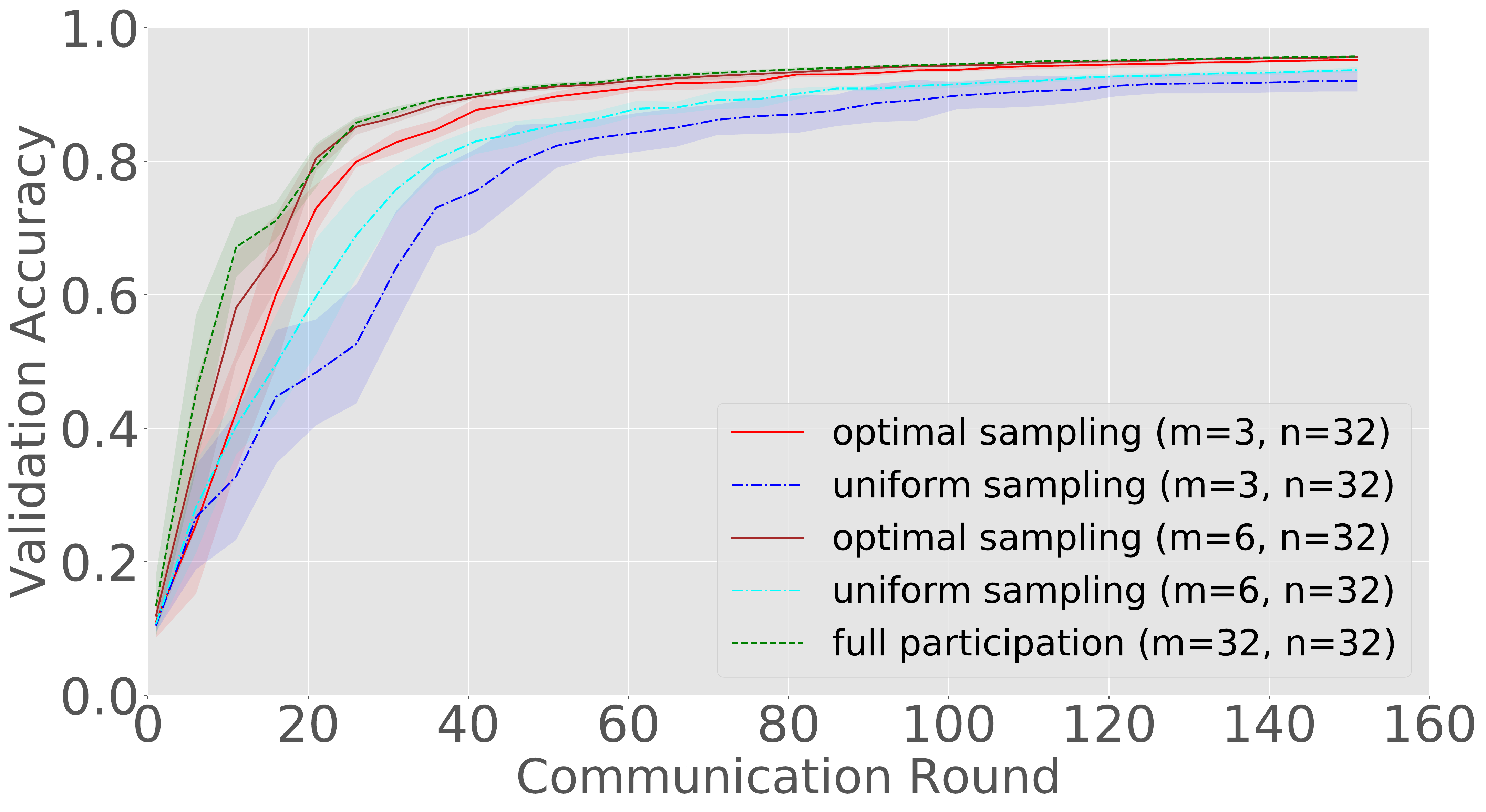}
            \end{subfigure}
            \begin{subfigure}
                \centering
                \includegraphics[width=0.4\textwidth]{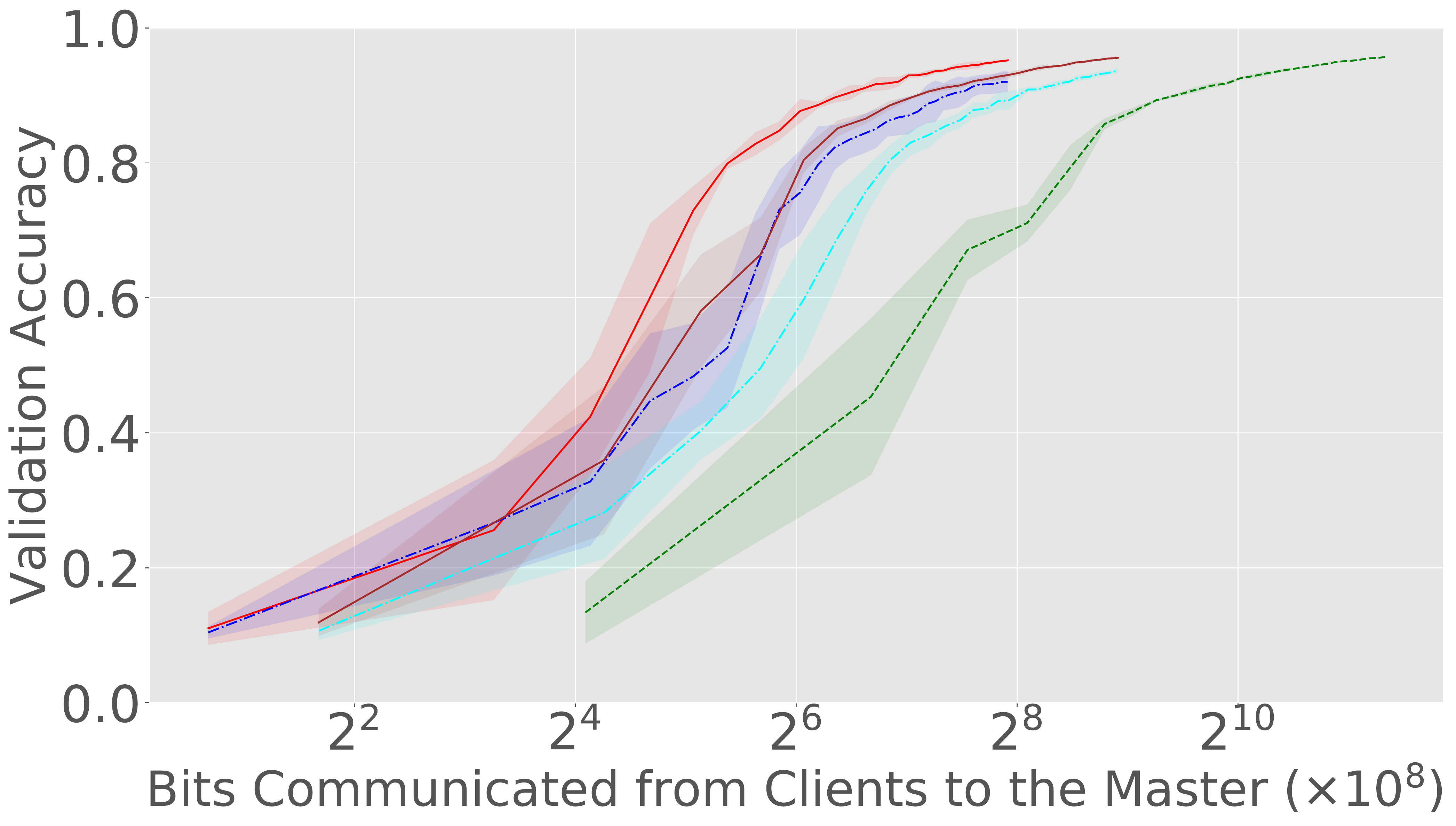}
            \end{subfigure}
            \caption{(FEMNIST Dataset 3, $n=32$) current best validation accuracy as a function of the number of communication rounds and the number of bits communicated from clients to the master.}
            \label{fig:cookup3_best}
        \end{figure}
        
        \begin{figure}
            \centering
            \begin{subfigure}
                \centering
                \includegraphics[width=0.4\textwidth]{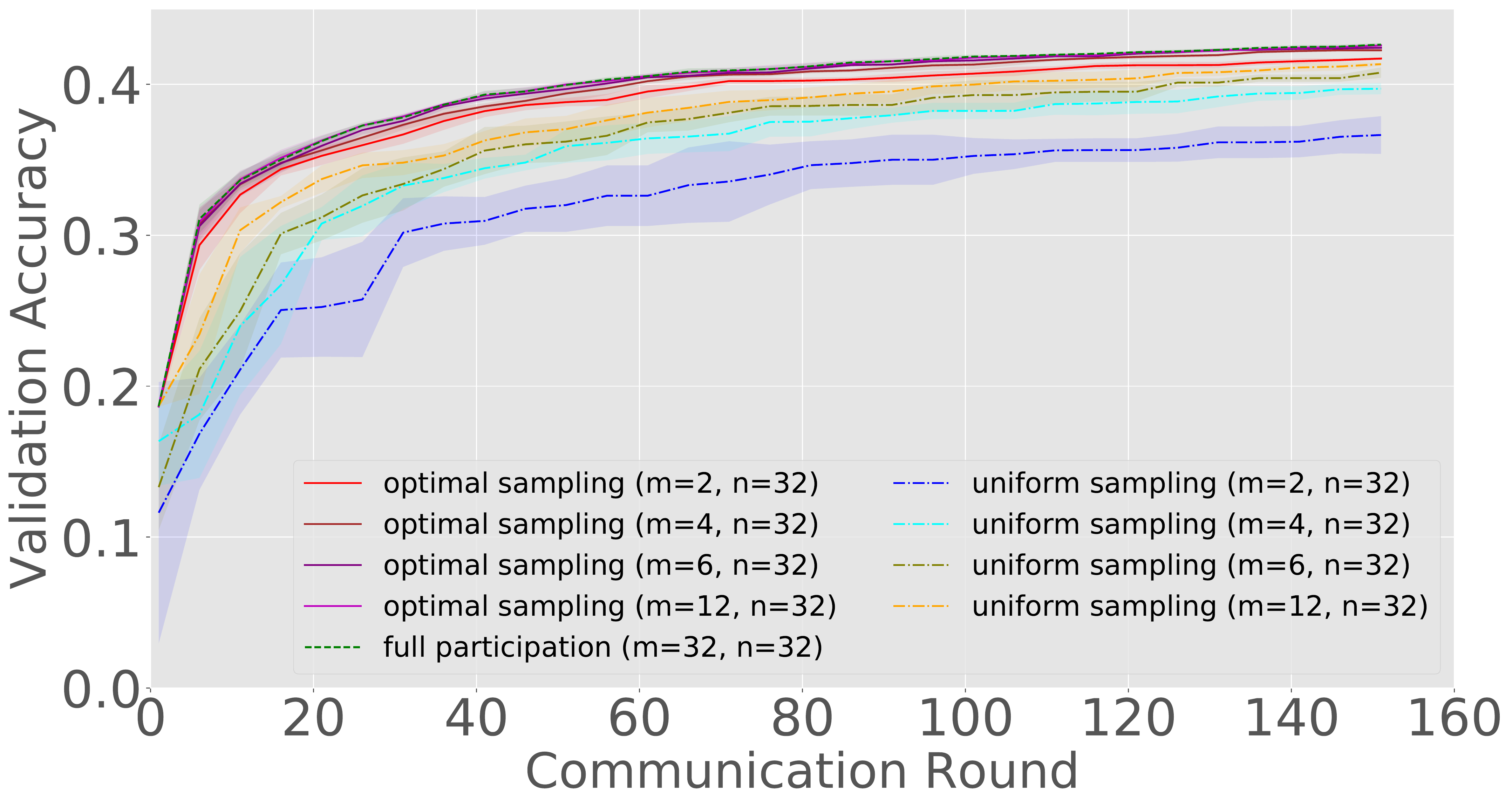}
            \end{subfigure}
            \begin{subfigure}
                \centering
                \includegraphics[width=0.4\textwidth]{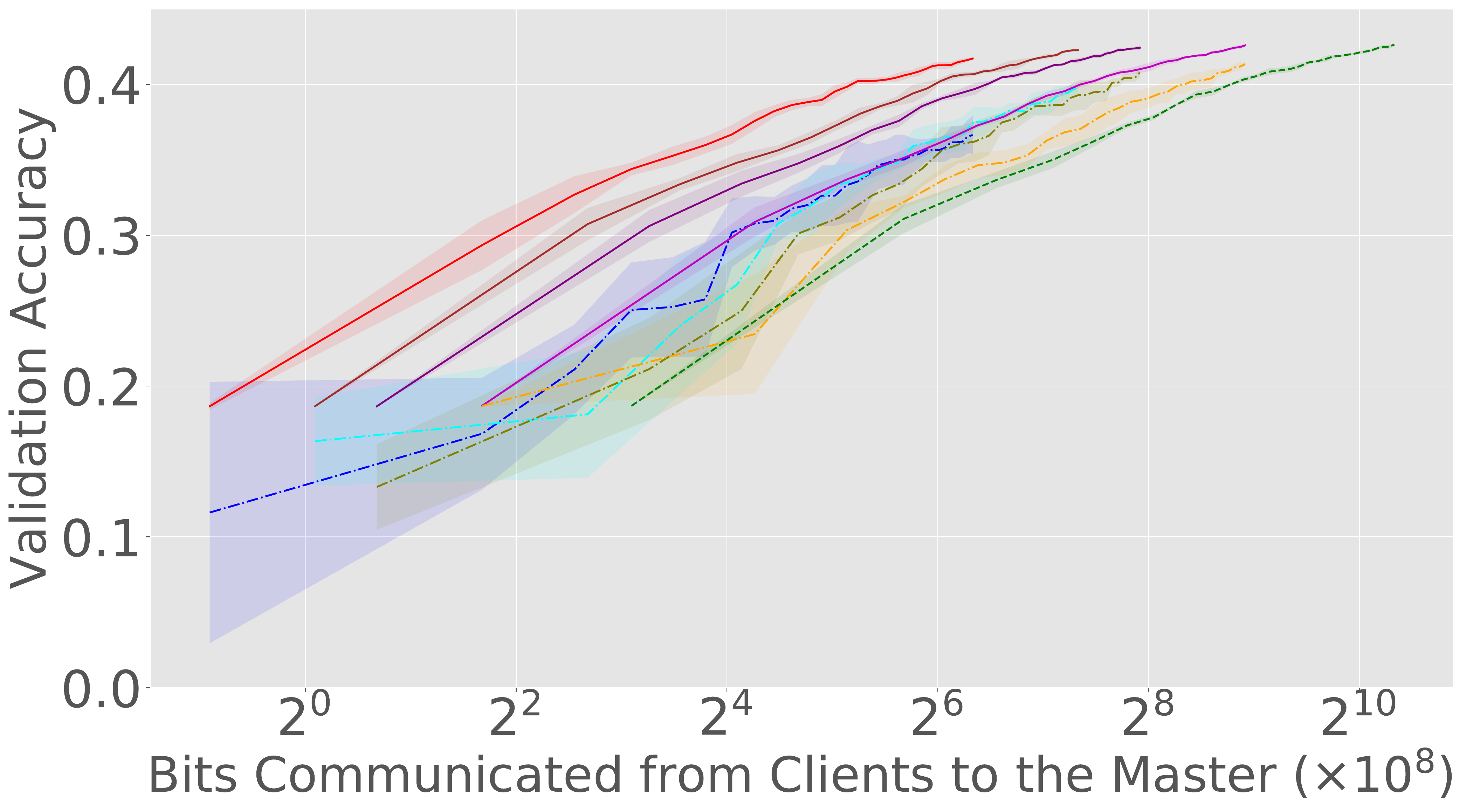}
            \end{subfigure}
            \caption{(Shakespeare Dataset, $n=32$) current best validation accuracy as a function of the number of communication rounds and the number of bits communicated from clients to the master.}
            \label{fig:ss_best}
        \end{figure}
        
        \begin{figure}
            \centering
            \begin{subfigure}
                \centering
                \includegraphics[width=0.4\textwidth]{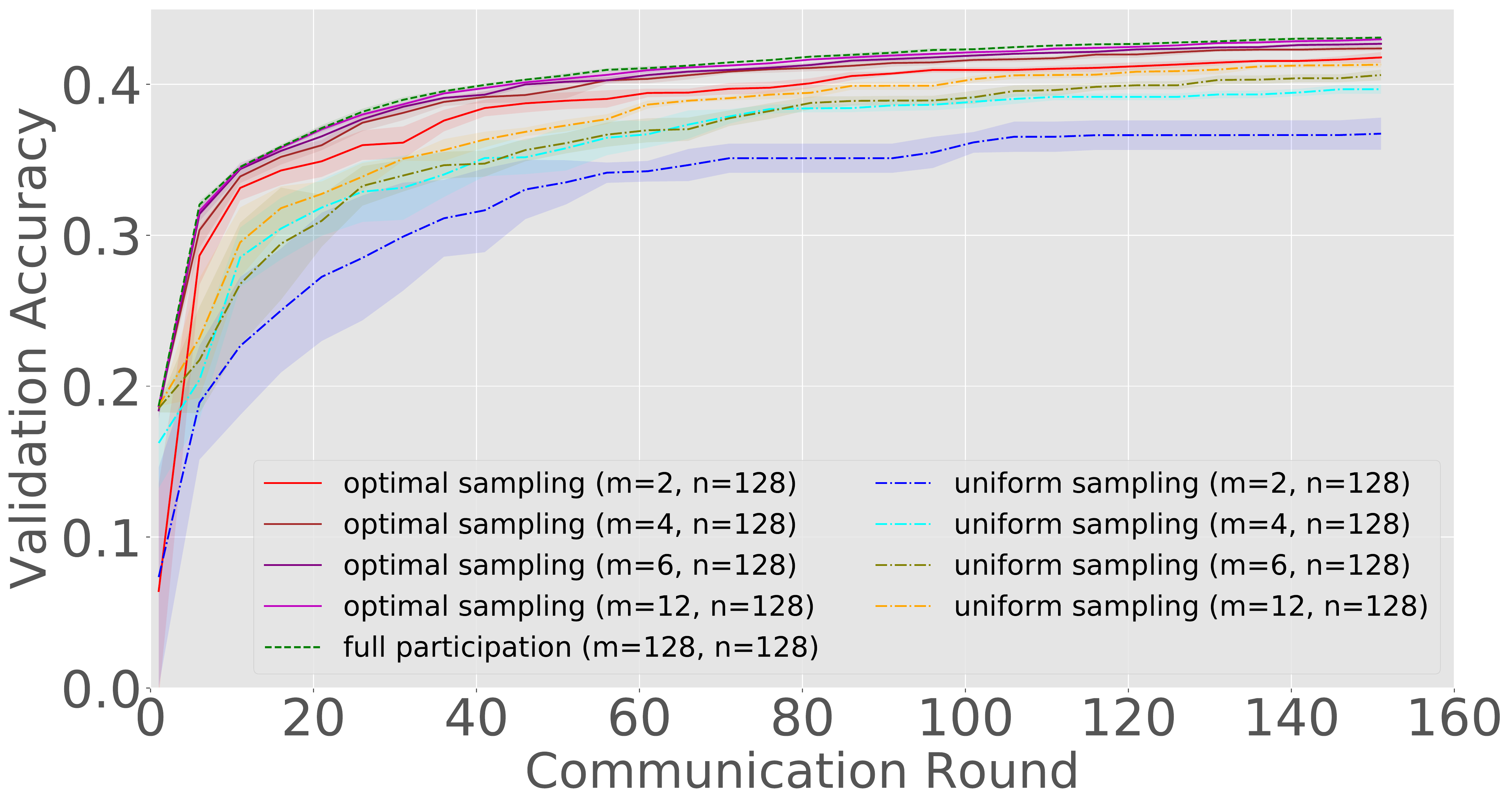}
            \end{subfigure}
            \begin{subfigure}
                \centering
                \includegraphics[width=0.4\textwidth]{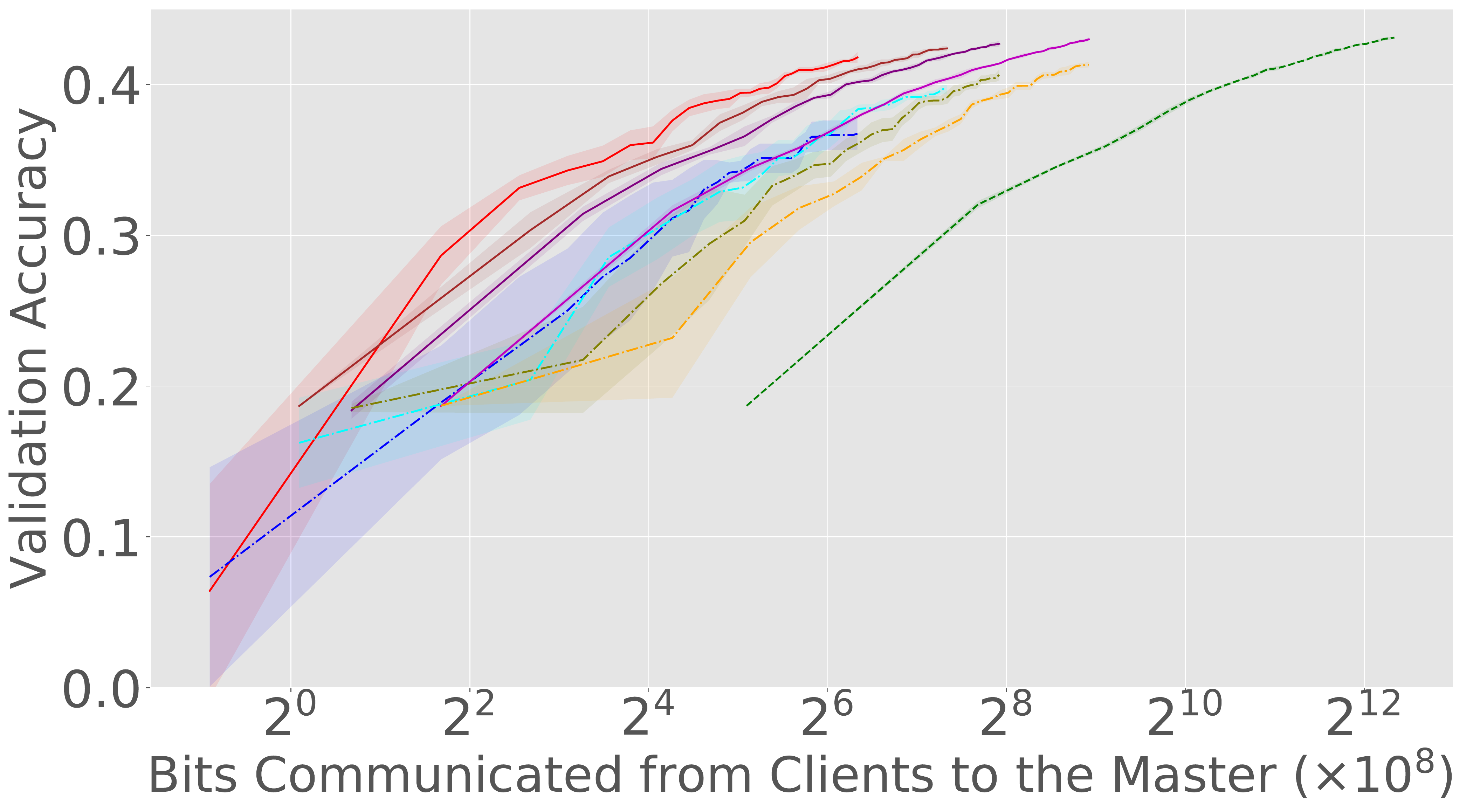}
            \end{subfigure}
            \caption{(Shakespeare Dataset, $n=128$) current best validation accuracy as a function of the number of communication rounds and the number of bits communicated from clients to the master.}
            \label{fig:ss_best128}
        \end{figure}

    \section{Experimental Details}
    
    \subsection{Federated EMNIST Dataset}\label{appendix:EMNIST experiment}
        We detail the hyper-parameters used in the experiments on the FEMNIST datasets. For each experiment, we run $151$ communication rounds, reporting (local) training loss every round and validation accuracy every $5$ rounds. In each round, $n=32$ clients are sampled from the client pool, each of which then performs SGD for $1$ epoch on its local training images with batch size $20$. For partial participation, the expected number of clients allowed to communicate their updates back to the master is set to $m\in\{3,6\}$. We use vanilla SGD and constant step sizes for all experiments, where we set $\eta_g=1$ and tune $\eta_l$ from the set of value $\{2^{-1},2^{-2},2^{-3},2^{-4},2^{-5}\}$. If the optimal step size hits a boundary value, then we try one more step size by extending that boundary and repeat this until the optimal step size is not a boundary value. For full participation and optimal sampling, it turns out that $\eta_l=2^{-3}$ is the optimal local step size for all three datasets. For uniform sampling, the optimal is $\eta_l=2^{-5}$ for Dataset 1 and $\eta_l=2^{-4}$ for Datasets 2 and 3. For the extra communications in Algorithm \ref{alg:AOCS}, we set $j_{max}=4$.
        
        We also present some additional figures of the experiment results. Figures \ref{fig:cookup1_best}, \ref{fig:cookup2_best} and \ref{fig:cookup3_best} show the current best validation accuracy as a function of the number of communication rounds and the number of bits communicated from clients to the master on Datasets 1, 2 and 3, respectively.
        
    \subsection{Shakespeare Dataset}\label{appendix:shakespeare experiment}
        We detail the hyper-parameters used in the experiments on the Shakespeare dataset. For each experiment, we run $151$ communication rounds, reporting (local) training loss every round and validation accuracy every $5$ rounds. In each round, $n\in\{32,128\}$ clients are sampled from the client pool, each of which then performs SGD for $1$ epoch on its local training data with batch size $8$ (each batch contains $8$ example sequences of length $5$). For partial participation, the expected number of clients allowed to communicate their updates back to the master is set to $m\in\{2,4,6,12\}$. We use vanilla SGD and constant step sizes for all experiments, where we set $\eta_g=1$ and tune $\eta_l$ from the set of value $\{2^{-1},2^{-2},2^{-3},2^{-4},2^{-5}\}$. If the optimal step size hits a boundary value, then we try one more step size by extending that boundary and repeat this until the optimal step size is not a boundary value. For full participation and optimal sampling, it turns out that $\eta_l=2^{-2}$ is the optimal local step size. For uniform sampling, the optimal is $\eta_l=2^{-3}$. For the extra communications in Algorithm \ref{alg:AOCS}, we set $j_{max}=4$.
        
        We also present an additional figure of the experiment result. Figures \ref{fig:ss_best} and \ref{fig:ss_best128} show the current best validation accuracy as a function of the number of communication rounds and the number of bits communicated from clients to the maste for the cases $n=32,128$, respectively.
        
    \section{Additional Experiment on Federated CIFAR100 Dataset}\label{appendix:cifar100}
        We evaluate our method on the Federated CIFAR100 image dataset for image classification. The Federated CIFAR100 dataset is a balanced dataset, where every client holds the same number of training images. In each communication round, $n=32$ clients are sampled uniformly from the client pool, each of which then performs several SGD steps on its local training images for $1$ epoch with batch size $20$. This means that all clients have the same number of local steps in each round. \textcolor{black}{For partial participation, the expected number of clients allowed to communicate their updates back to the master is set to $m=3$.} We use vanilla SGD optimizers with constant step sizes for both clients and the master, with $\eta_g=1$ and $\eta_l$ tuned on a holdout set. For full participation and optimal sampling, it turns out that $\eta_l=1\times10^{-3}$ is the optimal local step size. For uniform sampling, the optimal is $\eta_l=3\times10^{-4}$. We set $j_{\max}=4$ and include the extra communication costs in our results. The main results are shown in Figure~\ref{fig:cifar100}. It can be seen that our optimal client sampling scheme achieves better performance than uniform sampling on this balanced dataset. The performance gains of our method over uniform sampling come from the fact that the norms of the updates from some clients are larger than those from other clients even if all clients run the same number of local steps in each round.
        
\begin{figure}
    \begin{center}
    \includegraphics[width=0.23\textwidth]{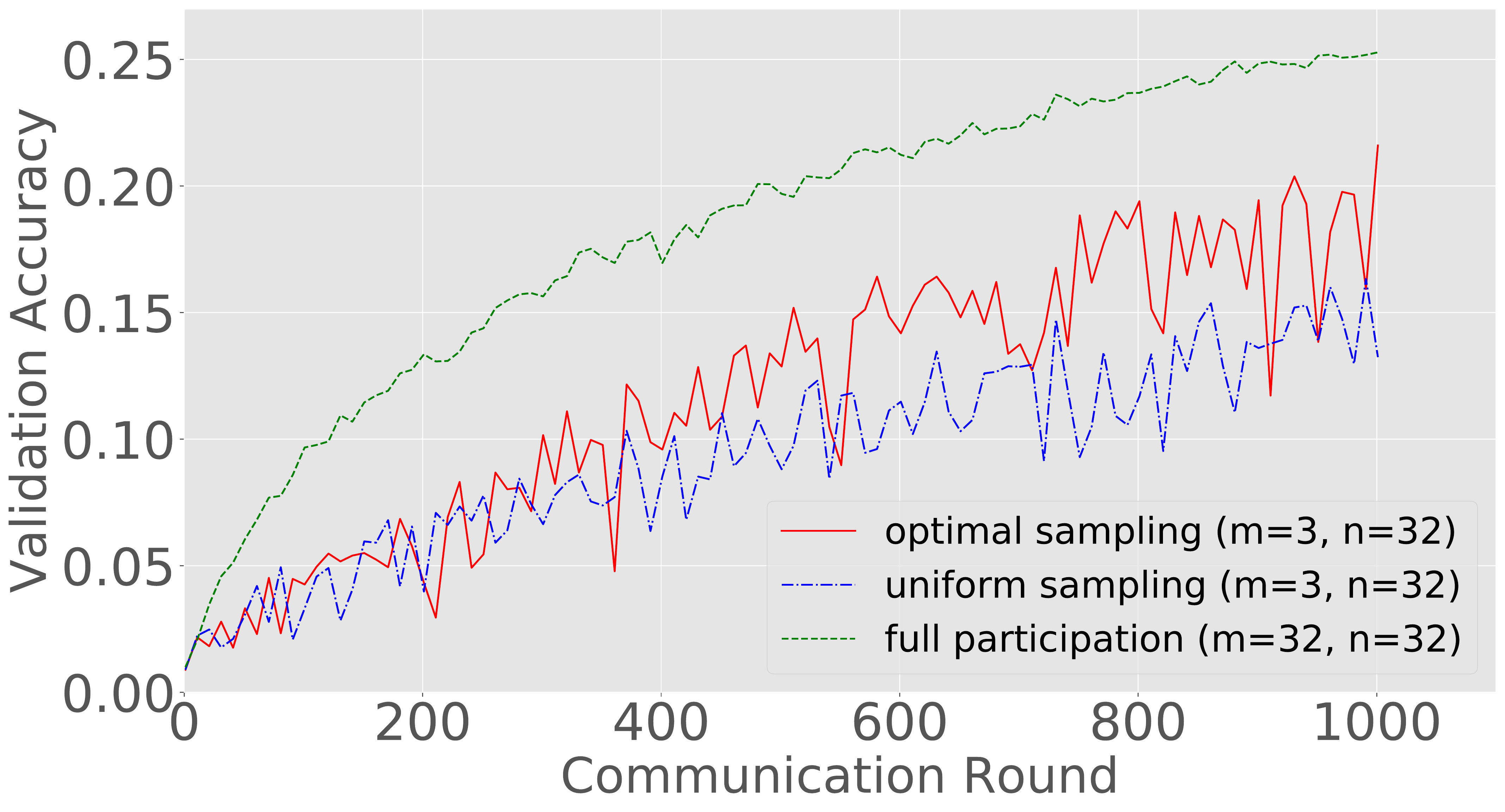}
    \includegraphics[width=0.23\textwidth]{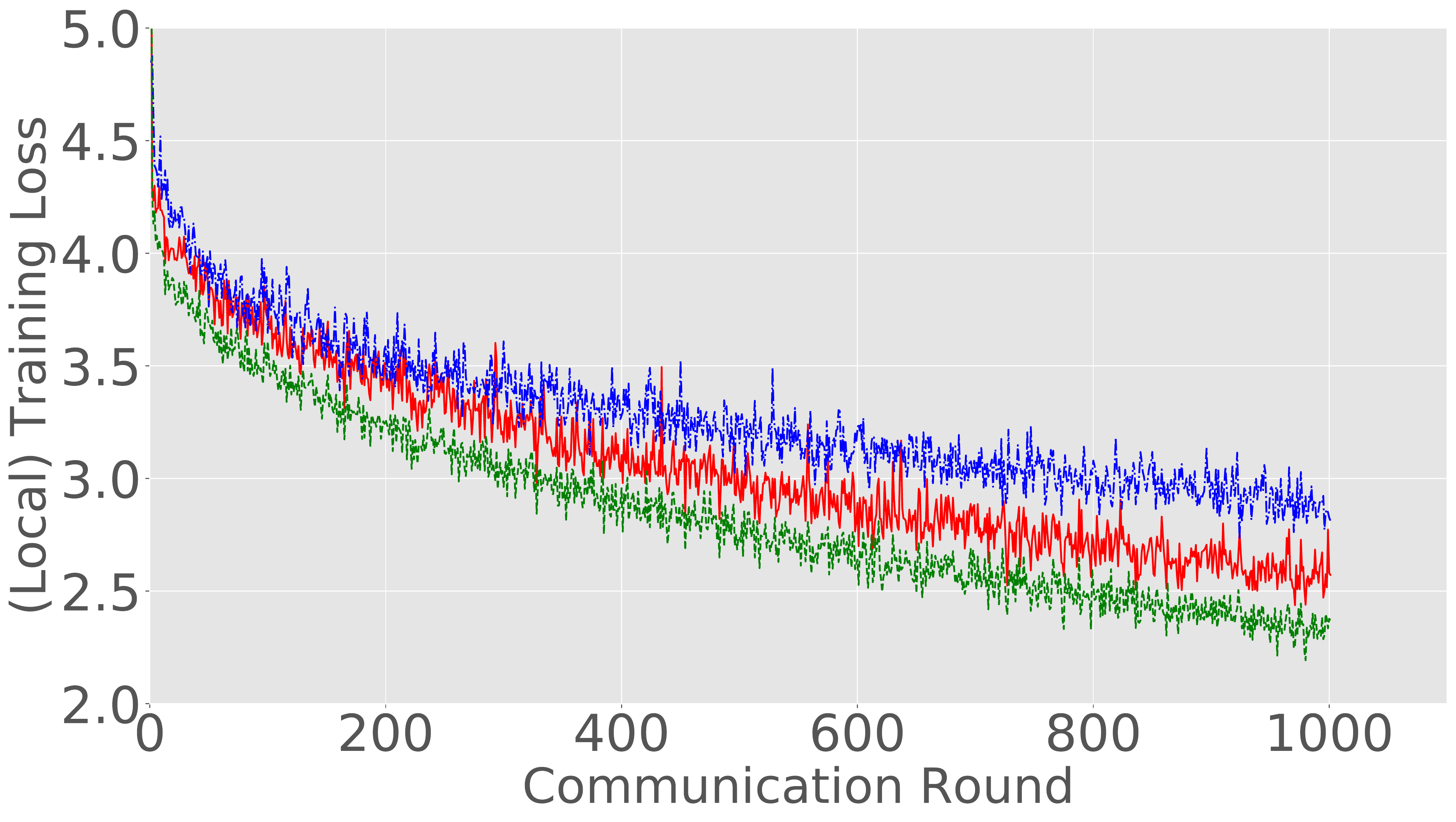}
    \includegraphics[width=0.23\textwidth]{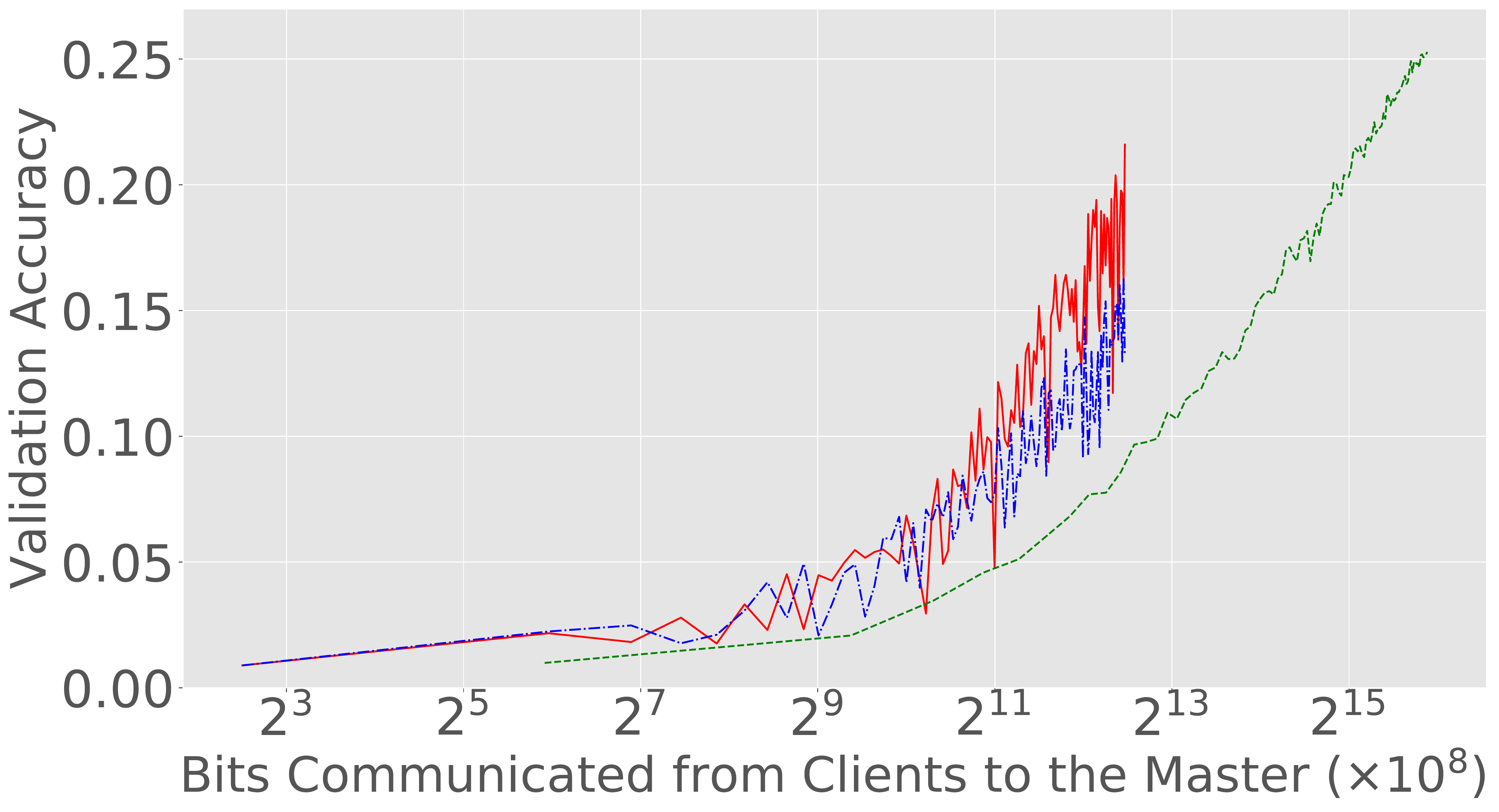}
    \includegraphics[width=0.23\textwidth]{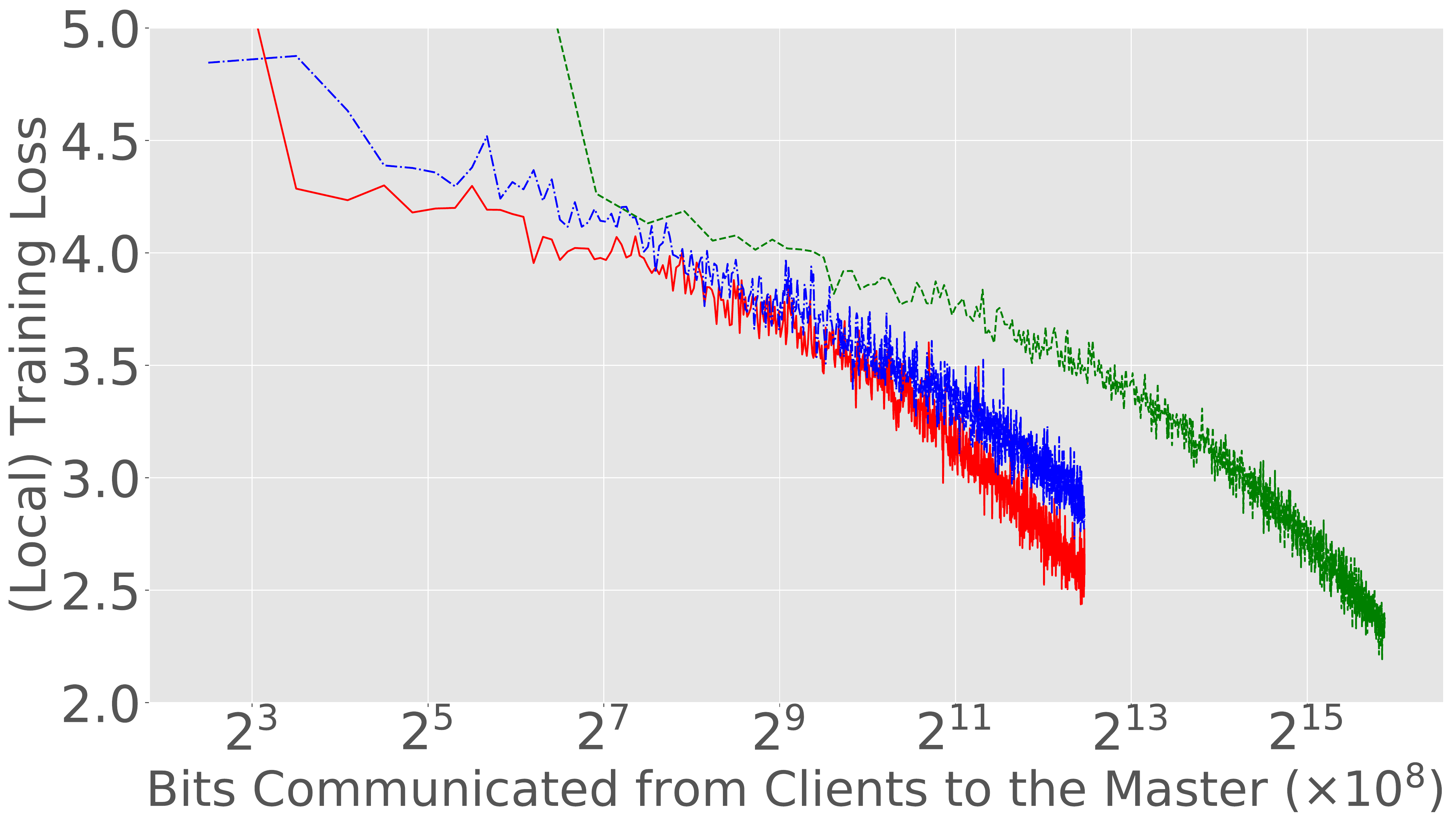}
    \end{center}
\caption{\textcolor{black}{(CIFAR100 Dataset, $n=32$) Validation accuracy and (local) training loss as a function of the number of communication rounds and the number of bits communicated from clients to the master.}}
\label{fig:cifar100}
\end{figure}

\end{document}